\newcommand{\algname}{LARM-RM}
\newcommand{\init}{I}
\newcommand{\rmLabels}{\mathcal P}
\newcommand{\rmLabelingFunction}{L}
\newcommand{\machine}{\mathcal{A}}
\newcommand{\mealyStates}{V}
\newcommand{\mealyStateStyle}[1]{\fsm{#1}}
\newcommand{\mealyCommonState}{\fsm{q}}
\newcommand{\mealyCommonInput}{l}        
\newcommand{\mealyInputAlphabet}{\ensuremath{{2^\mathcal{P}}}}
\newcommand{\mealyInit}{{\mealyCommonState_\init}}
\newcommand{\mealyOutputAlphabet}{\ensuremath{M}}
\newcommand{\mealyOutput}{\sigma}
\newcommand{\mealyTransition}{\delta}
\newcommand{\setOfSeenRewards}{R}
\newcommand{\mdp}{\mathcal{M}}
\newcommand{\mdpStates}{S}
\newcommand{\mdpCommonState}{s}
\newcommand{\mdpInit}{\mdpCommonState_\init}
\newcommand{\mdpActions}{A}
\newcommand{\mdpCommonAction}{a}
\newcommand{\mdpProb}{p}
\newcommand{\mdpRewardFunction}{R}
\newcommand{\mdpDiscount}{\gamma}
\newcommand{\mdpLabel}{\ell}
\newcommand{\mdpRewards}{r}
\newcommand{\trajectory}[1]{\ensuremath{\mdpCommonState_0 \mdpCommonAction_1 \mdpCommonState_1\ldots \mdpCommonAction_#1 \mdpCommonState_{#1}}}
\newcommand{\labelSequence}[1]{\ensuremath{\mdpLabel_1 \mdpLabel_2\ldots \mdpLabel_{#1}}}
\newcommand{\rewardSequence}[1]{\ensuremath{\mdpRewards_1 \mdpRewards_2\ldots \mdpRewards_{#1}}}
\newcommand{\automaton}[1]{\mathfrak{#1}}
\newcommand{\Automaton}{\automaton{A}}
\newcommand{\Advice}{\ensuremath{D}}
\newcommand{\eplen}{\mathit{episode_{length}}}
\newcommand{\fsm}[1]{\mathsf{#1}}
\newcommand{\run}[1]{\xrightarrow{#1}}
\newcommand{\Pref}{\mathit{Pref}}
\newcommand{\Language}{\ensuremath{\mathcal{L}}}
\newtheorem{theorem}{Theorem}
\newtheorem{definition}{Definition}
\newtheorem{lemma}{Lemma}
\newtheorem{corollary}{Corollary}
\title{Using Large Language Models to Automate and Expedite Reinforcement Learning with Reward Machine }
\author{
Shayan Meshkat Alsadat$^1$
\and
 Jean-Rapha\"el Gaglione$^2$\and
 Daniel Neider $^3$ \and
Ufuk Topcu$^{2}$\And
Zhe Xu$^1$\\
\affiliations
$^1$Arizona State University\\
$^2$University of Texas at Austin\\
$^3$Technical University of Dortmund\\
\emails
\{smeshka1, xzhe1\}@asu.edu,
\{jr.gaglione, utopcu\}@utexas.edu,
daniel.neider@cs.tu-dortmund.de
}
\begin{document}

\maketitle

\begin{abstract}
    We present LARL-RM (\textbf{L}arge language model-generated \textbf{A}utomaton for \textbf{R}einforcement \textbf{L}earning with \textbf{R}eward \textbf{M}achine) algorithm in order to encode high-level knowledge into reinforcement learning using automaton to expedite the reinforcement learning. Our method uses Large Language Models (LLM) to obtain high-level domain-specific knowledge using prompt engineering instead of providing the reinforcement learning algorithm directly with the high-level knowledge which requires an expert to encode the automaton. We use chain-of-thought and few-shot methods for prompt engineering and demonstrate that our method works using these approaches. Additionally, LARL-RM allows for fully closed-loop reinforcement learning without the need for an expert to guide and supervise the learning since LARL-RM can use the LLM directly to generate the required high-level knowledge for the task at hand. We also show the theoretical guarantee of our algorithm to converge to an optimal policy. We demonstrate that LARL-RM speeds up the convergence by $30\%$ by implementing our method in two case studies.
\end{abstract}

\section{Introduction}

Large Language Models (LLM) encode various types of information including domain-specific knowledge which enables them to be used as a source for extracting specific information about a domain. This knowledge can be used for planning and control of various systems such as autonomous systems by synthesizing the control policies ~\cite{seff2023motionlm}. These policies are essentially dictating the action of the agent in the environment in a high-level format.

In general, the extraction of the domain-specific knowledge may not be a straightforward task since prompting the LLM without a proper prompt technique may result in hallucinations \cite{jang2023can}, even with proper techniques it is also known LLMs can hallucinate. There are several methods suggested for prompt engineering ranging from zero-shot \cite{kojima2022large} all the way to training the pre-trained language model on a domain-specific dataset (fine-tuning) \cite{rafailov2023direct} to obtain more accurate responses. The latter approach is more involved and resource intensive since it requires a dataset related to the domain as well as enough computation power to train the pre-trained language model on the dataset \cite{ding2023parameter}. Moreover, not all of the language models are available for fine-tuning due to their providers' policy. Hence, fine-tuning is a less desirable approach as long as the other prompt engineering methods can produce accurate responses.

Our algorithm LARL-RM (large language model-based automaton for reinforcement learning) uses prompt engineering methods to extract the relevant knowledge from the LLM. This knowledge is then incorporated into the reinforcement learning (RL) algorithm in the form of high-level knowledge to expedite the RL and guide it to reach the optimal policy faster. We integrate the LLM-encoded high-level knowledge into the RL using deterministic finite automata (DFA). We have the following contributions to the field: (a) LARL-RM allows for integration of the high-level knowledge from a specific domain into the RL using LLM outputs directly. (b) Our proposed method is capable of adjusting the instructions of the LLM by closing the loop in learning and updating the prompt based on the chain-of-thought method to generate the automata when counterexamples are encountered during the policy update. (c) LARL-RM is capable of learning the reward machine and expediting the RL by providing LLM-generated DFA to the RL algorithm. We also show that LARL-RM speeds up the RL by $30\%$ by implementing it in two case studies.

\noindent \textbf{Related work.} Generative language models (GLM) such as the GPT models have been shown to encode world knowledge and be capable of generating instructions for a task \cite{hendrycks2020measuring}. Researchers in \cite{west2021symbolic} have demonstrated the advances in GLM allow to construct task-relevant knowledge graphs. \textbf{Generating automaton from GLM.} Generating automaton from a generative language model for a specific task is shown in \cite{yang2022automaton}; however, the method proposed by the authors called GLM2FSA uses verb and verb phrases to and maps them onto a predefined set of verbs in order to convert the instructions given by the GLM to an automaton which in turn reduces the generality of the method to be applied to any other domain and still requires an expert to pre-define this set. GLMs are also used to extract task-relevant semantic knowledge and generate plans to complete a task in robotics \cite{vemprala2023chatgpt}. \textbf{Learning reward machine.} Researchers in \cite{neider2021advice} have shown a method called AdvisoRL for RL where the algorithm is capable of learning the reward machine; however, an expert is required to guide the learning process through provision of so-called advice. In our method this is replaced by using LLM.

\section{Preliminaries}

In this section, we define our notations and introduce the necessary background for reinforcement learning, reward machines, and finite deterministic automata (DFA).

\begin{definition}[labeled Markov decision process]
    A labeled Markov decision process is a tuple $\mathcal{M} = \langle S, s_{I}, A, p, R, \gamma, \mathcal{P}, L \rangle$ where $S$ is a finite set of states, $s_{I} \in S$ is the agent's initial state, $A$ is a finite set of actions, $p: S \times A \times S \mapsto \left[0, 1 \right]$, reward function $R: S \times A \times S \mapsto \mathbb{R}$, discount factor $\gamma \in [0, 1]$, a finite set of propositions $\mathcal{P}$, and a labeling function $L: S \times A \times S \mapsto 2^{\mathcal{P}}$ that determines the high-level events that agent encounters in the environment. 
\end{definition}

A policy is a function mapping states $S$ to actions in $A$ with a probability distribution, meaning the agent at state $s \in S$ will choose action $a \in A$ with probability $\pi(s, a)$ using the policy $\pi$ that leads to a new state $s^{\prime}$. The probability of the state $s^{\prime}$ is defined by $p(s, a, s^{\prime})$. \textit{Trajectory} $s_{0} a_{0} s_{1} \ldots s_{k} a_{k} s_{k+1}$ where $k \in \mathbb{N}$ is the resulting sequence of states and actions during this stochastic process. Its corresponding  \textit{label} sequence is $l_{0} l_{1} \ldots l_{k}$ where $L(s_{i}, a_{i}, s_{i+1}) = l_{i} \ \forall \ i \le k$. \textit{Trace} is the pair of labels and rewards $(\lambda, \rho) \coloneqq (l_{1} l_{2} \ldots l_{k}, r_{1} r_{2} \ldots r_{k})$ where the payoff of the agent is $\sum_{i} \gamma^{i} r_{i}$.

We exploit deterministic finite automaton (DFA) to encode the high-level knowledge obtained from the LLM to reinforcement learning in order to speed up the process of obtaining the optimal policy $\pi^{*}$. A DFA consists of an input and output alphabet which we can use to steer the learning process.

\begin{definition}[Deterministic finite automata]
    A DFA is a tuple $\mathcal{D} = \langle H, h_{I}, \Sigma, \delta, F \rangle $ where
    $H$ is the finite set of states,
    $h_{o}$ is the initial state, and
    $\Sigma$ is the input alphabet (here we consider $\Sigma = 2^{\mathcal{P}}$),
    $\delta: H \times \Sigma \mapsto H$ is the transition function between the states, and
    $F \subseteq H$ is the set of accepting states.
\end{definition}

We use deterministic finite state automata to model the LLM-generated DFA. The main idea of the LLM-generated DFA is to assist the algorithm with the learning of the reward machines through the provision of information about which label sequence may result in a positive reward and which one will definitely result in a non-positive reward  \cite{neider2021advice}.

Reward machines are a type of Mealy machine that is used to encode a non-Markovian reward function. They are automatons that receive a label from the environment and respond with a reward as well as transition to their next state. Their input alphabet is a set of propositional variables (from the set of $\mathcal{P}$) and their output alphabet is a set of real numbers.

\begin{definition}[Reward machine]
    A reward machine is shown by $\mathcal{A} = \left(V, \nu_{I}, 2^{\mathcal{P}}, M, \delta, \sigma \right)$ where $V$ is a finite set of states, $\nu_{I} \in V$ is the initial state, input alphabet $2^{\mathcal{P}}$, output alphabet $M \subseteq \mathbb{R}$, transition function $\delta: V \times 2^{\mathcal{P}} \mapsto V$, and output function $\sigma: V \times 2^{\mathcal{P}} \mapsto M$.
\end{definition}

Applying the reward machine $\mathcal{A}$ on a sequence of labels $l_{1} l_{2} \ldots l_{k} \in (2^{\mathcal{P}})^*$ will create a sequence of states $\nu_{0}(l_{1}, r_{1}) \nu_{1}(l_{2}, r_{2}) \ldots \nu_{k-1}(l_{k}, r_{k}) \nu_{k}$ where $\nu_{0} = \nu_{I}$ for all $i \in \{0, \ldots, k-1 \}$. Transition to next state of the reward machine $\delta(\nu_{i}, l_{i+1}) = \nu_{i+1}$ results in a reward of $\sigma(\nu_{i}, l_{i+1}) = r_{i+1}$. Hence, a label sequence of $l_{1} l_{2} \ldots l_{k}$ corresponds to a sequence of rewards produced by reward machine $\mathcal{A}(l_{1} l_{2} \ldots l_{k}) = r_{1} r_{2} \ldots r_{k}$. Therefore, reward function $R$ of an MDP process for every trajectory $s_{0} a_{0} s_{1} \ldots s_{k} a_{k} s_{k+1}$ corresponds to a label sequence of $l_{1} l_{2} \ldots l_{k}$ that encodes the reward machine $\mathcal{A}$, agent then receives reward sequence of $\mathcal{A}(l_{1} l_{2} \ldots l_{k})$.

\begin{figure}[H]
    \centering
    \includegraphics[scale=0.26]{./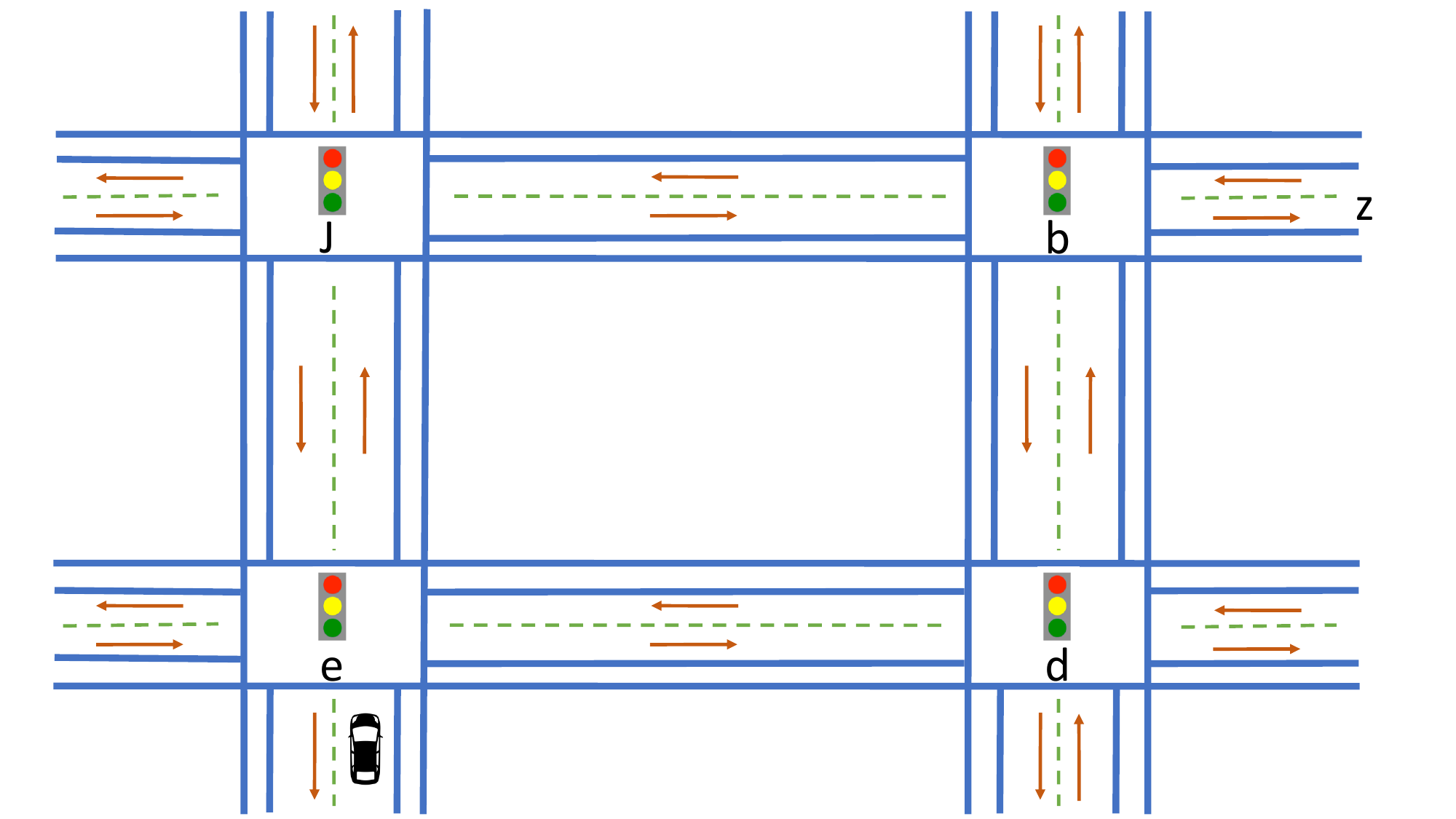}
    \caption{An autonomous car (agent) must first go to intersection $J$ and then make a right turn to go to intersection $b$. Agent must check for the green traffic light $g$, car $c$, and pedestrian $p$. }
    \label{fig:environment}
\end{figure}

\noindent \textbf{Motivating Example.} We consider an autonomous car on American roads with the set of actions $A=\{\textit{up}, \textit{down}, \textit{right}, \textit{left}, \textit{stay} \}$ (see Figure \ref{fig:environment}) where the agent must reach intersection $J$ then making a right turn when the traffic light $g$ is green and if there is no car $c$ on the left and no pedestrian $p$ on the right. Then the agent reaches the intersection$b$ where it needs to go straight when the traffic light is green if there are no cars and pedestrians. The corresponding DFA for this motivating example is shown in Figure \ref{fig:DFA_automata_example}.

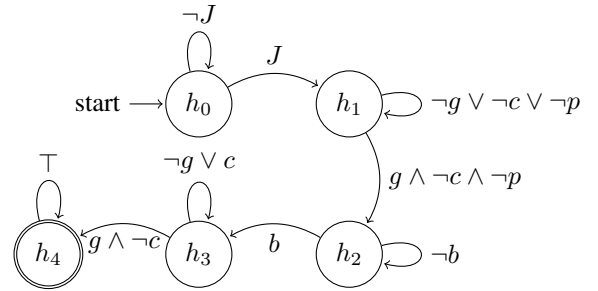
\begin{figure}[H]
  \centering
  \begin{tikzpicture}[shorten >=1pt, node distance=2cm, on grid, auto]
    \node[state, initial] (h0) {$h_0$};
    \node[state] (h1) [right=of h0] {$h_1$};
    \node[state] (h2) [below=of h1] {$h_2$};
    \node[state] (h3) [left=of h2] {$h_3$};
    \node[state, accepting] (h4) [left=of h3] {$h_4$};

    \path[->]
    (h0) edge [bend left] node {$J$} (h1)
    (h0) edge [loop above] node {$\neg J$} (h0) 
    (h1) edge [bend left] node {$g \wedge \neg c \wedge \neg p$} (h2)
    (h1) edge [loop right] node {$\neg g \vee \neg c \vee \neg p$} (h1)
    (h2) edge [loop right] node {$\neg b$} (h2)
    (h2) edge [bend right] node {$b$} (h3)
    (h3) edge [loop above] node {$\neg g \vee c$} (h3)
    (h3) edge [bend right] node {$g \wedge \neg c$} (h4)
    (h4) edge [loop above] node {$\top$} (h4);
  \end{tikzpicture}
  
  \caption{DFA of the motivating example where the agent must reach intersection $J$ while avoiding cars and pedestrians. }
  \label{fig:DFA_automata_example}
\end{figure}

\section{Generating Domain Specific Knowledge Using LLM}
Large Language Models (LLM) encode rich world and domain-specific knowledge \cite{yang2023fine} which allows for the extraction of the relevant information through appropriate prompting and has shown that can even surpass fine-tuning LLMs in some cases \cite{wei2022chain}. Hence, we use prompt engineering to extract the relevant domain knowledge information from the LLM and provide it in the form of LLM-generated DFA \cite{neider2021advice} to expedite the reinforcement learning. An LLM-generated DFA can be thought of as a means to provide the process of learning reward machine with information about which label sequence could be promising in such a way that it results in a reward.

Applying a DFA $\mathcal{D}$ on a sequence $\lambda = l_{1} l_{2} \ldots l_{n} \in \Sigma^\star$ will result in a sequence of states $h_{0} h_{1} \ldots h_{n}$ where $h_{0} = h_{I}$ and $h_{i+1} = \delta(h_{i}, l_{i}) \ \forall \ i \in \{1, \ldots, n \}$.
We define $\Language(\mathcal{D}) = \left\{ l_{1} l_{2} \ldots l_{n} \in \Sigma^\star \middle| h_{n} \in F \right\}$, also commonly referred to as the formal language accepted by $\mathcal{D}$.
A LLM-generated DFA could result in a \textit{positive reward}, meaning that DFA $\mathcal{D}$ can be considered as a LLM-generated DFA if the label sequence $\lambda \in \Language(\mathcal{D})$ could result in a positive reward; while, the label sequences $\lambda \notin \Language(\mathcal{D})$ \textit{should not} receive a positive reward; therefore, it is a binary classifier that points out the promising explorations \cite{neider2021advice}. Our proposed algorithm uses the LLM-generated DFAs to narrow down the search space for learning the ground truth reward machine. Hence, expediting the reinforcement learning; however, in our method, the LLM-generated DFA is generated automatically for a specific domain using the LLM. It is known that the LLMs hallucinate; therefore, our proposed algorithm is capable of ignoring the LLM-generated DFA $\mathcal{D}$ if it is \textit{incompatible} the underlying ground truth reward machine.

\begin{definition}[LLM-generated DFA compatibility]
    An LLM-generated DFA $\mathcal{D}$ is compatible with the ground truth reward machine $\mathcal{A}$ if for all label sequences $l_{1} l_{2} \ldots l_{k} \in (2^{\mathcal{P}})^*$ with reward sequence $\mathcal{A}(l_{1} l_{2} \ldots l_{k}) = r_{1} r_{2} \ldots r_{k}$ \text{it holds that} $r_{k} > 0$ \text{implies} $l_{1} l_{1} \ldots l_{k} \in \Language(\mathcal{D})$.
\end{definition}

We demonstrate an LLM-generated DFA for the motivating example in Figure \ref{fig:advice_dfa_example}.

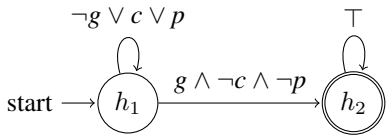
\begin{figure}[H]
  \centering
  \begin{tikzpicture}[shorten >=1pt, node distance=3cm, on grid, auto, state/.style={circle, draw, minimum size=2em}]
    \node[state, initial] (h1) {$h_1$};
    \node[state, accepting] (h2) [right=of h1] {$h_2$};

    \path[->]
    (h1) edge [loop above] node {$\neg \textit{g} \vee \textit{c} \vee \textit{p}$} (h1) 
    (h2) edge [loop above] node {$\top$} (h2) 
    (h1) edge  node {$\textit{g} \wedge \neg \textit{c} \wedge \neg \textit{p}$} (h2);
    
  \end{tikzpicture}
  
  \caption{An LLM-generated DFA for our motivating example.}
  \label{fig:advice_dfa_example}
\end{figure}

In our proposed method we generate the domain-specific knowledge using the GPT series \cite{liu2023summary}. There are several methods of prompting LLMs such as zero-shot method \cite{kojima2022large} where the prompt is using a \textit{step-by-step} approach to obtain better results or the few-shot method \cite{brown2020language} where the prompt is given similar examples encouraging the LLMs to produce similar reasoning to reach the correct answer. 
We use a combination of these methods to encourage the GPT to produce the correct information. First, we adopt a persona for the domain that we plan to extract relevant information from; in our example, the persona is an expert in American road traffic rules. Adopting a persona allows for more coherent and higher-quality responses from the language model. Figure \ref{fig:llm_traffic_rule_persona} demonstrates the GPT response to a question for making a right turn at a traffic light while an expert persona is adopted.

\begin{figure}
    \centering
    \includegraphics[scale=0.18]{./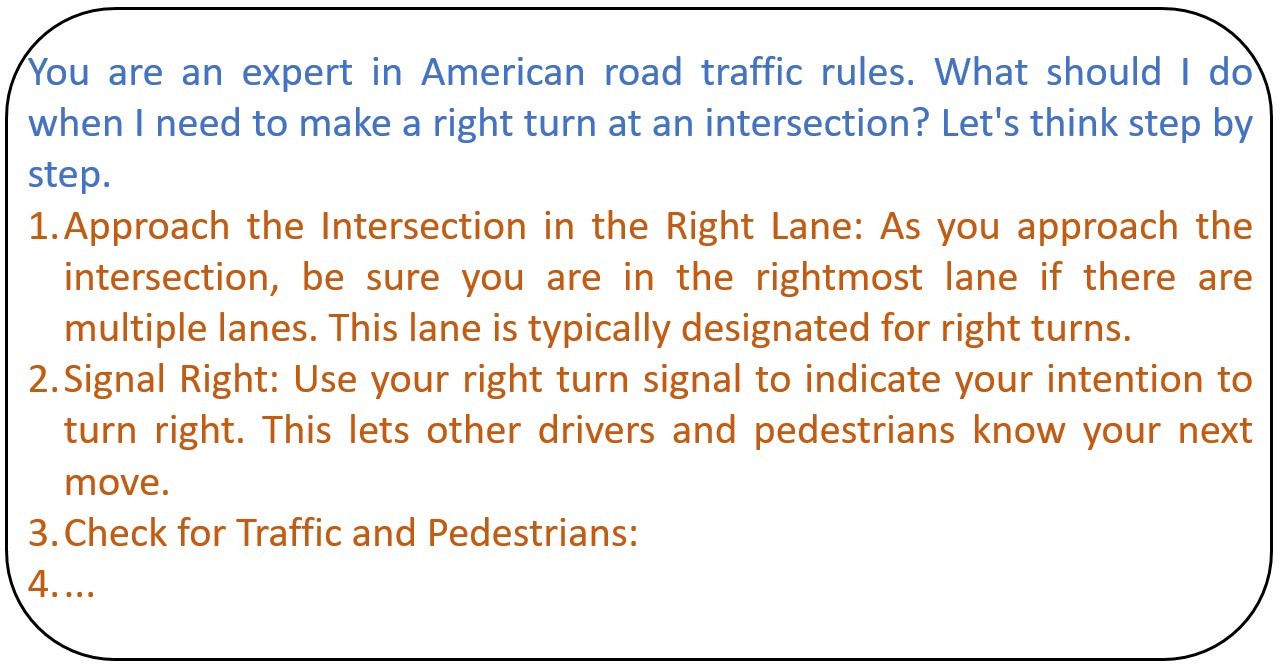}
    \caption{Prompting \texttt{GPT-3.5-Turbo} to adopt a persona as an expert in traffic rules (\textcolor[HTML]{c55a11}{GPT response} and \textcolor[HTML]{4472c4}{prompt}).}
    \label{fig:llm_traffic_rule_persona}
\end{figure}

Figure \ref{fig:llm_traffic_rule_persona} demonstrates that we can obtain the knowledge for a specific domain using the appropriate prompting method. In general, LLMs are text-based which means that their output is text and it cannot directly be used in an RL algorithm; therefore we require a method to convert this textual output to an automata. The next step is to ask the LLM to convert the provided steps to relevant propositions $\mathcal{P}$ so it can be used for the generation of the automata. We can perform this by mapping the steps into the set of propositions as illustrated in Figure \ref{fig:map_to_prop}.

\begin{figure}
    \centering
    \includegraphics[scale=0.2]{./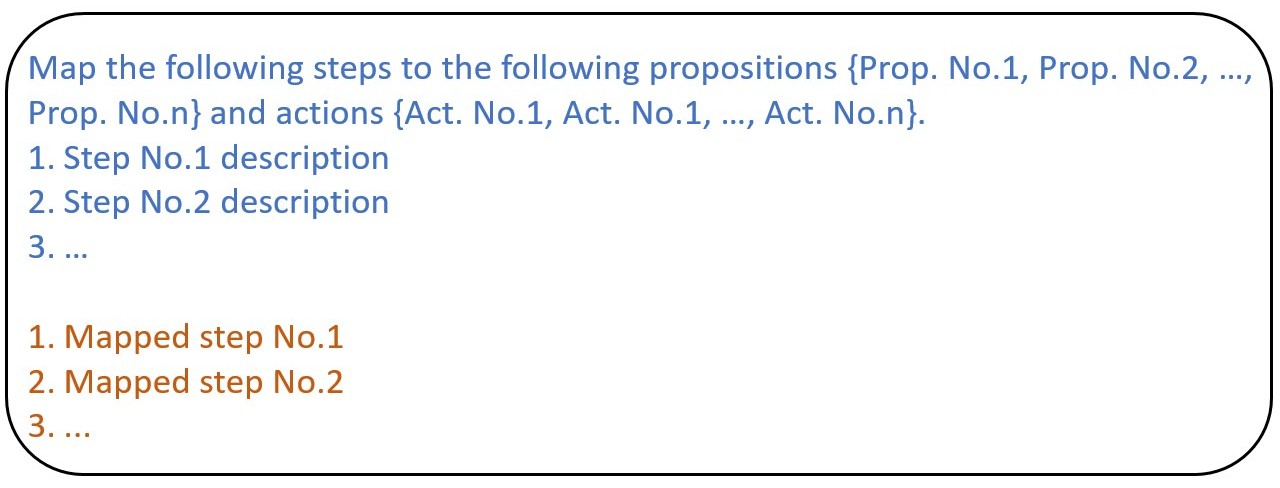}
    \caption{Mapping the output of the LLM to a specific set of propositions.}
    \label{fig:map_to_prop}
\end{figure}

We use the mapped steps (instructions) to generate the DFA since actions and propositions are available; however, transitions and states are yet to be defined. We can use the labels for transitions between the states and to obtain the states we can circle back to the task and use the instructions to obtain the states. Therefore, we use the task description and the mapped steps to obtain the transitions and states, Figure \ref{fig:state_transition} shows the prompting used to create the states and transitions.

\begin{figure}
    \centering
    \includegraphics[scale=0.2]{./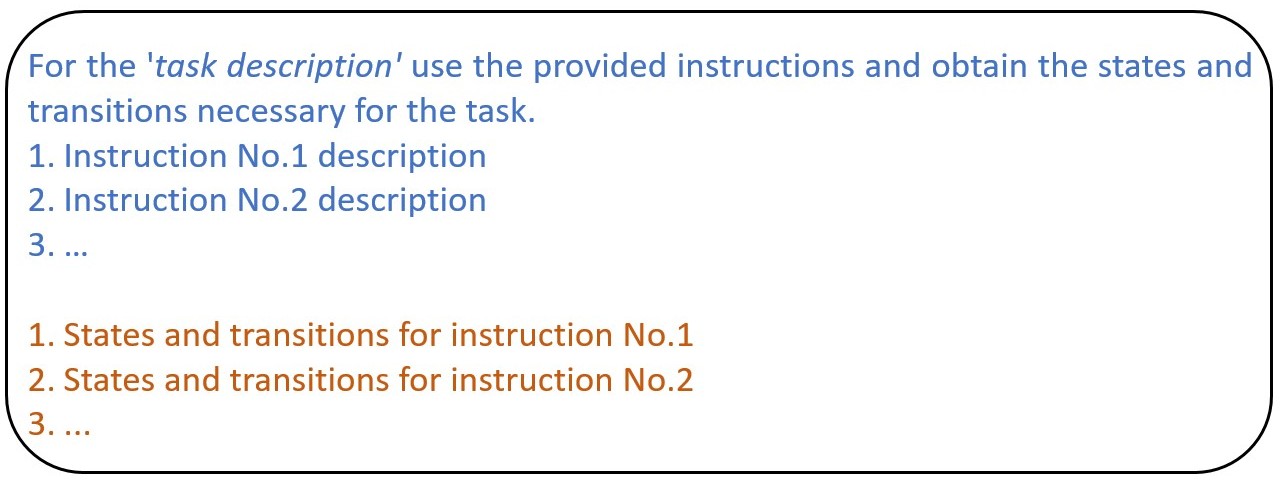}
    \caption{We use the mapped instructions to propositions and the task description to obtain the states and transitions.}
    \label{fig:state_transition}
\end{figure}

We can construct a DFA by having the states $h_{i}$, propositions $\mathcal{P}$, and transition function $\delta(h_{i}, l_{i})$; therefore, we can now use this information to generate the DFA for the task and use it to expedite convergence to optimal policy $\pi^{*}(s, a)$.

\begin{algorithm}[tb]
    \caption{\texttt{PromptLLMforDFA}: Constructing task DFA for a domain specific task using LLM }
    \label{alg:alg_prompt}
    \textbf{Input}: prompt $f$\\
    \textbf{Parameter}: temperature $T$, Top p $p$ (OpenAI LLM parameters)\\
    \textbf{Output}: DFA
    \begin{algorithmic}[1] 
        \STATE $output \gets \texttt{promptLLM}(f)$ 
        \FOR{$step \in output$}
        \STATE $M_{\text{prop}} \gets \texttt{MapToProp}(step)$
        \STATE $W_{\text{action}} \gets \texttt{MapToActions}(step)$
        \ENDFOR
        \STATE $output \gets \texttt{UpdatePrompt}(f, steps, M_{\text{prop}}, W_{\text{action}})$
        \STATE $H \gets \{h_0\}$
        \STATE $\delta \gets \{ \delta_{0}\}$
        \STATE $\mathcal{P} \gets \{ \mathscr{P}_{0}\}$
        
        \STATE $i \gets 0$
        \FOR{$instruction$ in $output$}
            \STATE $i \gets i+1$
            \STATE $h_{i} \gets \texttt{GetStates}(instruction)$
            \STATE $\delta_{i} \gets \texttt{GetTransition}(instruction)$
            \STATE $\mathscr{P}_{i} \gets \texttt{GetProposition}(instruction)$
            \STATE $ H \gets H \cup \{h_{i}\}$
            \STATE $ \delta \gets \delta \cup \{\delta_{i}\}$
            \STATE $ \mathcal{P} \gets \mathcal{P} \cup \{\mathscr{P}_{i}\}$
        \ENDFOR
        \STATE \textbf{return} $\langle H, h_{0}, 2^\mathcal{P}, \delta, \{h_{i}\} \rangle$
    \end{algorithmic}
\end{algorithm}

Algorithm \ref{alg:alg_prompt} can be updated to not even require propositions $\mathcal{P}$ and actions $A$ to obtain the DFA. Generating DFA this method requires a different prompting technique which combines two methods of few-shots and chain-of-thought we call MixedR (mixed-reasoning). We exploit this approach to obtain tasks that are more involved since obtaining the propositions and actions requires more information about the system; however, the MixedR requires the task description and examples of DFAs which do not require to even be relevant to the task as long as the descriptions are provided for the example DFAs even in high-level
since the LLMs are capable of extracting complex patterns from textual data. We use MixedR to generate the DFA. Figure \ref{fig:mixR_prompt} shows the prompt template used for MixedR.

\begin{figure}[H]
    \centering
    \includegraphics[scale=0.2]{./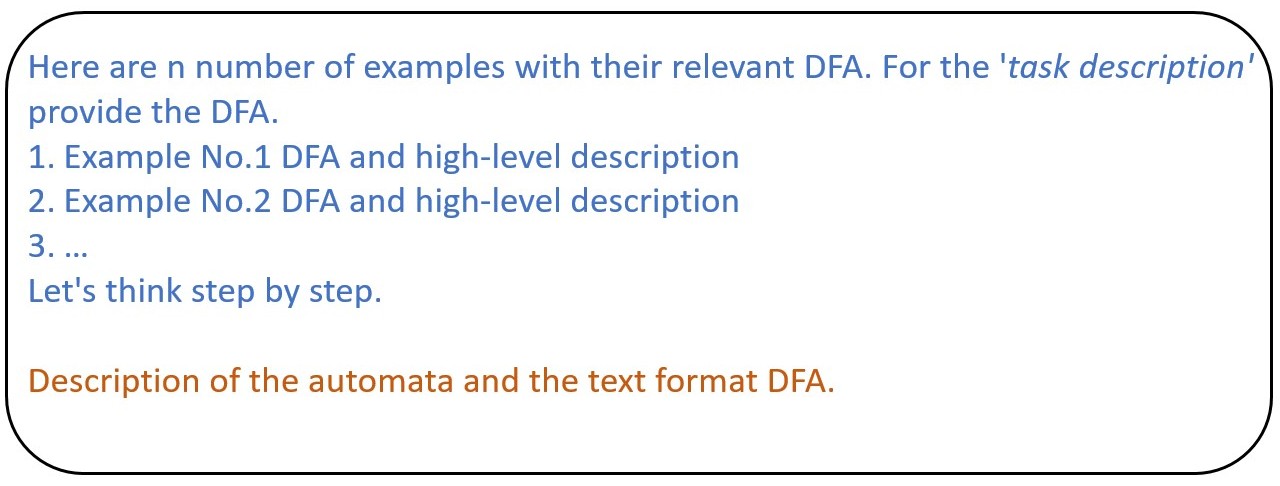}
    \caption{We use the MixedR to obtain the DFAs that are more complex.}
    \label{fig:mixR_prompt}
\end{figure}

We can use the motivating example and the MixedR prompt technique to obtain the relevant DFA for a smaller example of the problem. We can then use the DFA obtained from the MixedR and provide it as LLM-generated DFA for the RL algorithm to use in order to learn the ground truth reward machine in a smaller search space.

\section{Expediting Reinforcement Learning Using LLM-generated DFA}
We use the LLM output as instructions to expedite the reinforcement learning convergence to the optimal policy $\pi^{*}(s, a)$ using the language models. We use the LLMs to narrow down the search space for the RL algorithm since in a typical reinforcement learning algorithm the search space is considerable leading to cases where the training requires significant time and resources. Hence, using the pre-trained LLMs we can reduce this search space and converge to the optimal policy faster. One main issue to address is that LLMs typically generate outputs that are not aligned with facts; therefore, it is necessary to close the loop in the RL algorithm so that if a counterexample is met then the algorithm should be capable of adjusting the prompt, and updating it to create an updated LLM-generated DFA that could be used by the RL algorithm. Figure \ref{fig:alg_overview} illustrates the LARL-RM algorithm.

\begin{figure}
    \centering
    \includegraphics[scale=0.25]{./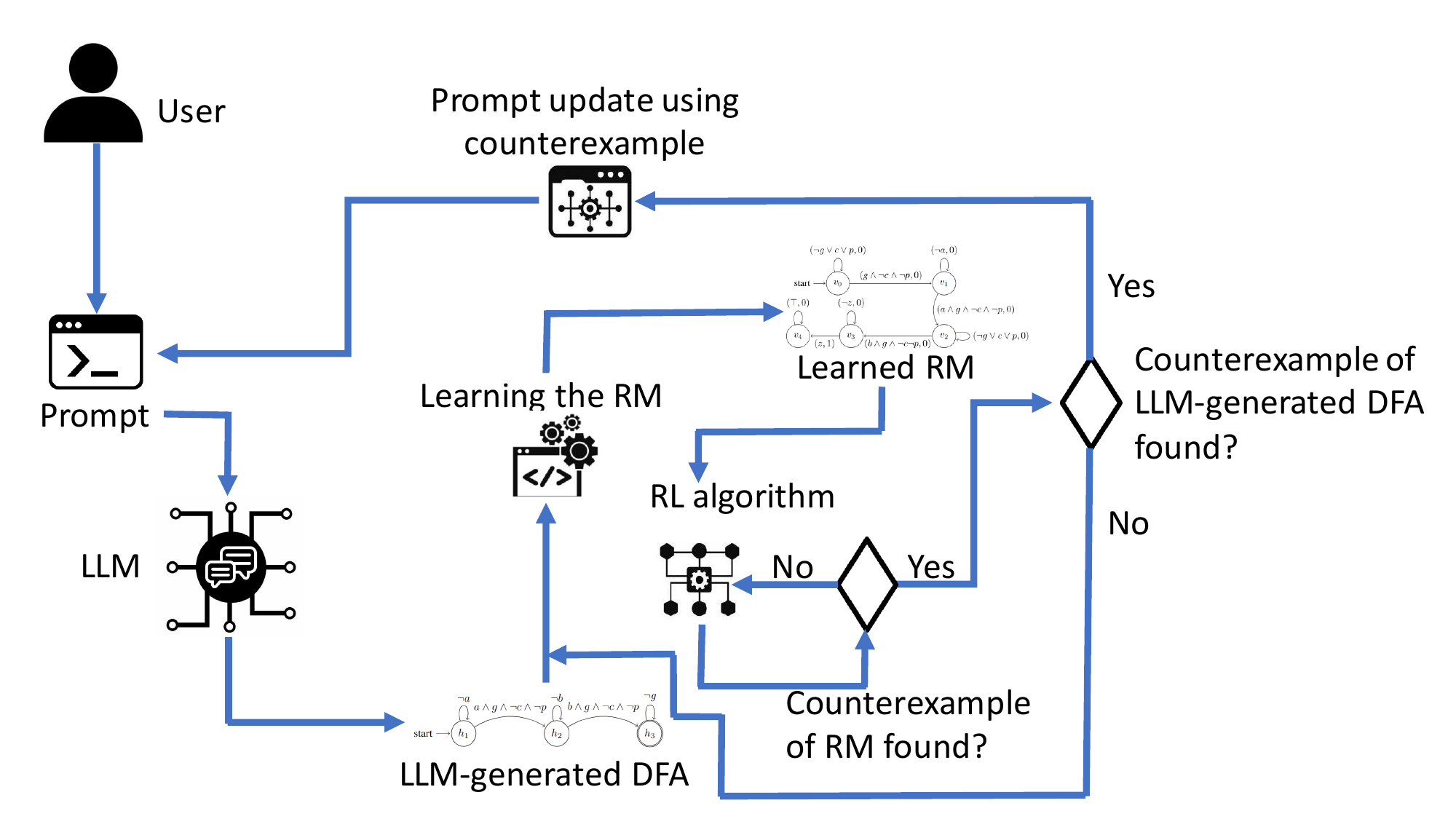}
    \caption{LARL-RM uses the LLM to generate the automata. The LLM-generated automata will be used by the RL algorithm to expedite the convergence to optimal policy.}
    \label{fig:alg_overview}
\end{figure}

\begin{algorithm}[tb]
    \caption{LARL-RM algorithm for incorporating high-level domain-specific knowledge from LLM into RL}
    \label{alg:LARL-RM_alg}
    \textbf{Input}: prompt $f$, Episode length $Episode_{length}$ \\
    \textbf{Parameter}: learning rate $\alpha$, discount factor $\gamma$, epsilon-greedy $\epsilon$, temperature $T$, Top p $p$, LLM query budget $\mathcal{J} \in \mathbb{N}$ \\
    \begin{algorithmic}[1]
        \STATE $X \gets \emptyset$ (empty sample) \label{line:emptyset}
        \STATE $\mathcal{D} \gets \texttt{PromptLLMforDFA}(f)$ \label{line:firstprompt}
        \STATE $\mathcal{A} \gets \texttt{InitializeRewardMachine}()$ (compatible with $\mathcal{D}$)                        \label{line:initializeRM}
        \STATE $q(s, v, a) \gets \texttt{InitializeQFunction}()$, $Q = \{ q^{q} | q \in V \}$                                \label{line:initQfun}
        \FOR{$episode$ in $1,\dots,Episode_{length}$} \label{line:episodeStart}
        \STATE $X_{\text{init}} \gets  X$ \label{line:saveTrace}
        \STATE $\mathcal{D}_{\text{init}} \gets  \mathcal{D}$ \label{line:saveDFA}
        \STATE $(\lambda, \rho, Q) \gets \texttt{QRM-episode} (\mathcal{A}, Q)$ \label{line:qleanring}
        \IF{$\mathcal{A}(\lambda) \neq \rho$} \label{line:counterExampleRM}
        \STATE $X \gets X \cup \{(\lambda, \rho)\}$ \label{line:RMUpdateTrace}
        \ENDIF
        \IF{$\mathcal{D} \neq \mathcal{D}_{\text{init}}~\text{or} X \neq X_{\text{init}} $} \label{line:checkUpdatedLists}
            \label{line:checkCOnpatibility} \IF{$\mathcal{J} \ge 0$ and $\exists (\lambda',\rho') \in X, \ \rho'>0 \text{ and } \lambda'\notin \Language(\mathcal{D})$} \label{line:DFAcounter}
                \STATE $f \gets \texttt{UpdatePrompt}(f, \lambda')$ \label{line:promptUpdate}
                \STATE $\mathcal{D} \gets \texttt{PromptLLMforDFA}(f)$ \label{line:PromptAgain}
                \STATE $\mathcal{J} \gets \mathcal{J} - 1$ \label{line:budget_reduce}
            \ENDIF
            \STATE $\mathcal{A} \gets \texttt{LearnRewardMachine}()$ (compatible with $\mathcal{D}$ and $X$) \label{line:RelearnRM}
            \STATE $q(s, v, a) \gets \texttt{InitializeQFunction}()$  \label{line:ReintQ}
        \ENDIF
        \ENDFOR
    \end{algorithmic}
\end{algorithm}

Our proposed method LARL-RM uses the prompt provided by the user to run the RL algorithm for a specific task that requires domain knowledge; however, by using the LLM the need for an expert is minimized and the algorithm itself can update the prompt using the counterexamples to update its DFA. LARL-RM first initializes an empty set for the trace $X$, then uses the prompt to generate the relevant DFA for the RL (Lines \ref{line:emptyset}-\ref{line:firstprompt}). The algorithm then initializes the reward machine based on the LLM-generated DFA obtained from the LLM and also initializes the QRM (q-learning for reward machines) before starting the episode (Lines \ref{line:initializeRM}-\ref{line:initQfun}). LARL-RM stores the trace $X$ and DFA $\mathcal{D}$ in order to reinitialize the reward machines and q-values if counterexamples are met, then the QRM is called to update the q-values and if there are any counterexamples then that trace is stored so that it can be used to update the reward machine and the DFA (Lines \ref{line:episodeStart}-\ref{line:RMUpdateTrace}). If the trace met by the agent is not compatible with the ground truth reward machine then it is removed from the LLM-generated DFA set $\mathcal{D}$ and then the algorithm uses this incompatible DFA and the initial prompt $f$ to obtain an updated prompt $f$ using the counterexample label sequence

, LARL-RM also uses an LLM query budget $\mathcal{J}$ to ensure that if the responses are not compatible with the ground truth reward machine then it will not get stuck in a loop and eventually after the budget $\mathcal{J}$ is depleted then it can start to learn the ground truth reward machine without LLM-generated DFA  (Lines: \ref{line:DFAcounter}-\ref{line:budget_reduce}, this also highlights the importance of prompt engineering further). Afterward, LARL-RM uses the stored DFA and trace sets to reinitialize the reward machine and q-values (Lines \ref{line:RelearnRM}-\ref{line:ReintQ}). 

\subsection{Using LLM-generated DFA to Learn the Reward Machine}
LARL-RM uses the counterexamples to create a minimal reward machine while being guided by the LLM-generated DFA. Our method uses a similar approach as of the \cite{neider2021advice} such that it maintains a finite sample set $X \subset 2^{\mathcal{P}} \times \mathbb{R}$ and a finite set of DFAs $\mathscr{D} = \{ \mathcal{}{D}_{1}, \ldots, \mathcal{D}_{l}\}$. We assume that the LLM-generated DFA is compatible with the sample $X$ such that the $(l_{1} \ldots l_{k}, r_{1} \ldots r_{k}) \in X$ with positive reward $r_{k} > 0$ where $l_{1} \ldots l_{k} \in \Language(\mathcal{D})$ and if this criterion is not fulfilled then it gets ignored by the LARL-RM.

The RM learning which relies on the LLM-generated DFA performs the generation of the minimal reward machine $\mathcal{A}$ which is consistent with the trace $X$ and compatible with $\mathcal{D}$. The minimal requirement for the reward machine to be minimal is important to the convergence of the LARL-RM to the optimal policy. LARL-RM uses SAT-based automata learning to verify the parametric systems since the learning task can be broken down into a series of satisfiability checks for propositional logic formulas. Essentially we construct and solve the propositional formulas $\Phi^{X, \mathcal{D}}_{n}$ where the values of $n>0$ \cite{neider2021advice}\cite{neider2014applications}.  

We construct the formula $\Phi^{X, \mathcal{D}}_{n}$ based on a propositional set $ X = \{ x, y, z, \ldots \}$ using the Boolean connectives $\neg, \vee, \wedge$, and $\mapsto$, i.e., and interpretation is the mapping from propositional formulas to Boolean values such that $\mathcal{I}: X \mapsto \{0, 1\}$.
If the the propositional formula is satisfied then $\mathcal{I} \models \Phi$ which interprets to $\mathcal{I}$ satisfies formula $\Phi$.
Using this approach we can obtain a minimal reward machine similar to \cite{neider2021advice}.
Using the LLM we are capable of constructing an LLM-generated DFA in a format that is compatible with SAT-based automata learning methods.
LARL-RM is also capable of adjusting the initial prompt $f$ so that in case it is not compatible with the ground truth reward machine then it could be updated $f$.
Updated prompt $f$ uses the counterexample label sequence $\lambda'$ to call the LLM and obtain an updated DFA $\mathcal{D}$ which is compatible with the ground truth reward machine.

\subsection{Refinement of Prompt and DFA}

We use prompt $f$ to generate the DFA $\mathcal{D} = \langle H, h_{I}, \Sigma, \delta, F \rangle $ for a specific domain.
Transition exists for a proposition if $\delta(h_{i}, l_{i}) = 1$, meaning if the LLM-generated DFA generated by the prompt $f$ is incorrect then it cannot have the correct transitions and trajectory which leads to a counterexample $\lambda'$.

Therefore, we use the counterexample $\lambda' = l_{1} l_{2} \ldots l_{k}$ to update the prompt in order to generate an updated DFA.

Hence, the updated DFA has a chance of becoming compatible with the counterexample. This process continues and each time the algorithm encounters a counterexample ($\lambda'$) uses it to update the prompt $f$ again.

\section{Convergence to Optimal Policy}
The learned reward machine will ultimately be equivalent to the ground truth reward machine under the condition that any label sequence is admissible by the underlying MDP, i.e., that are physically allowed.
If this assumption does not hold, it is still ensured that learned reward machine and the ground truth reward machine will agree on every admissible label sequences.

Due to the fact that we use LLM to generate the DFA $\mathcal{D}$ and the fact that the LLMs are known to produce outputs that are not factual, we face an issue.
We cannot make any assumption on the quality of the output of the LLM.
For the purpose of proving guaranties of LARL-RM, we need to consider the LLM as adversarial, that is, considering the worst case.

\begin{lemma} \label{lem:learn-correct-RM}
Let $\mathcal M$ be a labeled MDP, $\mathcal A$ the ground truth reward machine encoding the rewards of $\mathcal{M}$, and $\mathscr{D}^\star = \{\mathcal{D}_{1}, \ldots, \mathcal{D}_{m} \}$ the set of all LLM-generated DFAs that are added to $\mathcal D$ during the run of LARL-RM.
Additionally, let $n_{\text{max}} = \max_{\mathcal{D} \in \mathscr{D}^\star} \{ | \mathcal{D} | \}$ and $m = \max \left\{ 2 \lvert \mathcal{M} \rvert \cdot \left( \lvert \mathcal{A} \rvert +1 \right) \cdot n_{\text{max}}, \lvert \mathcal{M} \rvert \left( \lvert \mathcal{A} \rvert +1 \right)^{2} \right\}$.
Then, LARL-RM with $Episode_{length} \ge m$ almost surely learns a reward machine that is equivalent to $\mathcal{A}$.
\end{lemma}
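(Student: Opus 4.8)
The plan is to prove this as an almost-sure convergence result built on top of the existing guarantees from the AdvisoRL framework \cite{neider2021advice}, adapting them to the setting where the DFAs are LLM-generated and possibly adversarial. The key structural fact is that LARL-RM learns the reward machine by solving the SAT encoding $\Phi^{X,\mathcal{D}}_{n}$ for increasing $n$, producing at each stage a \emph{minimal} reward machine consistent with the accumulated sample $X$ and compatible with the current DFA $\mathcal{D}$. Since the LLM is treated adversarially and the query budget $\mathcal{J}$ is finite, I would first argue that after at most $\mathcal{J}$ refinement steps the set $\mathscr{D}^\star$ of DFAs ever added is finite, and that any DFA incompatible with the ground truth reward machine $\mathcal{A}$ is eventually discarded once a witnessing counterexample $(\lambda',\rho')$ with $\rho'>0$ and $\lambda' \notin \Language(\mathcal{D})$ enters $X$ (Line~\ref{line:DFAcounter}). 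Thus in the worst case the algorithm falls back to learning with a trivially compatible DFA, recovering the guarantees of standard reward-machine learning.

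First I would establish the two pieces needed for convergence. The \textbf{completeness} piece: I would show that the sample $X$ grows to eventually contain enough counterexamples to pin down $\mathcal{A}$ uniquely among reward machines of bounded size — this is exactly where Line~\ref{line:counterExampleRM} feeds every trace $(\lambda,\rho)$ with $\mathcal{A}(\lambda)\neq\rho$ into $X$, and the minimality of the learned machine forces agreement with $\mathcal{A}$ on all admissible label sequences once $X$ is rich enough. The \textbf{exploration} piece: I would show that, almost surely, the QRM episodes of length at least $m$ visit every admissible trajectory infinitely often, so every distinguishing counterexample is eventually observed; here the epsilon-greedy exploration together with the episode length bound guarantees that every MDP state and every reward-machine state configuration (of which there are on the order of $\lvert\mathcal{M}\rvert\cdot(\lvert\mathcal{A}\rvert+1)$) is reachable within a single episode. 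The specific bound $m = \max\{2\lvert\mathcal{M}\rvert(\lvert\mathcal{A}\rvert+1)n_{\max},\ \lvert\mathcal{M}\rvert(\lvert\mathcal{A}\rvert+1)^2\}$ should be derived as the diameter of the product construction of the MDP, the hypothesis reward machine, and the largest DFA in $\mathscr{D}^\star$, so that any reachable product state — and hence any relevant counterexample — lies within a path of length $m$.

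The crux of the argument is to combine these two pieces into an almost-sure statement while accounting for the interaction between DFA refinement and reward-machine learning. I would proceed by a potential/termination argument: each time a counterexample triggers either a prompt update (decrementing $\mathcal{J}$) or a reward-machine relearning (Lines~\ref{line:RelearnRM}--\ref{line:ReintQ}), the hypothesis strictly changes, and since reward machines consistent with a strictly growing $X$ and compatible with a \emph{fixed} compatible DFA have sizes bounded by $\lvert\mathcal{A}\rvert$, the number of relearning events is finite. Once $\mathcal{J}$ is exhausted and no further incompatible DFA survives, the algorithm learns with a compatible DFA and the AdvisoRL-style correctness result applies verbatim, yielding a reward machine equivalent to $\mathcal{A}$ on all admissible sequences.

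The main obstacle I anticipate is the entanglement of the two sources of nondeterminism and change: the stochastic exploration that determines \emph{which} counterexamples appear, and the adversarial LLM that determines \emph{which} DFAs are offered. I would need to argue carefully that these cannot conspire to prevent convergence — specifically, that an adversarial LLM cannot repeatedly supply plausible-but-incompatible DFAs faster than they are refuted, which is precisely what the finite budget $\mathcal{J}$ rules out, and that the reinitialization of the $Q$-functions after each relearning (Line~\ref{line:ReintQ}) does not destroy the long-run exploration guarantee. The cleanest route is to decompose the run into finitely many \emph{epochs} separated by hypothesis changes, show only finitely many epochs occur almost surely, and then invoke standard QRM convergence on the final, unchanging epoch; the delicate step is certifying that the final reward machine is genuinely equivalent to $\mathcal{A}$ rather than merely consistent with a finite sample, which relies on the minimality of the SAT-learned machine together with the assumption that every admissible label sequence is eventually explored.
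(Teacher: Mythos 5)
Your proposal follows essentially the same route as the paper's proof: establish that $\mathscr{D}$ becomes stationary (finite budget plus eventual refutation of incompatible DFAs via short witnesses), use minimality and the monotone growth of $X$ to bound the number of hypothesis changes by the finitely many reward machines of size at most $\lvert\mathcal{A}\rvert$, and then show the final hypothesis equals $\mathcal{A}$ by combining a short distinguishing witness from a product-automaton/pumping construction with the almost-sure exploration of all $m$-attainable label sequences. The decomposition into epochs, the role of minimality, and the derivation of $m$ from product constructions all match the paper's argument, so this is the same proof in slightly different packaging.
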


LARL-RM provides us with an upper bound for the episode length that needs to be explored. Additionally, LARL-RM algorithm correctness follows the Lemma \ref{lem:learn-correct-RM} and the correctness of QRM algorithm \cite{icarte2018using}. Using the QRM algorithm guarantee we can now show the convergence to optimal policy

\begin{theorem}\label{th:optimal_policy}
Let $\mathcal M$, $\mathcal A$, $\mathscr{D}^\star$, and $m$ be as in Lemma~\ref{lem:learn-correct-RM}.
Then, LARL-RM will converge to an optimal policy almost surely if $Episode_{\text{length}} \ge m$.
\end{theorem}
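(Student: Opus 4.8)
The plan is to reduce Theorem~\ref{th:optimal_policy} to a combination of Lemma~\ref{lem:learn-correct-RM} with the convergence guarantee of the QRM algorithm from~\cite{icarte2018using}, so the proof is essentially a composition of two ``almost surely'' events. First I would invoke Lemma~\ref{lem:learn-correct-RM}: under the hypothesis $Episode_{\text{length}} \ge m$, LARL-RM almost surely learns a reward machine $\mathcal{A}'$ that is equivalent to the ground truth reward machine $\mathcal{A}$. Here ``equivalent'' must be read in the sense made precise in the convergence section of the excerpt, namely that $\mathcal{A}'$ and $\mathcal{A}$ agree on every label sequence admissible by the underlying MDP $\mathcal{M}$ (agreement on \emph{all} sequences holds only under the stronger admissibility assumption). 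The key observation is that once the learned reward machine stabilizes to such an $\mathcal{A}'$, every subsequent episode feeds QRM the \emph{correct} non-Markovian reward signal on all trajectories the agent can actually realize in $\mathcal{M}$, so from that episode onward LARL-RM is running ordinary QRM against a fixed, correct reward machine.

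The second step is to apply the QRM convergence theorem of~\cite{icarte2018using}. That result states that q-learning for reward machines, run on a product of the MDP state space and the reward-machine state space with a Markovian reward, converges almost surely to an optimal policy provided each state--action pair is visited infinitely often (the standard Robbins--Monro / $\epsilon$-greedy exploration conditions). I would argue that LARL-RM satisfies these premises in the tail: the reward machine reinitialization and q-function resets on Lines~\ref{line:RelearnRM}--\ref{line:ReintQ} of Algorithm~\ref{alg:LARL-RM_alg} can only be triggered finitely many times, because (i) the LLM query budget $\mathcal{J}$ bounds the number of DFA refinements, and (ii) by Lemma~\ref{lem:learn-correct-RM} the sample $X$ and the learned reward machine stop changing once the correct $\mathcal{A}'$ is identified. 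Hence there is an almost-surely finite (random) episode index $N$ after which $\mathcal{A}'$, the augmented state space, and the q-function are never reset again; past $N$, the process is exactly the QRM chain whose convergence is guaranteed.

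Concretely, I would structure the argument as: (1) fix the probability-one event from Lemma~\ref{lem:learn-correct-RM} on which the learned machine is eventually correct and the resets cease; (2) on that event, identify the random stopping episode $N$ after which LARL-RM coincides with QRM on the fixed product MDP with reward machine $\mathcal{A}'$; (3) observe that the $\epsilon$-greedy exploration of QRM, continued for the infinite tail of episodes (the bound $Episode_{\text{length}} \ge m$ only needs to guarantee \emph{enough} exploration for the learning phase, and the q-learning continues thereafter), ensures every product state--action pair is visited infinitely often; (4) invoke~\cite{icarte2018using} to conclude almost-sure convergence of the q-values, hence of the greedy policy, to the optimum; (5) intersect the two probability-one events, which is again probability one, to obtain the theorem.

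The main obstacle I expect is the interplay between the finite episode-length bound $m$ and the asymptotic ``infinitely often visited'' requirement that QRM convergence demands. Lemma~\ref{lem:learn-correct-RM} gives a finite horizon sufficient to \emph{learn} the correct machine, but optimal-policy convergence of q-learning is an inherently asymptotic statement, so I would need to be careful about what quantity is actually being iterated: either LARL-RM runs for infinitely many episodes with the learned machine held fixed after stabilization, or the claim is about the policy in the limit of the q-learning updates nested inside \texttt{QRM-episode}. A secondary subtlety is that QRM's optimality guarantee is for an optimal policy \emph{with respect to the learned reward machine's reward}; I must use the equivalence of $\mathcal{A}'$ and $\mathcal{A}$ on admissible sequences to transfer optimality back to the true task, and flag explicitly that if not every label sequence is admissible, ``optimal'' is optimal only over realizable behavior in $\mathcal{M}$ — matching the caveat stated just before the lemma.
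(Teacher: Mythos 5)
Your proposal matches the paper's own argument: the paper likewise composes Lemma~\ref{lem:learn-correct-RM} (the learned reward machine almost surely stabilizes to one equivalent to $\mathcal{A}$) with the convergence of Q-learning on the product of $\mathcal{M}$ and the reward machine, where the reward is Markovian, deferring details to the extended version of \cite{DBLP:conf/aips/0005GAMNT020}. Your treatment of the finitely many resets, the stopping episode $N$, and the intersection of probability-one events simply makes explicit what the paper's brief sketch leaves implicit, so the two proofs take essentially the same route.
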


Theorem \ref{th:optimal_policy} guarantees the convergence of the LARL-RM to an optimal policy if sufficient episode length is given for exploration. It also provides an upper bound for convergence as shown in Lemma \ref{lem:learn-correct-RM}.

\section{Case Studies}
We implement the LARL-RM using the GPT series LLM, specifically we focus on model \texttt{GPT-3.5-Turbo}. We can set up a chat with either of the models using the provided APIs from OpenAI. For our example we consider the LARL-RM applied to an autonomous car example similar to the example found in \cite{xu2020joint}. 

In our example, the autonomous car must navigate to reach a destination while avoiding any pedestrians, but obeying traffic laws. It is worth mentioning that our algorithm is capable of running even without any LLM-generated DFA since in this case it can learn the ground truth reward machine without LLM-generated DFA. We consider two case studies for our motivating example to demonstrate the LARL-RM capabilities to expedite the RL. In both cases, we demonstrate the effect of incompatible LLM-generated DFA (not compatible with the ground truth reward machine) on the algorithm convergence to the optimal policy.

\subsection{Case Study 1}
The agent in our traffic example has the following set of actions $A=\{\textit{up}, \textit{down}, \textit{right}, \textit{left}, \textit{stay} \}$. The layout of the environment is shown in Figure \ref{fig:environment}

We use the LLM-generated DFA $\mathcal{D}$ to guide the RL process. If the advice from the LLM is not compatible with the ground truth reward machine then it will be ignored by the algorithm. Prompt is updated and fed back to the LLM in order to obtain a new DFA that has a chance of becoming compatible with the counterexample. We show the ground truth reward machine for Case Study 1 in Figure \ref{fig:case_1_grnd_rm}.

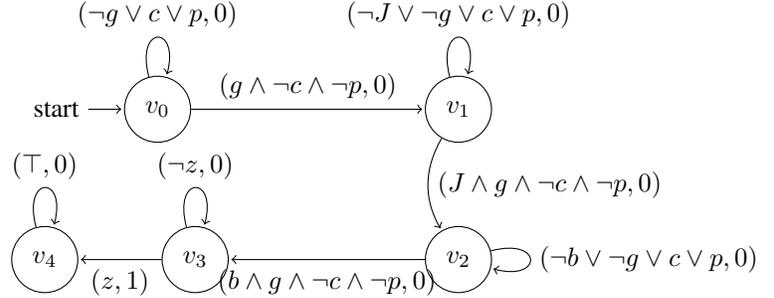
\begin{figure}[H]
  \centering
  \begin{tikzpicture}[shorten >=1pt, node distance=2.cm, on grid, auto]
    \node[state, initial] (v0) {$v_0$};
    \node[state] (v1) [right= 4cm of v0] {$v_1$};
    \node[state] (v2) [below=of v1] {$v_2$};
    \node[state] (v3) [left=3.5cm of v2] {$v_3$};
    \node[state] (v4) [left=of v3] {$v_4$};

    \path[->]
    (v0) edge   node {$(g \wedge \neg c \wedge \neg p, 0)$} (v1)
    (v0) edge [loop above] node {$(\neg g \vee c \vee p, 0)$} (v0)
    (v1) edge [loop above] node {$(\neg J \vee \neg g \vee c \vee p, 0)$} (v1)
    (v1) edge [bend right] node {$(J \wedge g \wedge \neg c \wedge \neg p, 0)$} (v2)
    (v2) edge [loop right] node {$(\neg b \vee \neg g \vee c \vee p, 0)$} (v2)
    (v2) edge  node {$(b \wedge g \wedge \neg c \wedge \neg p, 0)$} (v3)
    (v3) edge [loop above] node {$(\neg z, 0)$} (v3)
    (v3) edge  node {$(z, 1)$} (v4)
    (v4) edge [loop above] node {$(\top, 0)$} (v4);

  \end{tikzpicture}
  
  \caption{Ground truth reward machine for Case Study 1. Agent must first navigate to intersection $J$ then $b$.}
  \label{fig:case_1_grnd_rm}
\end{figure}

The LLM-generated DFA generated by the prompt $f$ for this case study is compatible with the ground truth reward machine (Figure \ref{fig:case_1_grnd_rm}) and helps the algorithm to converge to the optimal policy faster. The generated DFA using the GPT is shown in Figure \ref{fig:case_1_dfa}.

\begin{figure}[H]
  \centering
  \begin{tikzpicture}[shorten >=1pt, node distance=3cm, on grid, auto, state/.style={circle, draw, minimum size=2em}]
    \node[state, initial] (h1) {$h_1$};
    \node[state, accepting] (h2) [right=of h1] {$h_2$};

    \path[->]
    (h1) edge [bend left] node {$J \wedge g \wedge \neg c\wedge \neg p$} (h2)
    (h1) edge [loop above] node {$\lnot J \lor \lnot g \lor c \lor p$} (h1)
    (h2) edge [loop above] node {$\top$} (h2);
  \end{tikzpicture}
  
  \caption{LLM-generated DFA $\mathcal{D}, \{J \wedge g \wedge \neg c \wedge \neg p\}$ generated by LLM for Case Study 1.}
  \label{fig:case_1_dfa}
\end{figure}
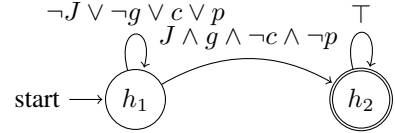

We use the DFA (\ref{fig:case_1_dfa}) to guide the learning process. The reward obtained by the LARL-RM algorithm shows that it can reach the optimal policy faster if the LLM-generated DFA exists in comparison to the case when there is none.

\begin{figure}[H]
    \centering
    \includegraphics[scale=0.27]{./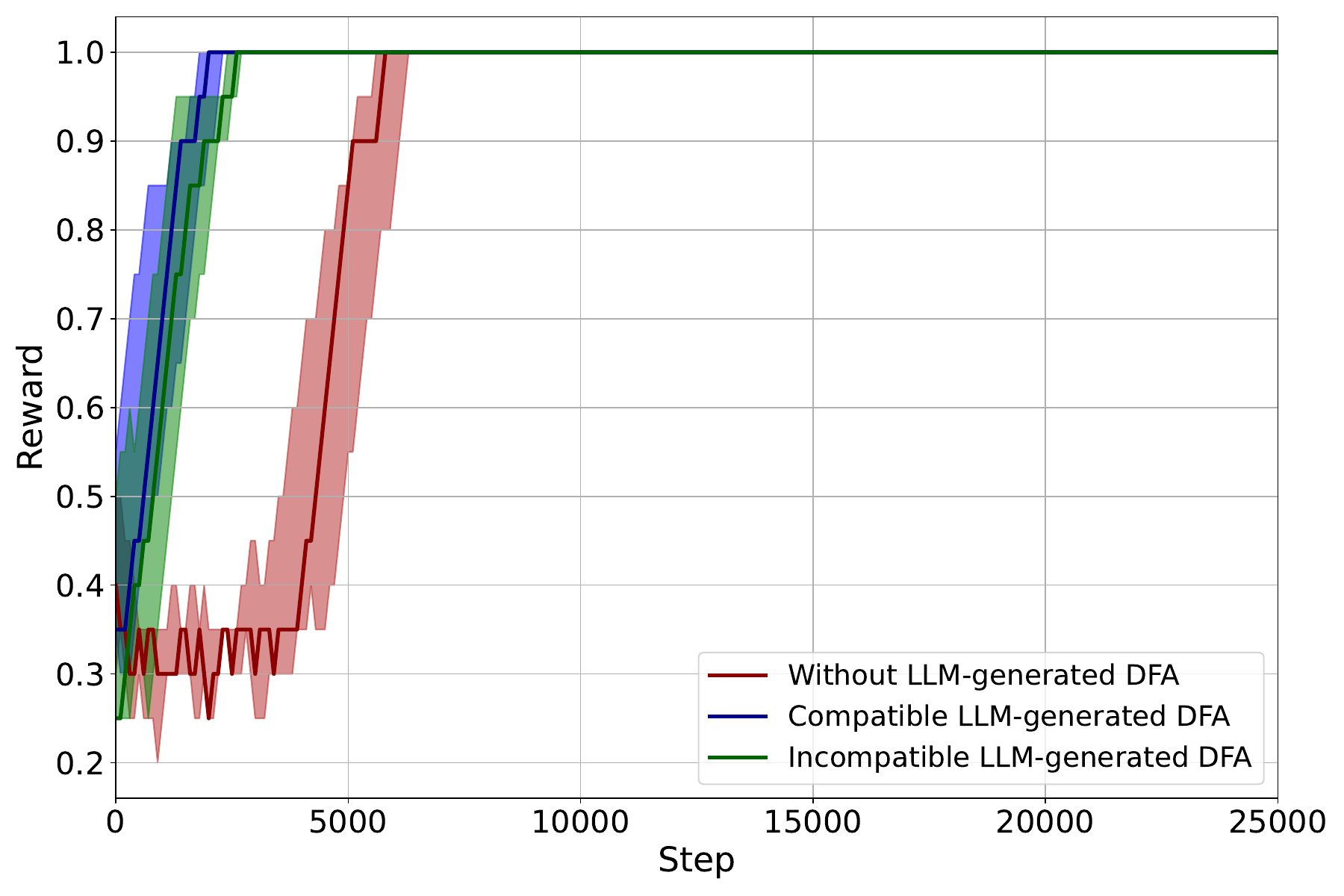}
    \caption{LARL-RM uses the LLM-generated DFA $\mathcal{D}$ to obtain a reward machine that is aligned with the ground truth reward machine in order to converge to an optimal policy faster.}
    \label{fig:rew_case_1}
\end{figure}

We average the results of $5$ independent runs for using a rolling mean with a window size of $10$. Figure \ref{fig:rew_case_1} demonstrates that the algorithm reaches the convergence policy using the LLM-generated DFA. 
In this example, the prompt $f$ is one time compatible and one time incompatible with the ground truth reward machine, but even if it is not compatible LARL-RM can use the counterexample to update the prompt and generate a new DFA which is more likely to be compatible with the counterexample.

\subsection{Case Study 2}
In Case Study 2 we show that if the suggested DFA $\mathcal{D}$ is not compatible with the ground truth reward machine then the LARL-RM uses the counterexample to adjust the prompt $f$ and update it to obtain new DFA which is compatible with the counterexample. In this case study we update the MDP environment to allow for multiple reward machines as well as DFAs to be considered as equivalent. This way the output of the LLM may not necessarily match the ground truth reward machine; however, the LARL-RM can still use this DFA as long as it is not incompatible with the ground truth reward machine, but is equivalent to it. 

In this environment, we extend the autonomous car example and provide it with more options to reach the destinations and the ground truth reward machine does not consider a specific route, but rather emphasizes the target destination; however, the LLM-generated DFA might specify a certain route over another which is not incompatible with the ground truth reward machine, but rather equivalent. We show the ground truth reward machine for Case Study 2 in Figure \ref{fig:case_2_grnd_rm}.

\begin{figure}
  \centering
  \begin{tikzpicture}[node distance=2.cm, on grid, auto,
                      state/.style={circle, draw, minimum size=2em}]
    \node[state, initial] (v0) {$v_0$};
    \node[state] (v1) [right=3.3cm of v0] {$v_1$};
    \node[state] (v2) [below=of v1] {$v_2$};
    \node[state] (v3) [below=of v2] {$v_3$};
    \node[state] (v4) [left=of v2] {$v_4$};
    \node[state] (v5) [below left=of v4] {$v_5$};
    
    \path[->]
    (v0) edge   node {$(g \wedge \neg c \wedge \neg p, 0)$} (v1)
    (v0) edge [loop above] node {$(\neg g \vee c \vee p, 0)$} (v0)
    (v1) edge [bend left] node {$(b \wedge g \wedge \neg c \wedge \neg p, 0)$} (v2)
    (v1) edge [loop above] node {$(\neg b \vee \neg g \vee c \vee p, 0)$} (v1)
    (v2) edge [loop right] node {$(\neg e \vee \neg g \vee p \vee c, 0)$} (v2)
    (v2) edge [bend right] node {$(e \wedge g \wedge \neg p \wedge \neg c, 0)$} (v3)
    (v3) edge [bend left] node {$(\mathcal{B}, 0)$} (v4) 
    (v4) edge [loop above] node {$(\neg z, 0)$} (v4)
    (v3) edge [loop right] node {$(\neg b \vee \neg g \vee p \vee c, 0)$} (v3)
    (v4) edge [bend right] node {$(z, 1)$} (v5)
    (v5) edge [loop left] node {$(\top, 0)$} (v5);
  \end{tikzpicture}
  
  \caption{Ground truth reward machine for Case Study 2. Agent must reach the destination no matter the route ($\mathcal{B} = b \wedge g \wedge \neg p \wedge \neg c$).}
  \label{fig:case_2_grnd_rm}
\end{figure}
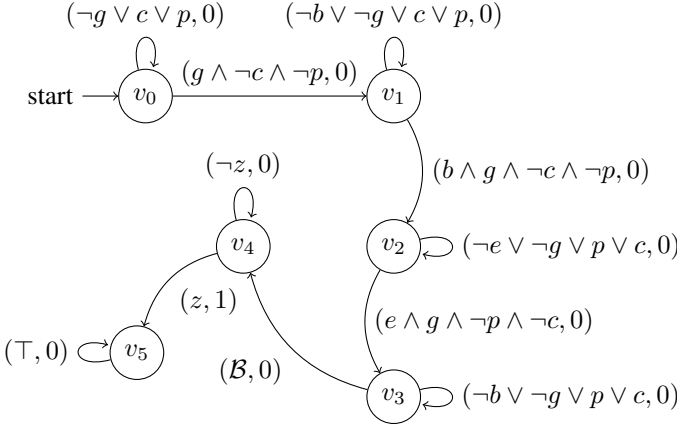

The LLM-generated DFA $\mathcal{D}$, $ \{J \wedge g \wedge \neg c \wedge \neg p, b \wedge g \wedge \neg c \wedge \neg p \}$ for this case study specifies a route which leads to the target and it is equivalent to the ground truth reward machine since this solution DFA can be considered a subset of the larger DFA $\mathcal{D}$ which is compatible with the ground truth reward machine. Figure \ref{fig:case_2_dfa} shows the LLM-generated DFA for Case Study 2.

\begin{figure}
  \centering
  \begin{tikzpicture}[shorten >=1pt, node distance=3cm, on grid, auto, state/.style={circle, draw, minimum size=2em}]
    \node[state, initial] (h1) {$h_1$};
    \node[state] (h2) [right=of h1] {$h_2$};
    \node[state, accepting] (h3) [right=of h2] {$h_3$};

    \path[->]
    (h1) edge [bend left] node {$J \wedge g \wedge \neg c \wedge \neg p$} (h2)
    (h2) edge [bend left] node {$b \wedge g \wedge \neg c \wedge \neg p$} (h3)
    (h1) edge [loop above] node {$\lnot J \lor \lnot g \lor c \lor p$} (h1)
    (h2) edge [loop above] node {$\lnot b \lor \lnot g \lor c \lor p$} (h2)
    (h3) edge [loop above] node {$\top$} (h3);
  \end{tikzpicture}
  
  \caption{LLM-generated DFA $\mathcal{D}$ contains $\{b \wedge g \wedge \neg c \wedge \neg p\}$ generated by LLM for Case Study 2.}
  \label{fig:case_2_dfa}
\end{figure}
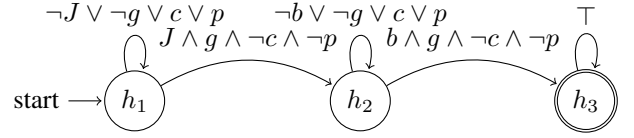

We run the LARL-RM for the Case Study 2 using the prompt $f$ which generates the DFA $\mathcal{D}$, $\{J, b \wedge g \wedge \neg c \wedge \neg p\}$ which is incompatible with the ground truth reward machine, but the LARL-RM takes this counterexample and generates an updated DFA $\mathcal{D}$, $\{ b \wedge g \wedge \neg c \wedge \neg p \}$ which is compatible with the ground truth reward machine. Figure \ref{fig:rew_case_2} demonstrates the reward of LARL-RM using the DFA $\mathcal{D}$.

\begin{figure}[H]
    \centering
    \includegraphics[scale=0.27]{./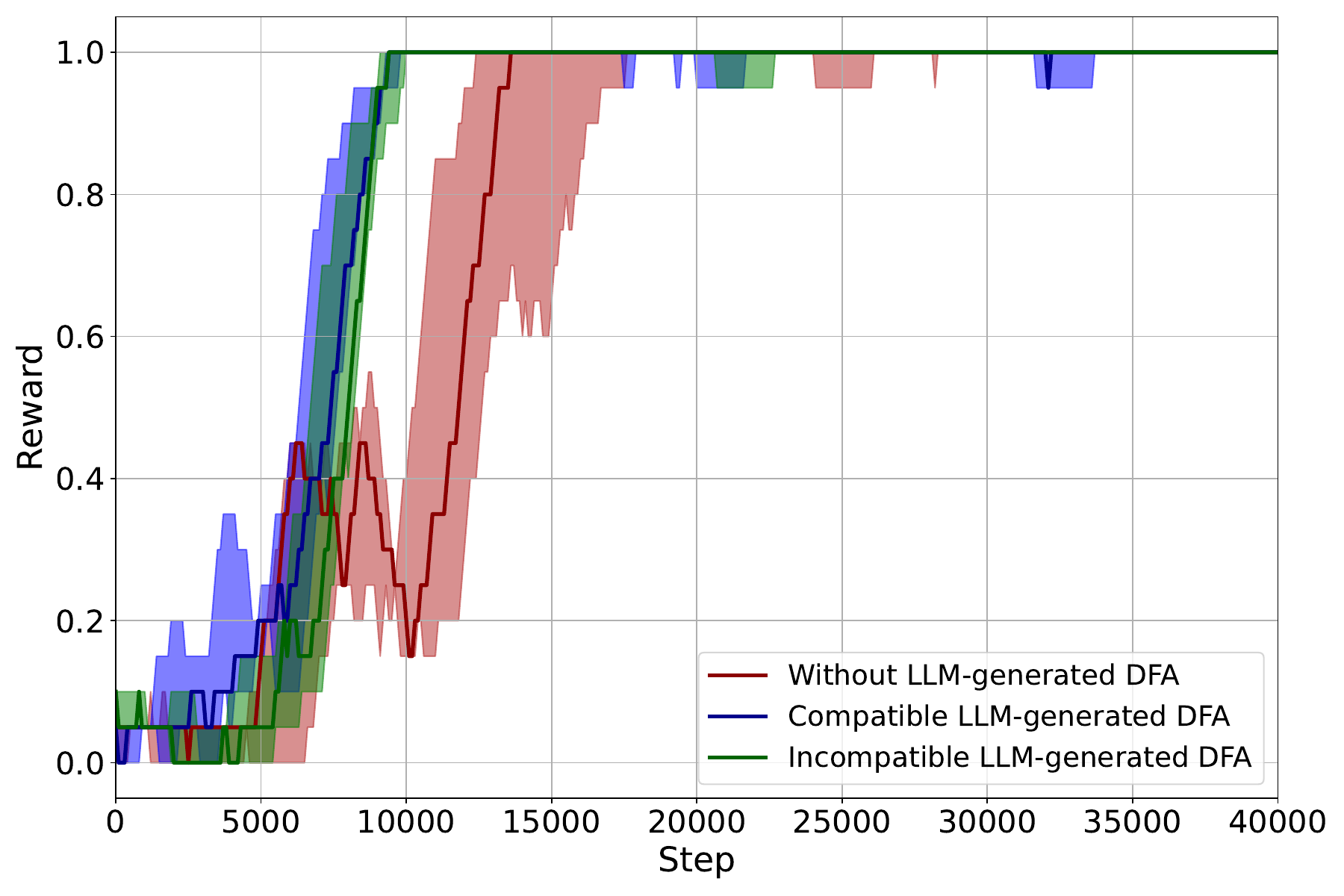}
    \caption{LARL-RM updates the prompt $f$ in order to obtain a DFA that is compatible with the ground truth reward machine.}
    \label{fig:rew_case_2}
\end{figure}

Figure \ref{fig:rew_case_2} shows the reward obtained by LARL-RM for Case Study 2 using the prompt $f$ which generates the DFA $\mathcal{D}$. The reward is for $5$ independent runs averaged at each $10$th step. For both cases of compatible and incompatible LLM-generated DFA the LARL-RM converges to the optimal policy faster than when there is no LLM-generated DFA.

\section{Conclusion}

We proposed a novel algorithm, LARL-RM, that uses a prompt to obtain an LLM-generated DFA to expedite reinforcement learning. LARL-RM uses counterexamples to automatically generate a new prompt and consequently, a new DFA that is compatible with the counterexamples to close the loop in RL. We showed that LARL-RM is guaranteed to converge to an optimal policy using the LLM-generated DFA. We showed that RL can be expedited using the LLM-generated DFA and in case the output of the LLM is not compatible with the ground truth reward machine, LARL-RM is capable of adjusting the prompt to obtain a more accurate DFA. In future work, we plan to extend the proposed framework to RL for multi-agent systems. 

\newpage

\section*{Acknowledgments}

This research is partially supported by the National Science Foundation under grant NSF CNS 2304863 and the Office of Naval Research under grant ONR N00014-23-1-2505.

\bibliographystyle{named}
\bibliography{ijcai24}

\appendix
\newpage

\section*{Supplementary Materials}

\section{Comparison of closed-loop and Open-loop LARL-RM}
Our proposed algorithm can be modified to perform in an open-loop or closed-loop manner with respect to LLM-generated DFA, meaning that if a counterexample is found \algname{} has the capability to either update the LLM-generated DFA $D$ or keep the original LLM-generated DFA.
LARL-RM has two options either continuing the learning process of the reward machine without updating the LLM-generated DFA, referred to as open-loop.
It can also call the LLM again and generate a new DFA based on the counterexample, referred to as the closed loop. We show our proposed algorithm's convergence to optimal policy under both options. We demonstrate the open-loop configuration in Figure \ref{fig:open_loop_config}.

\begin{figure}[H]
    \centering
    \includegraphics[scale=0.27]{./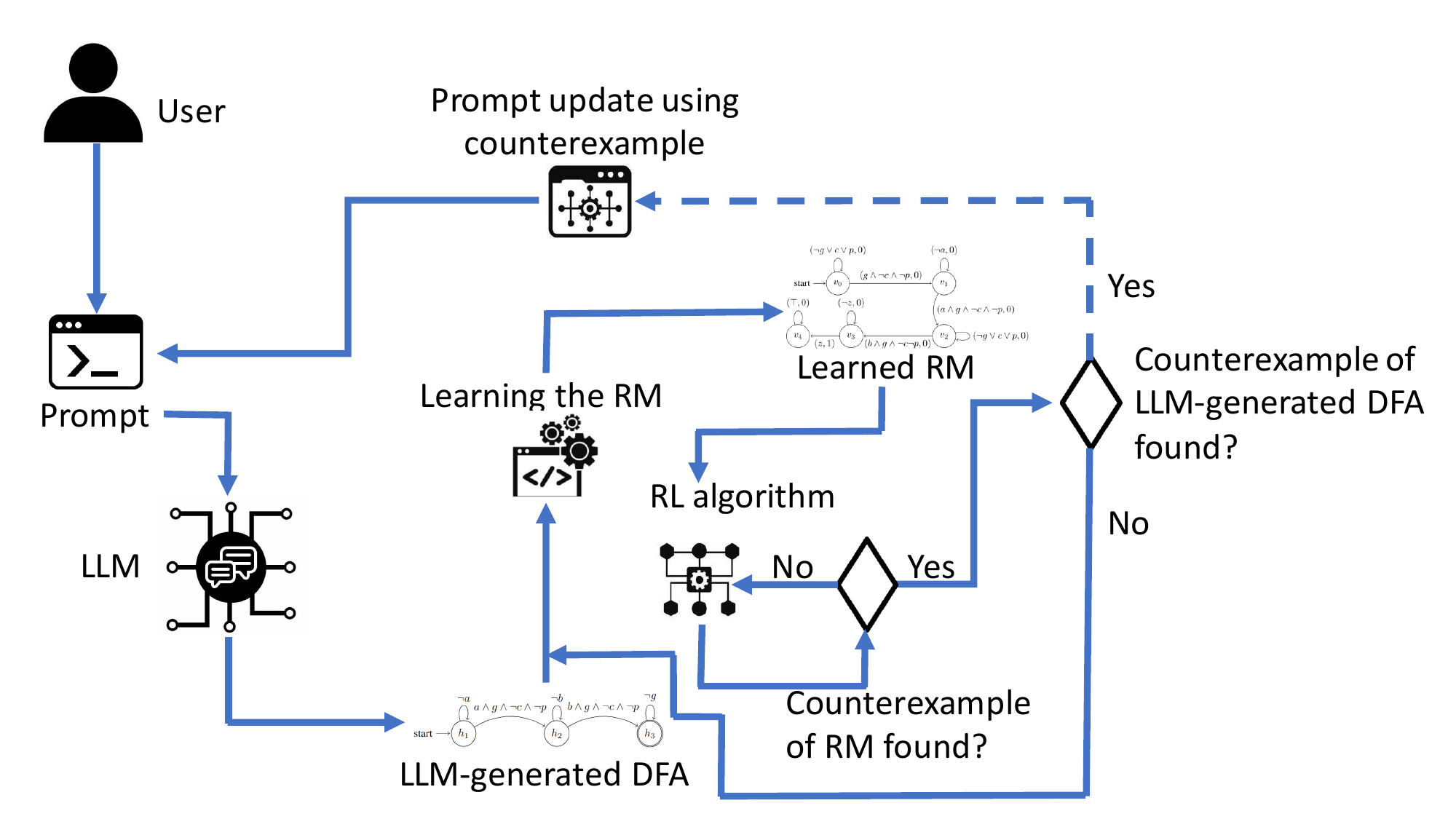}
    \caption{LARL-RM open-loop configuration, counterexample of LLM-generated DFAs are not used to update the prompt (dashed arrow).}
    \label{fig:open_loop_config}
\end{figure}

\subsection{Case Study 1}
We investigate the two open-loop and closed-loop methods in the Case Study 1. First, we demonstrate the effect of incompatible LLM-generated DFA with compatible LLM-generated DFA in open-loop conditions. Figure \ref{fig:open_close_1_incompat} illustrates the convergence to optimal policy for open-loop configuration.

\begin{figure}[H]
    \centering
    \includegraphics[scale=0.27]{./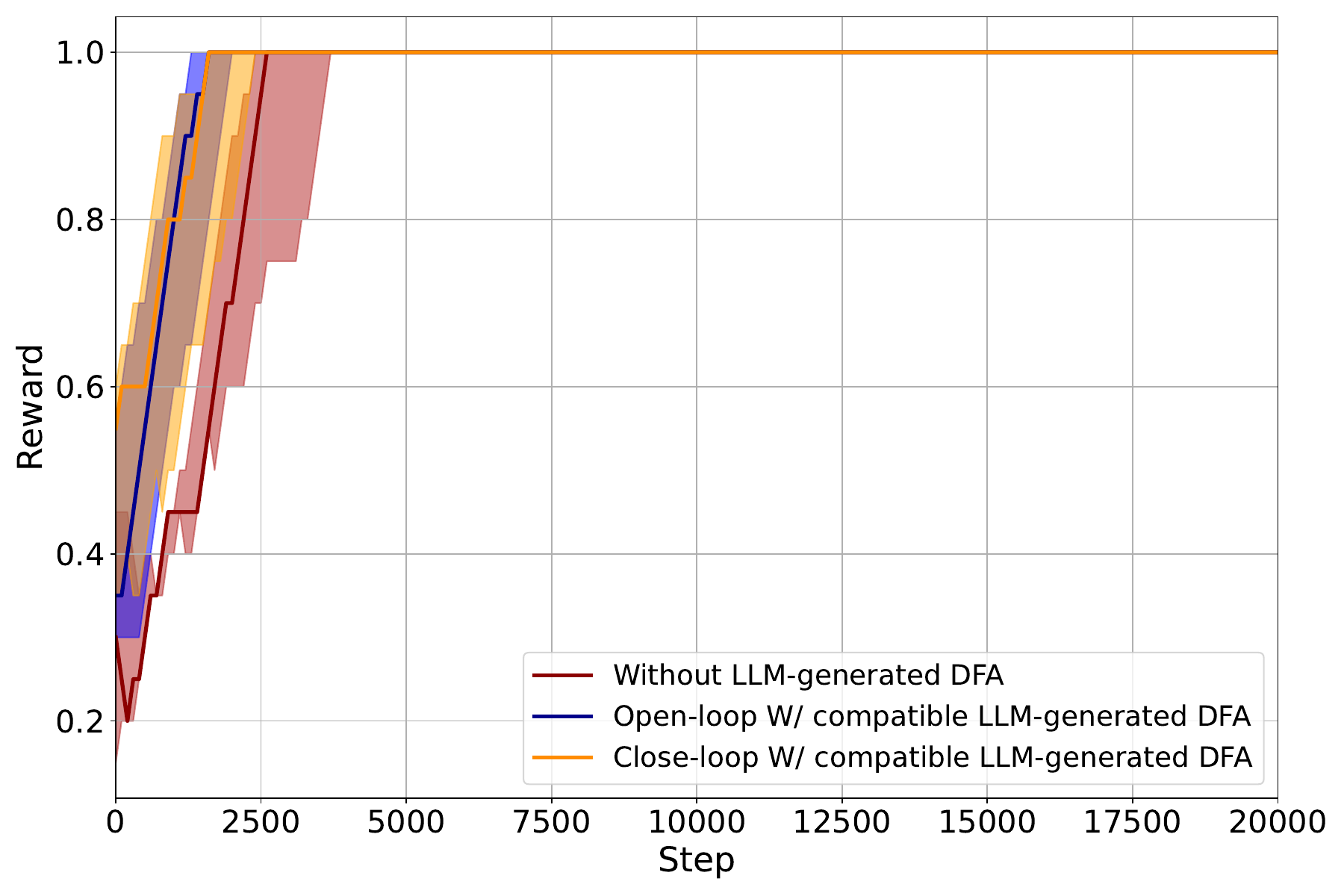}
    \caption{LARL-RM open-loop and closed-loop configurations' averaged rewards for Case Study 1 when the LLM-generated DFA is compatible with the ground truth reward machine. Rewards are for 5 independent runs, averaged at each $20$ step.}
    \label{fig:open_close_1_incompat}
\end{figure}

Figure \ref{fig:open_close_1_incompat} demonstrates that in LARL-RM open-loop configuration if the LLM-generated DFA is compatible it converges to the optimal policy faster than when there is no LLM-generated DFA.
However, if the LLM-generated DFA is incompatible then it might take longer to converge to the optimal policy.
Now we consider the closed-loop configuration so that the DFA if incompatible then be used to update the prompt $f$. Figure \ref{fig:open_close_1_compat} demonstrates the LARL-RM convergence to the optimal policy when the closed-loop configuration is applied.

\begin{figure}[H]
    \centering
    \includegraphics[scale=0.27]{./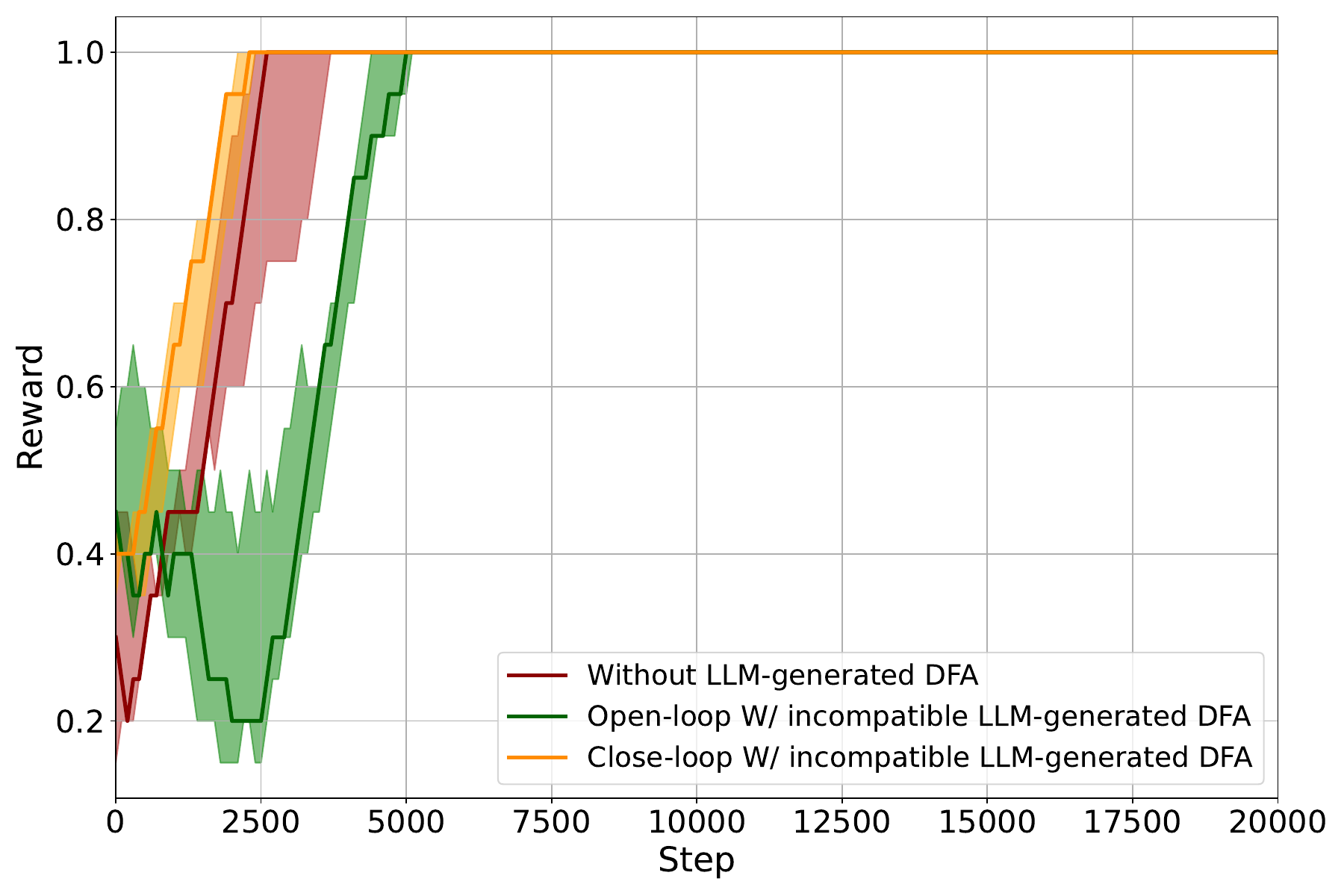}
    \caption{LARL-RM open-loop and closed-loop configurations' averaged rewards for Case Study 1 when the LLM-generated DFA is incompatible with the ground truth reward machine. Rewards are for 5 independent runs, averaged at each $20$ step.}
    \label{fig:open_close_1_compat}
\end{figure}

Figure \ref{fig:open_close_1_compat} demonstrates that the closed-loop configuration is capable of updating the prompt $f$ considering the counterexample such that LARL-RM has a better chance of converging faster than when there is no LLM-generated DFA.

\subsection{Case Study 2}
We further investigate the effect of open-loop and closed-loop configurations. The same configuration for the open-loop as illustrated in Figure \ref{fig:open_loop_config} is applied here.
The open-loop configuration as shown in Figure \ref{fig:open_close_2_incompat} is converging to the optimal policy faster than the case with no LLM-generated DFA if the LLM-generated DFA is compatible with the ground truth reward machine.

\begin{figure}[H]
    \centering
    \includegraphics[scale=0.27]{./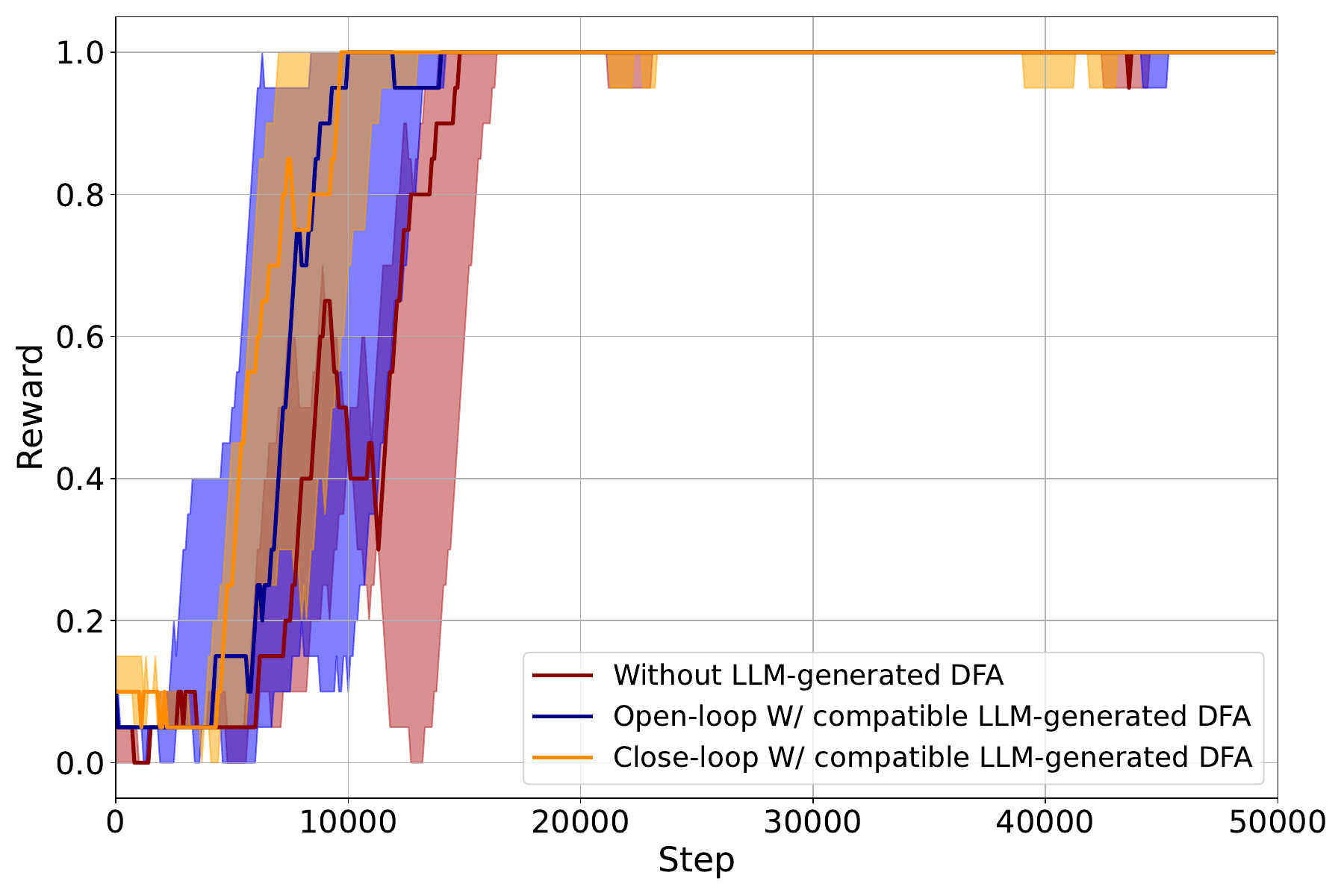}
    \caption{LARL-RM open-loop and closed-loop configurations reward for Case Study 2 when the LLM-generated DFA is compatible with the ground truth reward machine. Rewards are for 5 independent runs, averaged at each $20$ step.}
    \label{fig:open_close_2_incompat}
\end{figure}

If we apply the closed-loop configuration to Case Study 2, then as expected it can converge to the optimal policy faster than when there is no LLM-generated DFA and even if the initial LLM-generated DFA is incompatible with the ground truth reward machine.
Figure \ref{fig:open_close_2_compat} demonstrates the closed-loop configuration.

\begin{figure}[H]
    \centering
    \includegraphics[scale=0.27]{./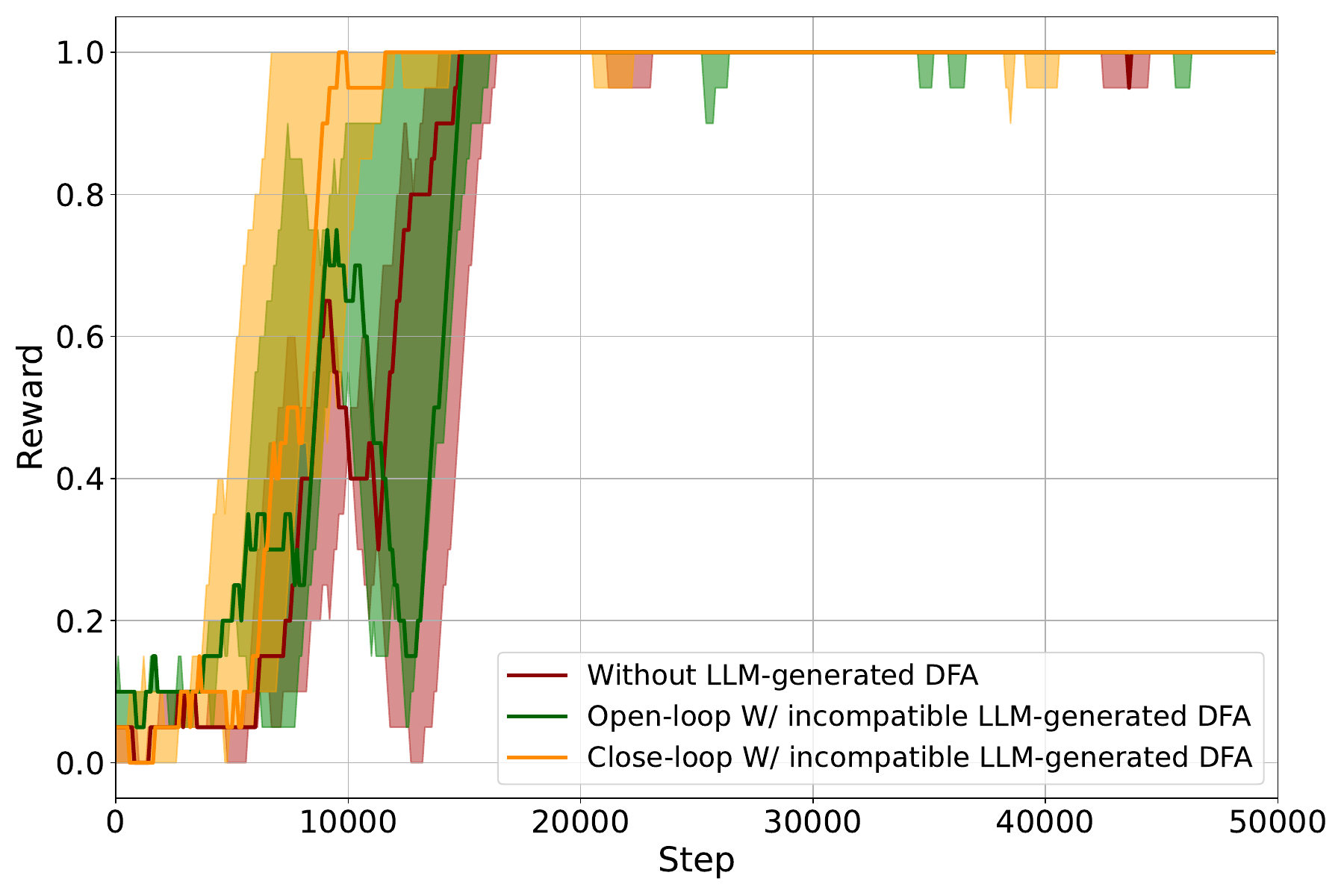}
    \caption{LARL-RM open-loop and closed-loop configurations reward for Case Study 2 when the LLM-generated DFA is incompatible with the ground truth reward machine. Rewards are for 5 independent runs, averaged at each $20$ step.}
    \label{fig:open_close_2_compat}
\end{figure}

As illustrated in Figure \ref{fig:open_close_2_compat}, if the closed-loop configuration is applied then the LARL-RM has a better chance to converge to the optimal policy faster than when there is no LLM-generated DFA.

\section{Theoretical Guarantee of the LARL-RM}
Here we demonstrate the convergence of the proposed algorithm to the optimal policy.

\begin{proof}[Proof of Lemma~\ref{lem:learn-correct-RM}]
We first show that the set $\mathscr{D}$ remains stationary after a finite amount of time (i.e., that no new LLM-generated DFA is added or removed), assuming that all trajectories will be visited infinitely often.
To this end, we observe the following: if an LLM-generated DFA added to $\mathscr{D}$ is compatible, then it will remain in $\mathscr{D}$ indefinitely.
The reason is that there is no counterexample contradicting the LLM-generated DFA, meaning that the check-in Line~\ref{line:checkCOnpatibility} is never triggered.
On the other hand, if an LLM-generated DFA is not consistent, then the algorithm eventually detects a trajectory witnessing this fact.
Once this happens, it removes the corresponding LLM-generated DFA from the set~$\mathscr{D}$.

Observe now that the algorithm decrements $\mathcal J$ by one every time it adds an LLM-generated DFA to $\mathscr{D}$, and it only does so as long as $\mathcal J > 0$.
Thus, the total number of LLM-generated DFA that are generated during the run of the algorithm is bounded by $\mathcal J$.
Consequently, the set $\mathscr{D}$ no longer changes after a finite period, because the algorithm has either identified the true reward machine or all incompatible LLM-generated DFAs, have been removed and the budget $J$ is exhausted.

Once $\mathscr{D}$ becomes stationary, an argument analogous to that in the proof of Lemma~1 in Neider et al.'s work~\cite{neider2021advice} shows that the algorithm will eventually learn the true reward machine.
Intuitively, as long as the current hypothesis is not equivalent to the true reward machine, there exists a ``short'' trajectory witnessing the fast.
The episode length is chosen carefully such that such a witness will eventually be encountered and added to $X$.
In the worst case, all trajectories of $Episode_{length}$ will eventually be added to $X$, at which point the learning algorithm is guaranteed to learn the true reward machine.
\end{proof}

Since our proposed algorithm learns the minimal reward machine that is consistent with $X$ and compatible with $D \in \mathscr{D}$, we need to use an automata learning method based on SAT which is used to verify parametric systems \cite{neider2014applications}.
The underlying idea is to reduce the learning task to a series of satisfiability checks of formulas for propositional logic.
In other words, we build and solve a sequence of propositional formulas $\Phi^{X, D}_{n}$ for $n > 0$ such that the following properties hold \cite{neider2021advice}:

\begin{itemize}
    \item the satisfiability of $\Phi_n^{X, D}$ is contingent upon the existence of a reward machine with $n$ states, such that it is consistent with $\mathcal X$ and compatible with  $\Advice \in \mathscr{D}$. This condition holds true if and only if such a reward machine exists.
    \item if $\Phi_n^{X, D}$ is satisfiable, it implies that a satisfying assignment contains enough information to construct a reward machine with $n$ states that is both consistent and compatible.
\end{itemize}

We can now build and solve $\Phi^{X, D}_{n}$ starting from $n=1$ and increasing $n$ until it becomes satisfiable. Using this method our proposed algorithm constructs a minimal reward machine that is consistent with $X$ and compatible with the LLM-generated DFA $D$.

To ease the notation in the remainder, we use $\machine \colon \mealyStateStyle{p} \run{w} \mealyStateStyle{q}$ to abbreviate a run of the reward machine $\machine$ on the input-sequence $w$ that starts in $\mealyStateStyle{p}$ and leads to $\mealyStateStyle{q}$.
By definition, we have $\machine \colon \mealyStateStyle{p} \run{\varepsilon} \mealyStateStyle{p}$ for the empty sequence $\varepsilon$ and every state $\mealyStateStyle{p}$.
We later also use this notation for DFAs.

To encode the reward machine $\machine = \langle V, V_{I}, 2^{\mathcal{P}}, R_{X}, \delta, \sigma \rangle$ in propositional logic where set $V$ of states and initial state $v_{I}$ is fixed, the reward machine $\machine$ is uniquely determined by the transitions $\delta$ and the output function $\sigma$.
The size of states $\lvert V \rvert = n$  and the initial state $v_{I}$ is fixed,
then to encode the transition function and the set representing the final
states, we will introduce two propositional variables namely $d_{p,l,v}$ where
$p, v \in V$ and $l \in 2^{\mathcal{P}}$ and $o_{p,l,r}$ for $p \in V$ , $l \in 2^{\mathcal{P}}$, and $r \in R_X$. We apply the constraints in \eqref{for:well-defined-1} and \eqref{for:well-defined-2} to ensure that the variables $d_{p, l, v}$ and $o_{p,l,r}$ encode deterministic functions.

\begin{align}
	\label{for:well-defined-1}
	\bigwedge_{\mealyStateStyle{p} \in \mealyStates} \bigwedge_{\mealyCommonInput \in \mealyInputAlphabet} \biggl[\! \bigl[ \bigvee_{\mealyStateStyle{q} \in \mealyStates} d_{\mealyStateStyle{p}, \mealyCommonInput, \mealyStateStyle{q}} \bigr] \land \bigl[ \bigwedge_{\mealyStateStyle{q} \neq \fsm{q'} \in \mealyStates} \lnot d_{\mealyStateStyle{p}, \mealyCommonInput, \mealyStateStyle{q}} \lor \lnot d_{\mealyStateStyle{p}, \mealyCommonInput, \mealyStateStyle{q'}} \bigr] \! \biggr] \\
	\label{for:well-defined-2}
	\bigwedge_{\mealyStateStyle{p} \in \mealyStates} \bigwedge_{\mealyCommonInput \in \mealyInputAlphabet} \biggl[\!\bigl[ \bigvee_{r \in \setOfSeenRewards_{X}} o_{\mealyStateStyle{p}, \mealyCommonInput, r} \bigr] \land \bigl[ \bigwedge_{r \neq r' \in \setOfSeenRewards_{X}} \lnot o_{\mealyStateStyle{p}, \mealyCommonInput, r} \lor \lnot o_{\mealyStateStyle{p}, \mealyCommonInput, r'} \bigr]\!\biggr]
\end{align}

We denote the conjunction of constraints \eqref{for:well-defined-1} and \eqref{for:well-defined-2} by $\Phi^\mathit{RM}_{n}$ which we later on use to show the consistency of the learned reward machine with $X$.
To apply constraints for consistency with samples in propositional logic, we propose auxiliary variables as $x_{\lambda, \rho}$ for every $(\lambda, \rho) \in \mathit{Pref}(X)$ where $q \in V$, and for a trace $\tau = (l_1 \ldots l_k, r_1 \ldots r_k) \in (2^\mathcal P)^\ast \times \mathbb R^\ast$ we define the set \emph{prefixes of $\tau$} by $\Pref(\tau) = \{ (l_1 \ldots l_i, r_1 \ldots r_i) \in (2^\mathcal P)^\ast \times \mathbb R^\ast \mid 0 \leq i \leq k \}$ ( $(\varepsilon, \varepsilon) \in \Pref(\tau)$ is always correct).
The value of $x_{\lambda, \rho}$ is set to true if and only if the prospective reward machine reaches states $q$ after reading $\lambda$.
To obtain the desired meaning, we add the following constraints:
\begin{align}
	\label{for:consistent-1}
	x_{\varepsilon, \fsm{q_I}} \land \bigwedge_{\mealyStateStyle{p} \in \mealyStates \setminus \{ \fsm{q_I} \}} \lnot x_{\varepsilon, \mealyStateStyle{p}} \\
	\label{for:consistent-2}
	\bigwedge_{(\lambda l, \rho r) \in \mathit{Pref}(X)} \bigwedge_{\mealyStateStyle{p}, \mealyStateStyle{q} \in \mealyStates} (x_{\lambda, p} \land d_{\mealyStateStyle{p}, l, \mealyStateStyle{q}}) \rightarrow x_{\lambda l, \mealyStateStyle{q}} \\
	\label{for:consistent-3}
	\bigwedge_{(\lambda l, \rho r) \in \mathit{Pref}(X)} \bigwedge_{\fsm{p} \in \mealyStates} x_{\lambda, p} \rightarrow o_{\fsm{p}, l, \fsm{r}}
\end{align}

In order to ensure that the prospective reward machine $\machine_{\mathcal{I}}$ and the LLM-generated DFA $D$ are synchronized we add auxiliary variables $y^{D}_{v,v^{\prime}}$ for $v \in V$ and $v^{\prime} \in V_{D}$. IF there is a label sequence $\lambda$ where $\machine_{\mathcal{I}}: v_{I} \run{\lambda} q$ and $D: q_{I, D} \run{\lambda} q^{\prime}$ is set to true.
We denote the conjunction of constraints \eqref{for:consistent-1}-\eqref{for:consistent-3} by $\Phi^\mathit{X}_{n}$ which we later on use to show the consistency of the learned reward machine with $X$.
To obtain this behavior, we add the following constraints:

\begin{align}
	\label{for:compatible-1}
	y_{\fsm{q_I}, \fsm{q'_{\mealyCommonInput, \Advice}}}^\Advice \\
	\label{for:compatible-2}
	\bigwedge_{\fsm{p, q} \in \mealyStates} \bigwedge_{\mealyCommonInput \in \mealyInputAlphabet} \bigwedge_{\delta_\Advice(\mealyStateStyle{p'}, \mealyCommonInput) = \fsm{q'}} (y_{\mealyStateStyle{p}, \mealyStateStyle{p'}}^\Advice \land d_{\mealyStateStyle{p}, \mealyCommonInput, \mealyStateStyle{q}}) \rightarrow y_{\mealyStateStyle{q}, \fsm{q'}}^\Advice \\
	\label{for:compatible-3}
	\bigwedge_{\mealyStateStyle{p} \in \mealyStates} \bigwedge_{\substack{\delta_\Advice(\mealyStateStyle{p'}, \mealyCommonInput) = \fsm{q'} \\ \fsm{q'} \notin \fsm{F}_\Advice}} y_{\mealyStateStyle{p}, \mealyStateStyle{p'}}^\Advice \rightarrow \lnot \bigvee_{\substack{r \in \setOfSeenRewards_X \\ r > 0}} o_{\mealyStateStyle{p}, \mealyCommonInput, r}
\end{align}

We denote the conjunction of constraints \eqref{for:compatible-1}-\eqref{for:compatible-3} by $\Phi^\mathit{X}_{n}$ which we later use to show the consistency of the learned reward machine with $X$. 
Theorem \ref{thm:RM-inference-correct} demonstrates the consistency of the LLM-generated DFA with the Algorithm \ref{alg:LARL-RM_alg}. W
e denote the conjunction
of Formulas \eqref{for:consistent-1}, \eqref{for:consistent-2}, and \eqref{for:consistent-3} by $\Phi^{D}_{n}$.

\begin{theorem} \label{thm:RM-inference-correct}
Consider $X \subset (2^\mathcal P)^+ \times \mathbb R^+$ be sample and $\mathscr{D}$ be a finite set of LLM-generated DFAs that are compatible with $X$.
Then, the following holds:
\begin{enumerate}
	\item \label{thm:RM-inference-correct:1}
	If we have $\mathcal I \models \Phi_n^{X, D}$, then the reward machine $\machine_\mathcal I$ is consistent with $X$ and compatible with the LLM-generated DFA $\Advice \in \mathscr{D}$.
	\item \label{thm:RM-inference-correct:2}
	If there exists a reward machine with $n$ states that is consistent with $X$ as well as compatible with each $\Advice \in \mathscr{D}$, then $\Phi_n^{X, D}$ is satisfiable.
\end{enumerate}
\end{theorem}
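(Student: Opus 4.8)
The plan is to prove the two implications separately, exploiting the fact that $\Phi_n^{X,D}$ is a conjunction of three semantically distinct blocks: the well-definedness constraints \eqref{for:well-defined-1}--\eqref{for:well-defined-2} (which force $d_{p,l,q}$ and $o_{p,l,r}$ to encode total deterministic functions), the consistency block \eqref{for:consistent-1}--\eqref{for:consistent-3}, and, for each $\Advice \in \mathscr{D}$, the compatibility block \eqref{for:compatible-1}--\eqref{for:compatible-3}. Throughout, the auxiliary variables $x_{\lambda,q}$ and $y_{p,p'}^{\Advice}$ are intended to track, respectively, the state reached by the prospective machine after reading $\lambda$ and the joint state reached by the synchronized product of the machine and $\Advice$. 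The whole argument closely parallels the correctness proof of the SAT encoding in Neider et al.~\cite{neider2021advice}; the only new ingredient is that compatibility is now asserted simultaneously against every DFA in $\mathscr{D}$.

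For statement~\ref{thm:RM-inference-correct:1}, given $\mathcal I \models \Phi_n^{X,D}$ I would first read off the machine $\machine_{\mathcal I}$ by setting $\delta(p,l)=q$ iff $\mathcal I(d_{p,l,q})=1$ and $\sigma(p,l)=r$ iff $\mathcal I(o_{p,l,r})=1$; \eqref{for:well-defined-1}--\eqref{for:well-defined-2} guarantee these are well-defined total functions, so $\machine_{\mathcal I}$ is a genuine reward machine on the fixed state set with initial state $\fsm{q_I}$. The core is then a short induction on $|\lambda|$ establishing the invariant that $\mathcal I(x_{\lambda,q})=1$ exactly when $\machine_{\mathcal I}\colon \fsm{q_I} \run{\lambda} q$: the base case is \eqref{for:consistent-1} and the step is \eqref{for:consistent-2} together with determinism. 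Consistency with $X$ then follows because, for each $(\lambda l, \rho r)\in\Pref(X)$, applying \eqref{for:consistent-3} to the unique run state after $\lambda$ yields $\sigma(\cdot,l)=r$, so $\machine_{\mathcal I}(\lambda)=\rho$. An entirely analogous induction, based on \eqref{for:compatible-1}--\eqref{for:compatible-2}, shows $y_{p,p'}^{\Advice}=1$ whenever the product reaches $(p,p')$; then for any $w=l_1\ldots l_k$ whose final reward is positive, I would assume for contradiction that $w\notin\Language(\Advice)$, i.e.\ $\Advice$ ends in a non-accepting state, and invoke \eqref{for:compatible-3} at the last step to force the output to be non-positive, a contradiction. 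Hence $\machine_{\mathcal I}$ is compatible with each $\Advice$.

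For statement~\ref{thm:RM-inference-correct:2}, given an $n$-state reward machine $\machine$ consistent with $X$ and compatible with every $\Advice\in\mathscr{D}$, I would build the obvious assignment: $d_{p,l,q}$ true iff $\delta(p,l)=q$, $o_{p,l,r}$ true iff $\sigma(p,l)=r$, $x_{\lambda,q}$ true iff $\machine\colon\fsm{q_I}\run{\lambda}q$, and $y_{p,p'}^{\Advice}$ true iff some $\lambda$ jointly drives $\machine$ to $p$ and $\Advice$ to $p'$. Then I would verify each conjunct in turn: well-definedness is immediate from $\delta,\sigma$ being functions; \eqref{for:consistent-1}--\eqref{for:consistent-2} and \eqref{for:compatible-1}--\eqref{for:compatible-2} hold by the definitions of a run and of the product, respectively; \eqref{for:consistent-3} uses consistency of $\machine$ (the per-step reward on any prefix of $X$ is exactly the recorded reward); and \eqref{for:compatible-3} uses compatibility in contrapositive form (reaching a non-accepting DFA state on $\lambda l$ means $\lambda l\notin\Language(\Advice)$, so its last reward cannot be positive).

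The main obstacle I anticipate is not any single step but getting the inductive invariants for the auxiliary variables exactly right, particularly for the product variables $y_{p,p'}^{\Advice}$, where I must be careful that the encoding only guarantees a reachability (``at least'') semantics rather than a biconditional, and that this one-sided reading is nonetheless enough to drive \eqref{for:compatible-3}. A second delicate point lies in statement~\ref{thm:RM-inference-correct:2}: the encoding restricts output variables to the finite reward alphabet $\setOfSeenRewards_X$ observed in $X$, so I must argue that the given machine's outputs relevant to the constraints already lie in $\setOfSeenRewards_X$ (outputs on $X$-prefixes are recorded rewards, and the remaining ``don't-care'' outputs can be pinned to a fixed non-positive element so that \eqref{for:compatible-3} is never violated), which is exactly where the hypothesis class of the learner must match the encoding. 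Handling these two bookkeeping issues cleanly is what makes the proof go through.
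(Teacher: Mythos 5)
Your proposal follows essentially the same route as the paper: Part~\ref{thm:RM-inference-correct:1} is established exactly as in the paper's Lemmas~\ref{lem:RM-inference-consistent} and~\ref{lem:RM-inference-compatible} (induction on prefix length for the $x_{\lambda,q}$ invariant and on label-sequence length for the $y_{p,p'}^\Advice$ invariant, followed by the same contradiction via Formulas~\eqref{for:compatible-3} and~\eqref{for:well-defined-2}), and Part~\ref{thm:RM-inference-correct:2} is the same canonical valuation read off from $\machine$'s transitions, runs on $\Pref(X)$, and synchronized runs with each $\Advice$. Your closing caveats (the one-sided ``at least'' semantics of the auxiliary variables and the restriction of outputs to $\setOfSeenRewards_X$) are legitimate bookkeeping points that the paper's proof handles implicitly or glosses over, but they do not constitute a different approach.
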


Proof for Theorem \ref{thm:RM-inference-correct} will follow in Section \ref{sec:learn_rm} after discussion of prerequisites.


\section{Learning Reward Machines} \label{sec:learn_rm}
In this section, we prove the correctness of our SAT-based algorithm for learning reward machines.
To this end, we show that the propositional formula $\Phi_n^X$ and $\Phi_n^\Advice$ have the desired meaning.
We begin with the formula $\Phi_n^X$, which is designed to enforce that the learned reward machine is consistent with the sample $X$.

\begin{lemma} \label{lem:RM-inference-consistent}
Let $\mathcal I \models \Phi_n^\mathit{RM} \land \Phi_n^X$ and $\machine_\mathcal I$ the reward machine defined above.
Then, $\machine_\mathcal I$ is consistent with $X$ (i.e., $\machine_\mathcal I(\lambda) = \rho$ for each $(\lambda, \rho) \in X$).
\end{lemma}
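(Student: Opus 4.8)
The plan is to use the auxiliary variables $x_{\lambda', \mealyStateStyle{p}}$ to show that the satisfying assignment $\mathcal I$ faithfully records the run of the reconstructed machine $\machine_\mathcal I$ on every prefix of every sample trace, and then to read consistency of the outputs off constraint~\eqref{for:consistent-3}. Recall that $\machine_\mathcal I$ is obtained from $\mathcal I$ by setting $\delta(\mealyStateStyle{p}, l) = \mealyStateStyle{q}$ whenever $\mathcal I(d_{\mealyStateStyle{p}, l, \mealyStateStyle{q}}) = 1$ and $\sigma(\mealyStateStyle{p}, l) = r$ whenever $\mathcal I(o_{\mealyStateStyle{p}, l, r}) = 1$. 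Since $\mathcal I \models \Phi_n^\mathit{RM}$, constraints~\eqref{for:well-defined-1} and~\eqref{for:well-defined-2} guarantee that for each pair $(\mealyStateStyle{p}, l)$ exactly one such $\mealyStateStyle{q}$ and exactly one such $r$ exist, so $\delta$ and $\sigma$ are genuine total, deterministic functions and the reconstruction is legitimate.

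The central claim, proved by induction on prefix length, is: for $\tau = (\lambda, \rho) \in X$ and every $(\lambda', \rho') \in \Pref(\tau)$, if $\machine_\mathcal I \colon \fsm{q_I} \run{\lambda'} \mealyStateStyle{p}$ then $\mathcal I(x_{\lambda', \mealyStateStyle{p}}) = 1$. For the base case $\lambda' = \varepsilon$ the run is $\machine_\mathcal I \colon \fsm{q_I} \run{\varepsilon} \fsm{q_I}$, and constraint~\eqref{for:consistent-1} sets $x_{\varepsilon, \fsm{q_I}}$ to true, which is exactly what is required. For the inductive step, take $(\lambda' l, \rho' r) \in \Pref(\tau)$, let $\mealyStateStyle{p}$ be the state reached by $\machine_\mathcal I$ after $\lambda'$ (so $\mathcal I(x_{\lambda', \mealyStateStyle{p}}) = 1$ by the hypothesis), and let $\mealyStateStyle{q} = \delta(\mealyStateStyle{p}, l)$ be the unique successor, so $\mathcal I(d_{\mealyStateStyle{p}, l, \mealyStateStyle{q}}) = 1$. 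The implication in~\eqref{for:consistent-2} instantiated at $\mealyStateStyle{p}, l, \mealyStateStyle{q}$ then forces $\mathcal I(x_{\lambda' l, \mealyStateStyle{q}}) = 1$, and $\mealyStateStyle{q}$ is precisely the state reached after $\lambda' l$, closing the induction. Only this one direction is needed: other $x$-variables may also hold, but this is irrelevant since we invoke only the variable tied to the actual run.

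Finally I would extract output consistency. Write $\lambda = l_1 \cdots l_k$, $\rho = r_1 \cdots r_k$, and let $\mealyStateStyle{p}_0 = \fsm{q_I}, \mealyStateStyle{p}_1, \ldots, \mealyStateStyle{p}_k$ be the run of $\machine_\mathcal I$ on $\lambda$. For each $j$, the prefix $(l_1 \cdots l_j, r_1 \cdots r_j)$ lies in $\Pref(\tau)$, so the claim gives $\mathcal I(x_{l_1 \cdots l_{j-1}, \mealyStateStyle{p}_{j-1}}) = 1$; constraint~\eqref{for:consistent-3} then forces $\mathcal I(o_{\mealyStateStyle{p}_{j-1}, l_j, r_j}) = 1$, i.e.\ $\sigma(\mealyStateStyle{p}_{j-1}, l_j) = r_j$, which is exactly the $j$-th symbol output by $\machine_\mathcal I$. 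Hence $\machine_\mathcal I(\lambda) = r_1 \cdots r_k = \rho$, establishing consistency. The main obstacle I anticipate is bookkeeping rather than conceptual: one must track the prefix indexing and the empty prefix carefully, and must lean on well-definedness from $\Phi_n^\mathit{RM}$ at each step to secure the singleton successor $\mealyStateStyle{q}$ and singleton output $r$ so that the forced variables coincide with the values of the reconstructed functions $\delta$ and $\sigma$.
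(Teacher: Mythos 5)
Your proof is correct and follows essentially the same route as the paper: an induction over prefixes of sample traces using constraint~\eqref{for:consistent-1} for the base case, \eqref{for:consistent-2} to propagate the $x$-variables along the run, and \eqref{for:consistent-3} together with the well-definedness constraints of $\Phi_n^\mathit{RM}$ to read off the outputs. The only cosmetic difference is that the paper carries the output claim $\machine_\mathcal I(\lambda) = \rho$ inside the same induction, whereas you prove the $x$-variable claim first and then extract the outputs in a separate pass over the positions $j$; both are equivalent bookkeeping.
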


\begin{proof}
Let $\mathcal I \models \Phi_n^\mathit{RM} \land \Phi_n^X$ and $\machine_\mathcal I$ the reward machine constructed as above.
To prove Lemma~\ref{lem:RM-inference-consistent}, we show the following, more general statement by induction over the length of prefixes $(\lambda, \rho) \in \mathit{Pref}(X)$: if $\machine_\mathcal I \colon \fsm{q_I} \run{\lambda} \mealyStateStyle{q}$, then
\begin{enumerate}
	\item \label{lem:RM-inference-consistent:a}
	$\mathcal I(x_{\lambda, \mealyStateStyle{q}}) = 1$; and
	\item \label{lem:RM-inference-consistent:b}
	$\machine_\mathcal I(\lambda) = \rho$.
\end{enumerate}
Lemma~\ref{lem:RM-inference-consistent} then follows immediately from Part~\ref{lem:RM-inference-consistent:b} since $X \subseteq \mathit{Pref}(X)$.

\begin{description}
	\item[Base case:]
	Let $(\varepsilon, \varepsilon) \in \mathit{Pref}(X)$.
	By definition of runs, we know that $\machine_\mathcal I \colon \mealyInit \run{\varepsilon} \mealyInit$ is the only run of $\machine$ on the empty word.
	Similarly, Formula~\eqref{for:consistent-1} guarantees that the initial state $\mealyInit$ is the unique state $\mealyCommonState \in \mealyStates$ for which $\mathcal I(x_{\varepsilon, \mealyCommonState}) = 1$ holds.
	Both observations immediately prove Part~\ref{lem:RM-inference-consistent:a}.
	Moreover, $\machine(\varepsilon) = \varepsilon$ holds by definition the semantics of reward machines, which proves Part~\ref{lem:RM-inference-consistent:b}.
	\item[Induction step:]
	Let $(\lambda l, \rho r) \in \mathit{Pref}(X)$.
	Moreover, let $\machine_\mathcal I \colon \mealyInit \run{\lambda} \mealyStateStyle{p} \run{l} \mealyStateStyle{q}$ be the unique run of $\machine$ on $\lambda$.
	By applying the induction hypothesis, we then obtain that both $\mathcal I(x_{\lambda, p}) = 1$ and $\machine(\lambda) = \rho$ hold.
	
	To prove Part~\ref{lem:RM-inference-consistent:a}, note that $\machine_\mathcal I$ contains the transition $\mealyTransition(\mealyStateStyle{p}, l) = \mealyStateStyle{q}$ since this transition was used in the last step of the run on $\lambda$.
	By construction of $\machine_\mathcal I$, this can only be the case if $\mathcal I(d_{\mealyStateStyle{p}, l, \mealyStateStyle{q}}) = 1$.
	Then, however, Formula~\eqref{for:consistent-2} implies that $\mathcal I(x_{\lambda l, q}) = 1$ because $\mathcal I(x_{\lambda, p}) = 1$ (which holds by induction hypothesis).
	This proves Part~\ref{lem:RM-inference-consistent:a}.
	
	To prove Part~\ref{lem:RM-inference-consistent:b}, we exploit Formula~\eqref{for:consistent-3}.
	More precisely, Formula~\eqref{for:consistent-3} guarantees that if $\mathcal I(x_{\lambda, p}) = 1$ and the next input is $\mealyCommonInput$, then $\mathcal I(o_{p, \mealyCommonInput, r}) = 1$.
	By construction of $\machine_\mathcal I$, this means that $\sigma(\mealyStateStyle{q}, \mealyCommonInput) = r$.
	Hence, $\machine_\mathcal I$ outputs $r$ in the last step of the run on $\lambda l$.
	Since $\machine_\mathcal I(\lambda) = \rho$ (which holds by induction hypothesis), we obtain $\machine_\mathcal I(\lambda l) = \rho r$.
	This proves Part~\ref{lem:RM-inference-consistent:b}.
\end{description}
Thus, $\machine_\mathcal I$ is consistent with $X$.
\end{proof}

Next, we show that the formula $\Phi_n^\Advice$ ensures that the learned reward machine is compatible with the LLM-generated DFA $\Advice$.

\begin{lemma} \label{lem:RM-inference-compatible}
Let $\mathcal I \models \Phi_n^\mathit{RM} \land \Phi_n^\Advice$ and $\machine_\mathcal I$ the reward machine defined above.
Then, $\machine_\mathcal I$ is compatible with $\Advice$ (i.e., $\machine_\mathcal I(\labelSequence{k}) = \rewardSequence{k}$ and $r_k > 0$ implies $\labelSequence{k} \in L(\Advice)$ for every nonempty label sequence $\labelSequence{k}$).
\end{lemma}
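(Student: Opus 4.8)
The plan is to mirror the structure of the proof of Lemma~\ref{lem:RM-inference-consistent}: first establish by induction a synchronization invariant that characterizes the auxiliary variables $y^{\Advice}_{\mealyStateStyle{p},\mealyStateStyle{p'}}$ as tracking the parallel (product) run of $\machine_\mathcal I$ and $\Advice$, and then use the remaining constraint, Formula~\eqref{for:compatible-3}, to rule out a positive terminal reward on any label sequence that the DFA rejects. Concretely, I would prove the following claim by induction on $\lvert \lambda \rvert$: for every $\lambda \in (2^{\mathcal P})^{\ast}$, if $\machine_\mathcal I \colon \mealyInit \run{\lambda} \mealyStateStyle{p}$ and $\Advice \colon \fsm{q}_{I,\Advice} \run{\lambda} \mealyStateStyle{p'}$, then $\mathcal I(y^{\Advice}_{\mealyStateStyle{p},\mealyStateStyle{p'}}) = 1$. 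Here I use that both $\machine_\mathcal I$ (via $\Phi^{\mathit{RM}}_n$) and the DFA $\Advice$ are deterministic, so the two runs on $\lambda$ are unique and the pair $(\mealyStateStyle{p},\mealyStateStyle{p'})$ is well defined.

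For the base case $\lambda = \varepsilon$, both machines rest in their initial states, and Formula~\eqref{for:compatible-1} directly asserts $\mathcal I(y^{\Advice}_{\mealyInit,\fsm{q}_{I,\Advice}}) = 1$. For the induction step, write $\lambda = \lambda' \mealyCommonInput$ and let $\machine_\mathcal I \colon \mealyInit \run{\lambda'} \mealyStateStyle{p} \run{\mealyCommonInput} \mealyStateStyle{q}$ and $\Advice \colon \fsm{q}_{I,\Advice} \run{\lambda'} \mealyStateStyle{p'} \run{\mealyCommonInput} \mealyStateStyle{q'}$. By the induction hypothesis $\mathcal I(y^{\Advice}_{\mealyStateStyle{p},\mealyStateStyle{p'}}) = 1$; the last reward-machine transition forces $\mathcal I(d_{\mealyStateStyle{p},\mealyCommonInput,\mealyStateStyle{q}}) = 1$ by construction of $\machine_\mathcal I$; and since $\mealyTransition_\Advice(\mealyStateStyle{p'}, \mealyCommonInput) = \mealyStateStyle{q'}$, Formula~\eqref{for:compatible-2} yields $\mathcal I(y^{\Advice}_{\mealyStateStyle{q},\mealyStateStyle{q'}}) = 1$, completing the induction.

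With the invariant in hand, I would argue compatibility by contradiction. Suppose $\machine_\mathcal I(\labelSequence{k}) = \rewardSequence{k}$ with $r_k > 0$ but $\labelSequence{k} \notin \Language(\Advice)$. Consider the runs on the prefix $\labelSequence{k-1}$: let $\machine_\mathcal I \colon \mealyInit \run{\labelSequence{k-1}} \mealyStateStyle{p}$ and $\Advice \colon \fsm{q}_{I,\Advice} \run{\labelSequence{k-1}} \mealyStateStyle{p'}$, so the invariant gives $\mathcal I(y^{\Advice}_{\mealyStateStyle{p},\mealyStateStyle{p'}}) = 1$. Reading the final label $\mealyCommonInput = l_k$ takes $\Advice$ to $\mealyStateStyle{q'} = \mealyTransition_\Advice(\mealyStateStyle{p'}, l_k)$, and since $\labelSequence{k}$ is rejected, $\mealyStateStyle{q'} \notin \fsm{F}_\Advice$. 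Because $\machine_\mathcal I$ emits $r_k$ on this last step, the construction gives $\mathcal I(o_{\mealyStateStyle{p}, l_k, r_k}) = 1$. But Formula~\eqref{for:compatible-3}, instantiated at $(\mealyStateStyle{p},\mealyStateStyle{p'})$ with $\mealyCommonInput = l_k$ and $\mealyStateStyle{q'} \notin \fsm{F}_\Advice$, forces $\mathcal I(o_{\mealyStateStyle{p}, l_k, r}) = 0$ for every $r > 0$, contradicting $r_k > 0$. Hence $\labelSequence{k} \in \Language(\Advice)$, which is exactly compatibility.

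The main obstacle is the bookkeeping in the synchronization invariant: I must state the intended semantics of $y^{\Advice}_{\mealyStateStyle{p},\mealyStateStyle{p'}}$ so that only the forward direction (every reachable synchronized pair is marked true) is needed, since the contradiction argument uses no converse. A second point of care is the index alignment between the reward-machine output and the DFA transition on the \emph{last} label: the compatibility definition concerns the terminal reward $r_k$, so the invariant must be applied to the prefix $\labelSequence{k-1}$ rather than to $\labelSequence{k}$, and Formula~\eqref{for:compatible-3} must be read as constraining the output at the source state $\mealyStateStyle{p}$ of that last transition. Everything else reduces to the same deterministic-run reasoning already used for $\Phi^X_n$ in Lemma~\ref{lem:RM-inference-consistent}.
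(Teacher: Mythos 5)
Your proposal is correct and follows essentially the same route as the paper's proof: the same inductive synchronization invariant on the $y^{\Advice}$ variables (base case via Formula~\eqref{for:compatible-1}, step via Formula~\eqref{for:compatible-2} together with the forced $d$-variable), followed by the same contradiction argument applying the invariant to the prefix $l_1 \ldots l_{k-1}$ and invoking Formula~\eqref{for:compatible-3} at the source state of the last transition. The only cosmetic difference is at the very end, where the paper detours through the well-definedness constraint~\eqref{for:well-defined-2} to conclude the emitted reward is non-positive, whereas you contradict Formula~\eqref{for:compatible-3} directly from $\mathcal I(o_{\mealyStateStyle{p}, l_k, r_k}) = 1$; both are equivalent.
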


\begin{proof}
Let $\mathcal I \models \Phi_n^\mathit{RM} \land \Phi_n^\Advice$ and $\machine_\mathcal I$ the reward machine defined above.
We first show that $\machine_\mathcal I \colon \mealyInit \run{\lambda} \mealyStateStyle{q}$ and $\Advice \colon \fsm{q_{I, \Advice}} \run{\lambda} \fsm{q'}$ imply $\mathcal I(y_{\mealyStateStyle{q}, \fsm{q'}}^\Advice) = 1$ for all label sequences $\lambda \in \mealyInputAlphabet^\ast$.
The proof of this claim proceeds by induction of the length of label sequences, similar to Part~\ref{lem:RM-inference-consistent:b} in the proof of Lemma~\ref{lem:RM-inference-consistent}.

\begin{description}
	\item[Base case:]
	Let $\lambda = \varepsilon$.
	By definition of runs, the only runs on the empty label sequence are $\machine_\mathcal I \colon \mealyInit \run{\varepsilon} \mealyInit$ and $\Advice \colon \fsm{q_{I, \Advice}} \run{\varepsilon} \fsm{q_{I, \Advice}}$.
	Moreover, Formula~\eqref{for:compatible-1} ensures that $\mathcal I(y_{\fsm{q_I}, \fsm{q_{I, \Advice}}}^\Advice) = 1$, which proves the claim.
	\item[Induction step:]
	Let $\lambda = \lambda^{\prime \prime} l$.
	Moreover, let $\machine_\mathcal I \colon \fsm{q_I} \run{\lambda^{\prime \prime}} \mealyStateStyle{p} \run{l} \mealyStateStyle{q}$ and $\Advice \colon \fsm{q_{I, \Advice}} \run{\lambda^{\prime \prime}} \fsm{p'} \run{l} \fsm{q'}$ be the runs of $\machine_\mathcal I$ and $\Advice$ on $\lambda = \lambda^{\prime \prime} l$, respectively.
	By induction hypothesis, we then know that $\mathcal I(y_{\mealyStateStyle{p}, \fsm{p'}}^\Advice) = 1$.
	Moreover, $\machine_\mathcal I$ contains the transition $\delta(\mealyStateStyle{p}, l) = \mealyStateStyle{q}$ because this transition was used in the last step of the run of $\machine_\mathcal I$ on $\lambda$.
	By construction of $\machine_\mathcal I$, this can only be the case if $\mathcal I(d_{\mealyStateStyle{p}, l, \mealyStateStyle{q}}) = 1$.
	In this situation, Formula~\ref{for:compatible-2} ensures $\mathcal I(y_{\mealyStateStyle{q}, \fsm{q'}}^\Advice) = 1$ (since also $\delta_\Advice(\fsm{p'}, l) = \fsm{q'}$), which proves the claim. 
\end{description}

Let now $\lambda =\labelSequence{k}$ be a nonempty label sequence (i.e., $k \geq 1$).
Moreover, let $\machine_\mathcal I \colon \fsm{q_I} \run{l_1 \ldots l_{k-1}} \mealyStateStyle{p} \run{l_k} \mealyStateStyle{q}$ be the run of $\machine_\mathcal I$ on $\lambda$ and $\Advice \colon \fsm{q_{I, \Advice}} \run{l_1 \ldots l_{k-1}} \fsm{p'} \run{l_k} \fsm{q'}$ the run of $\Advice$ on $\lambda$.
Our induction shows that $\mathcal I(y_{\mealyStateStyle{p}, \fsm{p'}}) = 1$ holds in this case.

Towards a contradiction, assume that $\machine_\mathcal I(\labelSequence{k}) = \rewardSequence{k}$, $r_k > 0$, and $\labelSequence{k} \notin L(\Advice)$.
In particular, this means $\fsm{q'} \notin \fsm{F}_\Advice$.
Since $\delta_\Advice(\fsm{p'}, l_k) = \fsm{q'}$ (which was used in the last step in the run of $\Advice$ on $\labelSequence{k}$) and $\mathcal I(y_{\mealyStateStyle{p}, \fsm{p'}}^\Advice) = 1$ (due to the induction above), Formula~\eqref{for:compatible-3} ensures that $\mathcal I(o_{\mealyStateStyle{p}, l, r}) = 0$ for all $r \in R_X$ with $r > 0$.
However, Formula~\eqref{for:well-defined-2} ensures that there is exactly one $r \in R_X$ with $\mathcal I(o_{\mealyStateStyle{p}, l, r}) = 1$.
Thus, there has to exist an $r \in R_X$ such that $r \leq 0$ and $\mathcal I(o_{\mealyStateStyle{p}, l, r}) = 1$.
By construction of $\machine_\mathcal I$, this means that the last output $r_k$ of $\machine_\mathcal I$ on reading $l_k$ must have been $r_k \leq 0$.
However, our assumption was $r_k > 0$, which is a contradiction.
Thus, $\machine_\mathcal I$ is compatible with the LLM-generated DFA $\Advice$.
\end{proof}

We can now prove Theorem~\ref{thm:RM-inference-correct} (i.e., the correctness of our SAT-based learning algorithm for reward machines).

\begin{proof}[Proof of Theorem~\ref{thm:RM-inference-correct}]
The proof of Part~\ref{thm:RM-inference-correct:1} follows immediately from Lemma~\ref{lem:RM-inference-consistent} and Lemma~\ref{lem:RM-inference-compatible}.

To prove Part~\ref{thm:RM-inference-correct:2}, let $\machine = (\mealyStates, \mealyInit, \mealyInputAlphabet, \mealyOutputAlphabet, \mealyTransition, \mealyOutput)$ be a reward machine with $n$ states that is consistent with $X$ and compatible with each $\Advice \in \mathscr{D}$.
From this reward machine, we can derive a valuation $\mathcal I$ for the variables $d_{\mealyStateStyle{p}, \mealyCommonInput, \mealyStateStyle{q}}$ and $o_{\mealyStateStyle{p}, \mealyCommonInput, r}$ in a straightforward way (e.g., setting $\mathcal I(d_{\mealyStateStyle{p}, \mealyCommonInput, \mealyStateStyle{q}}) = 1$ if and only is $\mealyTransition(\mealyStateStyle{p}, \mealyCommonInput) = \mealyStateStyle{q}$).
Moreover, we obtain a valuation for the variables $x_{\lambda, \mealyStateStyle{p}}$ from the runs of (prefixes) of traces in the sample $X$, and valuations for the variables $y_{\mealyStateStyle{p}, \fsm{p'}}^\Advice$ from the synchronized runs of $\machine$ and $\Advice$ for each $\Advice \in \mathscr{D}$.
Then, $\mathcal I$ indeed satisfies $\Phi_n^{X, D}$.
\end{proof}

\section{Convergence to an Optimal Policy}
In this section, we prove that \algname{} almost surely converges to an optimal policy in the limit.
We begin by defining attainable trajectories---trajectories that can possibly appear in the exploration of an agent.

\begin{definition}
Let 

$\mdp = (\mdpStates, \mdpInit, \mdpActions, \mdpProb, \mdpRewardFunction, \mdpDiscount, \rmLabels, \rmLabelingFunction)$ be a labeled MDP and $m \in \mathbb N$ a natural number.
A trajectory $\zeta = \trajectory{k} \in (S \times A)^\ast \times S$ is said to be \emph{$m$-attainable} if $k \leq m$ and $\mdpProb(s_{i-1}, a_i, s_i) > 0$ for each $i \in \{ 1, \ldots, k \}$.
Moreover, a trajectory $\zeta$ is called \emph{attainable} if there exists an $m \in \mathbb N$ such that $\zeta $ is $m$-attainable.
Analogously, we call a label sequence $\lambda = l_1 \ldots l_k$ \emph{($m$-)attainable} if there exists an ($m$-)attainable trajectory $s_0 a_1 s_1 \ldots s_{k-1} a_k s_k$ such that $l_i = L(s_{i-1}, a_i, s_i)$ for each $i \in \{ 1, \ldots, k \}$
\end{definition}

An induction shows that \algname\ almost surely explores every attainable trajectory in the limit 
(i.e., with probability 1 when the number of episodes goes to infinity.
We refer the reader to the extended version of \cite{DBLP:conf/aips/0005GAMNT020}\footnote{The extended version of \cite{DBLP:conf/aips/0005GAMNT020} can be found on arXiv at \url{https://arxiv.org/abs/1909.05912}} for a detailed proof.

\begin{lemma} \label{lem:full-exploration-trajectories}
Given $m \in \mathbb N$, \algname\ with $\eplen \geq m$ almost surely explores every $m$-attainable trajectory in the limit.
\end{lemma}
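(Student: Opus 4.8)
The goal is to prove Lemma~\ref{lem:full-exploration-trajectories}: that \algname{} with $\eplen \ge m$ almost surely explores every $m$-attainable trajectory in the limit. The plan is to proceed by induction on the length $k$ of the $m$-attainable trajectory $\zeta = \trajectory{k}$, leveraging the fact that the agent follows an $\epsilon$-greedy exploration policy and that each episode runs for at least $m$ steps, so that every trajectory of length $k \le m$ fits within a single episode.

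First I would fix an arbitrary $m$-attainable trajectory $\zeta = s_0 a_1 s_1 \ldots a_k s_k$ with $k \le m$ and, by the definition of $m$-attainability, note that $\mdpProb(s_{i-1}, a_i, s_i) > 0$ for each $i \in \{1, \ldots, k\}$. The base case $k = 0$ is trivial, since the trajectory consisting of just the initial state $s_0 = \mdpInit$ is explored in every episode. For the inductive step, I would assume that the prefix $s_0 a_1 s_1 \ldots a_{k-1} s_{k-1}$ is almost surely explored infinitely often in the limit, and then argue that, conditioned on the agent being in state $s_{k-1}$ at step $k-1$ of some episode, there is a strictly positive probability of extending this to the full trajectory $\zeta$ in that same episode. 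This extension probability is the product of two positive quantities: the probability that the $\epsilon$-greedy policy selects action $a_k$ at state $s_{k-1}$ (which is at least $\epsilon / |A| > 0$), and the transition probability $\mdpProb(s_{k-1}, a_k, s_k) > 0$. Because $\eplen \ge m \ge k$, the episode is long enough that step $k$ is still within bounds.

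The crux is a Borel--Cantelli-style argument: since the prefix is visited infinitely often (by the induction hypothesis), and each such visit independently yields the full extension to $\zeta$ with probability bounded below by a fixed positive constant $p^\star = (\epsilon/|A|) \cdot \mdpProb(s_{k-1}, a_k, s_k) > 0$, the second Borel--Cantelli lemma (applied to the conditionally independent sequence of extension attempts) guarantees that $\zeta$ is explored infinitely often with probability $1$. Summing over the countably many $m$-attainable trajectories and using that a countable union of probability-$1$ events still has probability $1$ completes the argument.

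The main obstacle I anticipate is making the conditional-independence structure rigorous enough to invoke Borel--Cantelli. The events ``the prefix is visited in episode $t$'' are not literally independent across episodes, since the $\epsilon$-greedy policy changes as the $q$-values update; what is needed is the extension conditional on reaching $s_{k-1}$, which is bounded below by $p^\star$ \emph{uniformly} regardless of the current policy, precisely because the uniform exploration floor $\epsilon/|A|$ does not depend on the learned values. I would therefore phrase the argument in terms of a filtration and the conditional second Borel--Cantelli lemma (L\'evy's extension), so that the uniform lower bound $p^\star$ on the conditional extension probability, combined with infinitely many prefix visits, forces infinitely many full visits almost surely. As the excerpt notes, the detailed version of this exploration argument already appears in the extended version of~\cite{DBLP:conf/aips/0005GAMNT020}, so I would cite that for the measure-theoretic bookkeeping and focus the exposition on verifying that \algname{}'s episode length and exploration scheme satisfy its hypotheses.
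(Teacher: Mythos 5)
Your proposal is correct and matches the paper's approach: the paper itself offers no self-contained proof of this lemma, stating only that ``an induction shows'' the claim and deferring entirely to the extended version of \cite{DBLP:conf/aips/0005GAMNT020}, and your induction on trajectory length with the uniform $\epsilon/|A|$ exploration floor and a conditional Borel--Cantelli (L\'evy extension) argument is exactly the standard argument that reference supplies. The one point worth making explicit is that the induction hypothesis must assert the prefix is realized as the \emph{initial segment of an episode} (which holds since every episode starts at $s_I$ and $\eplen \ge m \ge k$), so that step $k$ is still available for the extension---which you do address.
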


As an immediate consequence of Lemma~\ref{lem:full-exploration-trajectories}, we obtain that \algname\ almost sure explores every \hbox{($m$-)}attainable label sequence in the limit as well.

\begin{corollary} \label{cor:full-exploration-label-sequences}
Given $m \in \mathbb N$, \algname\ with $\eplen \geq m$ almost surely explores every $m$-attainable label sequence in the limit.
\end{corollary}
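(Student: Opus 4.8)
The plan is to read the corollary off directly from Lemma~\ref{lem:full-exploration-trajectories}, using the definition that ties $m$-attainable label sequences to $m$-attainable trajectories. Recall that a label sequence $\lambda = l_1 \ldots l_k$ is declared $m$-attainable exactly when there is some $m$-attainable trajectory $\zeta = \trajectory{k}$ whose induced labels satisfy $l_i = L(s_{i-1}, a_i, s_i)$ for all $i \in \{1, \ldots, k\}$. In other words, the $m$-attainable label sequences are precisely the label sequences induced by the $m$-attainable trajectories, so nothing more than transporting the exploration guarantee across the labeling map is required.

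First I would fix an arbitrary $m$-attainable label sequence $\lambda$ and, by the above definition, choose a witnessing $m$-attainable trajectory $\zeta_\lambda$ whose induced label sequence is $\lambda$. Lemma~\ref{lem:full-exploration-trajectories} tells us that \algname{} with $\eplen \ge m$ almost surely explores $\zeta_\lambda$ in the limit. Since whenever the agent traverses $\zeta_\lambda$ it necessarily observes the labels $L(s_{i-1}, a_i, s_i)$ along its transitions, exploring $\zeta_\lambda$ entails exploring $\lambda$. Hence the event ``$\lambda$ is explored in the limit'' contains the probability-one event ``$\zeta_\lambda$ is explored in the limit,'' and therefore has probability one as well.

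The only step needing care is upgrading this from ``almost surely for each fixed $\lambda$'' to ``almost surely for all $\lambda$ at once.'' Here I would appeal to finiteness: as $S$ and $A$ are finite and every $m$-attainable trajectory has length $k \le m$, there are only finitely many $m$-attainable trajectories and hence only finitely many $m$-attainable label sequences. A finite intersection of probability-one events still has probability one, so \algname{} almost surely explores every $m$-attainable label sequence in the limit. I expect no real obstacle here; this finiteness observation is the sole subtlety, and it is what makes the corollary genuinely immediate rather than requiring a union-bound argument over infinitely many events.
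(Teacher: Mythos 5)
Your proposal is correct and matches the paper's reasoning: the paper treats this corollary as an immediate consequence of Lemma~\ref{lem:full-exploration-trajectories}, relying implicitly on exactly the observation you spell out, namely that every $m$-attainable label sequence is induced by some $m$-attainable trajectory via the labeling function. Your additional remark about the finite intersection of probability-one events is a correct (and worthwhile) elaboration of a step the paper leaves tacit.
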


In this paper, we assume that 
all possible label sequences are attainable.
We relax this restriction later as in \cite{neider2021advice} .

To prove Lemma~\ref{lem:learn-correct-RM}, we show a series of intermediate results.
We begin by reminding the reader of a well-known fact from automata theory, which can be found in most textbooks~\cite{DBLP:books/daglib/0025557}.

\begin{theorem} \label{thm:symmetric-difference}
Let $\Automaton_1$ and $\Automaton_2$ be two DFAs with $L(\Automaton_1) \neq L(\Automaton_2)$.
Then, there exists a word $w$ of length at most $|\Automaton_1| + |\Automaton_2| - 1$ such that $w \in L(\Automaton_1)$ if and only if $w \notin L(\Automaton_2)$.
\end{theorem}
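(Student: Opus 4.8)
The plan is to reduce the claim to a statement about distinguishing two inequivalent states inside a \emph{single} finite automaton, for which the length bound $N-1$ (with $N$ the number of states) is standard. Concretely, I would first form the disjoint union $\mathcal C$ of $\Automaton_1$ and $\Automaton_2$: place both transition structures side by side over the common alphabet $2^{\mathcal P}$, and take as the accepting set of $\mathcal C$ the union of the two accepting sets. Then $\mathcal C$ is a deterministic transition system with exactly $|\Automaton_1| + |\Automaton_2|$ states, and reading a word $w$ from the initial state $q_{I,1}$ of $\Automaton_1$ (resp.\ $q_{I,2}$ of $\Automaton_2$) inside $\mathcal C$ reaches an accepting state precisely when $w \in L(\Automaton_1)$ (resp.\ $w \in L(\Automaton_2)$). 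Hence $L(\Automaton_1) \neq L(\Automaton_2)$ is exactly the statement that $q_{I,1}$ and $q_{I,2}$ are \emph{inequivalent} states of $\mathcal C$, i.e., some word is accepted from one but not the other.

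The core step is then to bound the length of a shortest such distinguishing word by $N - 1$, where $N = |\Automaton_1| + |\Automaton_2|$. I would do this by the partition-refinement (Myhill--Nerode) argument: define $p \equiv_k q$ to mean that $p$ and $q$ agree on acceptance for every input word of length at most $k$. The relation $\equiv_0$ separates accepting from non-accepting states, each $\equiv_{k+1}$ refines $\equiv_k$, and once $\equiv_{k+1} = \equiv_k$ the chain has stabilized to full state equivalence. Since every strict refinement increases the number of classes by at least one and there are at most $N$ classes, the chain stabilizes after at most $N - 1$ refinement steps; consequently $\equiv_{N-1}$ already coincides with full equivalence. Applying this to the inequivalent pair $q_{I,1}, q_{I,2}$ yields a word $w$ of length at most $N - 1 = |\Automaton_1| + |\Automaton_2| - 1$ that is accepted from exactly one of the two, which is precisely the desired $w$ satisfying $w \in L(\Automaton_1)$ if and only if $w \notin L(\Automaton_2)$.

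I expect the main obstacle to be getting the constant in the bound exactly right, rather than the construction itself. The tempting alternative---running $\Automaton_1$ and $\Automaton_2$ together in a \emph{product} automaton and taking a shortest path to a state where the two acceptance bits disagree---only yields the weaker bound $|\Automaton_1| \cdot |\Automaton_2| - 1$, so the additive bound $|\Automaton_1| + |\Automaton_2| - 1$ genuinely relies on the disjoint-union viewpoint together with the ``stabilization in $N-1$ steps'' estimate. The delicate points to verify are (i) that the refinement chain can grow strictly at most $N-1$ times, since the number of classes starts at one or two and is capped by $N$, and (ii) the boundary cases, e.g.\ when the two initial states already disagree on acceptance, so that the empty word $\varepsilon$ distinguishes them and the bound holds trivially. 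Because the statement is a textbook fact, I would cite the standard partition-refinement result rather than reprove it in full and simply instantiate it on $\mathcal C$.
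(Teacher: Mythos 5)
Your proposal is correct. Note, however, that the paper does not prove this statement at all: it is explicitly introduced as ``a well-known fact from automata theory'' and dispatched with a textbook citation, so there is no in-paper argument to compare against. What you supply is the standard proof that the citation points to: form the disjoint union $\mathcal C$ of $\Automaton_1$ and $\Automaton_2$ with $N=|\Automaton_1|+|\Automaton_2|$ states, observe that $L(\Automaton_1)\neq L(\Automaton_2)$ means the two initial states are inequivalent in $\mathcal C$, and invoke the partition-refinement fact that inequivalent states of an $N$-state deterministic transition system are separated by a word of length at most $N-1$. Your accounting of the refinement chain is sound (at most $N-1$ strict refinements before the class count hits $N$), and your remark that the product construction would only give the multiplicative bound $|\Automaton_1|\cdot|\Automaton_2|-1$ correctly identifies why the disjoint-union view is the right one here. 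One small observation: when the languages differ, $\equiv_0$ already has two classes (some state of $\mathcal C$ is accepting and some is not, else both languages are $\Sigma^\ast$ or both empty), so the chain in fact stabilizes by step $N-2$ and the theorem's bound $N-1$ is not tight --- but it is of course still valid, and your argument establishes it.
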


We now observe that every reward machine $\machine$ can be translated into a DFA $\mathfrak A_\machine$ that is ``equivalent'' to the reward machine \cite{DBLP:conf/aips/0005GAMNT020}.
This DFA operates over the combined alphabet $2^\mathcal P \times \mealyOutputAlphabet$ and accepts a sequence $(\ell_1, r_1) \ldots (\ell_k,r_k)$ if and only if $\machine$ outputs the reward sequence $\rewardSequence{k}$ on reading the label sequence $\labelSequence{k}$.

\begin{lemma}[\cite{DBLP:conf/aips/0005GAMNT020}] \label{lem:RM-to-DFA}
Given a reward machine $\machine = (\mealyStates, \mealyInit, \mealyInputAlphabet, \mealyOutputAlphabet, \mealyTransition, \mealyOutput)$, one can construct a DFA $\Automaton_\machine$ with $|\machine| + 1$ states such that $L(\Automaton_\machine) = \{ (l_1, r_1) \ldots (l_k, r_k) \in (2^\mathcal P \times \mathbb R) \mid \machine(l_1 \ldots l_k) = r_1 \ldots r_k \}$.
\end{lemma}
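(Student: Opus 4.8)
The plan is to give an explicit simulation construction that augments the reward machine with a single rejecting sink state. Concretely, I would set
$$\Automaton_\machine = \langle \mealyStates \cup \{\bot\},\ \mealyInit,\ \mealyInputAlphabet \times \mealyOutputAlphabet,\ \delta',\ \mealyStates \rangle,$$
where $\bot \notin \mealyStates$ is a fresh state, every original state of $\machine$ is accepting, and the transition function checks at each step whether the reward symbol being read agrees with the output the reward machine would emit:
$$\delta'(v, (\mealyCommonInput, r)) = \begin{cases} \mealyTransition(v, \mealyCommonInput) & \text{if } v \in \mealyStates \text{ and } \mealyOutput(v, \mealyCommonInput) = r,\\ \bot & \text{otherwise.}\end{cases}$$
Since $\mealyOutputAlphabet$ is finite, this is a genuine DFA over a finite alphabet; any reward symbol outside $\mealyOutputAlphabet$ simply leads to $\bot$ and can never appear in an accepted word. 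Observe that $|\Automaton_\machine| = |\mealyStates| + 1 = |\machine| + 1$, which already accounts for the claimed state count, the extra state being exactly the sink.

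Next I would establish correctness by induction on the length $k$ of a combined word $w = (l_1, r_1)\ldots(l_k, r_k)$. The invariant to maintain is: the (unique) run of $\Automaton_\machine$ on $w$ ends in a state $v \in \mealyStates$ if and only if $\machine \colon \mealyInit \run{\labelSequence{k}} v$ and $\machine(\labelSequence{k}) = \rewardSequence{k}$; otherwise the run ends in $\bot$, and once $\Automaton_\machine$ enters $\bot$ it remains there under every continuation. The base case $k = 0$ is immediate, since both machines sit in $\mealyInit$ and the empty reward sequence is trivially consistent. For the step I would append $(l_{k+1}, r_{k+1})$ to a word ending, by the induction hypothesis, in state $v \in \mealyStates$ with $\machine(\labelSequence{k}) = \rewardSequence{k}$: by the definition of $\delta'$ the DFA moves to $\mealyTransition(v, l_{k+1})$ precisely when $\mealyOutput(v, l_{k+1}) = r_{k+1}$, i.e.\ precisely when appending the pair keeps the reward sequence consistent with $\machine$, and it falls into the absorbing sink otherwise.

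Finally, I would read off the language. Because the accepting set is exactly $\mealyStates$ while $\bot$ is absorbing and non-accepting, $\Automaton_\machine$ accepts $w$ if and only if its run never visits $\bot$, which by the invariant is equivalent to $\machine(\labelSequence{k}) = \rewardSequence{k}$. Hence $L(\Automaton_\machine) = \{ (l_1,r_1)\ldots(l_k,r_k) \mid \machine(\labelSequence{k}) = \rewardSequence{k} \}$, as required. The only genuinely delicate points are the off-by-one in the state count---the ``$+1$'' is precisely the rejecting sink needed to discard words whose reward annotations deviate from $\machine$'s outputs---and verifying that this sink is truly absorbing, so that a single mismatch irrevocably rejects the whole word; both are handled by the ``otherwise'' branch of $\delta'$ together with the induction invariant.
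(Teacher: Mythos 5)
Your construction is exactly the one the paper uses: adjoin a single non-accepting sink $\bot$, make every original state accepting, and transition via $\mealyTransition$ only when the read reward matches $\mealyOutput$'s output, falling into the absorbing sink otherwise. The paper leaves the induction as ``straightforward,'' so your explicit invariant and correctness argument simply fill in the same details; the proposal is correct and takes essentially the same approach.
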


\begin{proof}
Let $\machine= (\fsm{Q}_\machine, \fsm{q_{I, A}}, 2^\mathcal P, \mathbb R, \delta_\machine, \sigma_\machine)$ be a reward machine. 
Then, we define a DFA $\mathfrak A_\machine = (\fsm{Q}, \fsm{q_I}, 2^\mathcal P \times \mathbb R, \delta, F)$ by
\begin{itemize}
	\item $\fsm{Q} = \fsm{Q_A} \cup \{ \bot \}$, where $\bot \notin \fsm{Q_A}$;
	\item $\fsm{q_I} = \fsm{q_{I, A}}$;
	\item $\delta \bigl( \mealyStateStyle{p}, (\ell,r) \bigr) = \begin{cases} \mealyStateStyle{q} & \text{if $\delta_\machine(\mealyStateStyle{p}, \ell) = \mealyStateStyle{q}$ and $\sigma_\machine (\mealyStateStyle{p}, \ell) = r$;} \\ \bot & \text{otherwise}\end{cases}$
	\item $\fsm{F}_\machine = \fsm{Q_A}$.
\end{itemize}

In this definition, $\bot$ is a new sink state to which $\mathfrak A_\machine$ moves if its input does not correspond to a valid input-output pair produced by $\machine$.
A straightforward induction over the length of inputs to $\mathfrak A_\machine$ shows that it indeed accepts the desired language.
In total, $\mathfrak A_\machine$ has $|\machine|+ 1$ states.
\end{proof}

Next, we show that if two reward machines are semantically different, then we can bound the length of a trace witnessing this fact.

\begin{lemma} \label{lem:short-witness-to-non-consistency}
Let $\machine_1$ and $\machine_2$ be two reward machines.
If $\machine_1 \neq \machine_2$, then there exists a label sequence $\lambda$ of length at most $|\machine_1| + |\machine_2| + 1$ such that $\machine_1(\lambda) \neq \machine_2(\lambda)$.
\end{lemma}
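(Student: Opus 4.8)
The plan is to reduce this statement about reward machines to the corresponding statement about DFAs, for which we already have the symmetric-difference bound in Theorem~\ref{thm:symmetric-difference}. The bridge is Lemma~\ref{lem:RM-to-DFA}, which converts each reward machine $\machine_i$ into an equivalent DFA $\Automaton_{\machine_i}$ over the combined alphabet $2^\mathcal P \times \mathbb R$ with exactly $|\machine_i| + 1$ states. The key observation I would exploit is that $\machine_1 \neq \machine_2$ as reward machines precisely means that there is some label sequence on which they produce different reward sequences, and this in turn forces the two associated DFA languages to differ.

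First I would argue that $\machine_1 \neq \machine_2$ implies $L(\Automaton_{\machine_1}) \neq L(\Automaton_{\machine_2})$. Since the machines disagree, there is a label sequence $\labelSequence{k}$ on which, say, $\machine_1$ outputs $\rewardSequence{k}$ but $\machine_2$ outputs a different reward sequence $r_1' \ldots r_k'$. By the defining property of the construction, the combined word $(l_1, r_1) \ldots (l_k, r_k)$ lies in $L(\Automaton_{\machine_1})$, whereas it cannot lie in $L(\Automaton_{\machine_2})$ because $\machine_2$ does not produce exactly that reward sequence on $\labelSequence{k}$. Hence the two DFA languages are genuinely distinct. (One subtlety: if the two reward sequences have the same length $k$ but differ in some coordinate, the argument is immediate; I would make sure the definition of reward-machine inequality is pinned down so that this case covers everything, which it does since both machines read the same finite label sequence and emit one reward per symbol.)

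Next I would apply Theorem~\ref{thm:symmetric-difference} to the pair $\Automaton_{\machine_1}, \Automaton_{\machine_2}$. It yields a distinguishing word $w$ over $2^\mathcal P \times \mathbb R$ of length at most $|\Automaton_{\machine_1}| + |\Automaton_{\machine_2}| - 1 = (|\machine_1| + 1) + (|\machine_2| + 1) - 1 = |\machine_1| + |\machine_2| + 1$, which lies in exactly one of the two languages. Writing $w = (l_1, r_1) \ldots (l_k, r_k)$ and projecting onto its first components gives a label sequence $\lambda = l_1 \ldots l_k$ of the same length $k \le |\machine_1| + |\machine_2| + 1$. Because $w$ is accepted by exactly one of $\Automaton_{\machine_1}$ and $\Automaton_{\machine_2}$, at most one of the machines outputs the reward sequence $r_1 \ldots r_k$ on $\lambda$; by the equivalence in Lemma~\ref{lem:RM-to-DFA} this means $\machine_1(\lambda) \neq \machine_2(\lambda)$, which is exactly the claimed witness.

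The main obstacle I anticipate is the final projection step rather than the automata-theoretic counting. When $w$ is distinguishing we know one machine emits $r_1 \ldots r_k$ on $\lambda$ and the other does not, so the two machines assign different output sequences to $\lambda$ and are therefore semantically different on $\lambda$; but I should be careful to confirm that the length bound transfers cleanly under projection (it does, since projection preserves length) and that the combined-alphabet reasoning does not secretly require $w \in L(\Automaton_{\machine_1}) \cap \overline{L(\Automaton_{\machine_2})}$ as opposed to the symmetric case. Both symmetric cases yield $\machine_1(\lambda) \neq \machine_2(\lambda)$, so the direction of the disagreement is irrelevant, and the bound $|\machine_1| + |\machine_2| + 1$ follows immediately from the state counts in Lemma~\ref{lem:RM-to-DFA}.
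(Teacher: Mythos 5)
Your proposal is correct and follows essentially the same route as the paper's own proof: convert both reward machines to DFAs over $2^{\mathcal P} \times \mathbb R$ via Lemma~\ref{lem:RM-to-DFA}, apply Theorem~\ref{thm:symmetric-difference} to the resulting $(|\machine_1|+1)$- and $(|\machine_2|+1)$-state automata, and project the distinguishing word onto its label components. The extra care you take in justifying $L(\Automaton_{\machine_1}) \neq L(\Automaton_{\machine_2})$ and the projection step is sound but only elaborates what the paper leaves implicit.
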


\begin{proof}[Proof of Lemma~\ref{lem:short-witness-to-non-consistency}]
Consider the DFAs $\Automaton_{\machine_1}$ and $\Automaton_{\machine_2}$ obtained from Lemma~\ref{lem:RM-to-DFA}.
These DFAs have $|\machine_1| + 1$ and $|\machine_2| + 1$ states, respectively.
If $\machine_1 \neq \machine_2$, then $L(\Automaton_{\machine_1}) \neq L(\Automaton_{\machine_2})$.
Thus, applying Theorem~\ref{thm:symmetric-difference} yields a sequence $w = (l_1, r_1) \ldots (l_k, r_k)$ with
\[ k \leq |\machine_1| + 1 + |\machine_2| + 1 - 1 = |\machine_1| + |\machine_2| + 1 \]
such that $w \in L(\Automaton_{\machine_1})$ if and only if $w \notin L(\Automaton_{\machine_2})$.
By Lemma~\ref{lem:RM-to-DFA}, the label sequence $\lambda = l_1 \ldots l_k$ has the desired property.
\end{proof}

Similar to Lemma~\ref{lem:short-witness-to-non-consistency}, we can show that if an LLM-generated DFA is not compatible with a reward machine, we can also bound the length of a label sequence witnessing this fact.

\begin{lemma} \label{lem:short-witness-to-non-compatibility}
Let $\machine$ be a reward machine and $\Advice$ an LLM-generated DFA.
If $\Advice$ is not compatible with $\machine$, then there exists a label sequence $l_1 \ldots l_k$ with $k \leq 2(|\machine| + 1) \cdot |\Advice|$ such that $\machine(l_1 \ldots l_k) = r_1 \ldots r_k$, $r_k > 0$, and $l_1 \ldots l_k \notin L(\Advice)$.
\end{lemma}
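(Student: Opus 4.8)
The plan is to reduce the existence of a short incompatibility witness to a reachability (nonemptiness) question in a suitable product automaton, mirroring the strategy behind Lemma~\ref{lem:short-witness-to-non-consistency} but accounting for the fact that compatibility is a one-sided inclusion condition rather than an equality of languages. The key observation is that, unwinding the definition of compatibility, $\Advice$ fails to be compatible with $\machine$ exactly when the language $L_{\text{pos}} = \{\lambda \in (2^{\mathcal P})^+ : \machine(\lambda) = r_1 \ldots r_k \text{ with } r_k > 0\}$ is not contained in $L(\Advice)$; that is, when $L_{\text{pos}} \cap \overline{L(\Advice)} \neq \emptyset$. A witness for the lemma is precisely a word in this intersection, so it suffices to bound the length of a shortest word accepted by a DFA recognizing $L_{\text{pos}} \cap \overline{L(\Advice)}$.

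First I would construct a DFA over the alphabet $2^{\mathcal P}$ recognizing $L_{\text{pos}}$. Since the reward emitted on each step depends only on the current reward-machine state and the input label, I can simulate $\machine$ while additionally recording, in a single extra bit, whether the reward produced on the most recent transition was positive. Concretely, the states are pairs $(v, b)$ with $v$ a state of $\machine$ and $b \in \{0,1\}$; a transition on label $l$ sends $(v, b)$ to $(\delta(v,l), [\sigma(v,l) > 0])$, and the accepting states are those with $b = 1$. This automaton has at most $2 |\machine| \le 2(|\machine|+1)$ states and accepts $\lambda$ iff the final reward of $\machine(\lambda)$ is positive. Complementing $\Advice$ incurs no increase in state count, so the product automaton recognizing $L_{\text{pos}} \cap \overline{L(\Advice)}$ has at most $2(|\machine|+1) \cdot |\Advice|$ states.

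The final step invokes the standard fact that a DFA with $N$ states whose language is nonempty accepts some word of length at most $N-1$ (a shortest accepting run visits no state twice). Since noncompatibility guarantees $L_{\text{pos}} \cap \overline{L(\Advice)} \neq \emptyset$, applying this to the product automaton yields a word $l_1 \ldots l_k$ with $k \le 2(|\machine|+1) \cdot |\Advice|$ that lies in $L_{\text{pos}}$ and outside $L(\Advice)$, which is exactly the claimed witness; note that $k \geq 1$ holds automatically, since a positive final reward requires at least one transition. Alternatively, one could phrase the argument through Theorem~\ref{thm:symmetric-difference} applied to DFAs recognizing $L_{\text{pos}}$ and $L_{\text{pos}} \cap L(\Advice)$, whose languages differ exactly on the incompatibility witnesses.

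I anticipate that the main obstacle is purely the bookkeeping: setting up the sign-tracking DFA so that ``positive reward on the last transition'' becomes an acceptance condition on states (rather than a property of a transition), and confirming that the resulting state count stays within the factor $2(|\machine|+1) \cdot |\Advice|$ without further blow-up from the complementation and product. The probabilistic and MDP aspects play no role here, as this is a purely automata-theoretic statement relating $\machine$ and $\Advice$.
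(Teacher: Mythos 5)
Your proof is correct, and it reaches the same bound by the same overall strategy as the paper---build an automaton whose language is exactly the set of incompatibility witnesses and bound its shortest accepted word by its state count---but your construction is genuinely leaner. The paper proceeds in four stages: it first converts $\machine$ into a DFA over the \emph{product} alphabet $2^{\mathcal P} \times R$ via Lemma~\ref{lem:RM-to-DFA} (which introduces an extra sink state, hence the $|\machine|+1$), then adds the sign-tracking bit, then ``cylindrifies'' $\Advice$ so that it too reads pairs $(l,r)$, and only then forms the product and pumps. You instead exploit the fact that $\machine$ is deterministic, so the reward sequence is a function of the label sequence; this lets you stay over the alphabet $2^{\mathcal P}$ throughout, define $L_{\text{pos}}$ directly with a $2|\machine|$-state sign-tracking DFA (no sink state needed, since every label sequence is a valid input), and intersect with $\overline{L(\Advice)}$ without any cylindrification. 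Your route skips two of the paper's construction steps, yields the marginally tighter bound $2|\machine| \cdot |\Advice|$ (which of course implies the stated one), and replaces the explicit pumping argument with the equivalent shortest-accepting-path fact. The one hypothesis you implicitly use---that $\Advice$ is a complete DFA so complementation is free---holds by the paper's definition of a DFA, so there is no gap.
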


\begin{proof}
Let $\machine$ be a reward machine and $R \subset \mathbb R$ the \emph{finite} set of rewards that $\machine$ can output.
Moreover, let $\Advice = (V_\Advice, v_{I, \Advice}, 2^\mathcal P, \delta_\Advice, F_\Advice)$ be an LLM-generated DFA.
Our proof proceeds by constructing four DFAs and a subsequent analysis of the final DFA to derive the desired bound.

First, we construct the DFA $\Automaton_{\machine} = (V_\machine, v_{I, \machine}, 2^\mathcal P \times R, \delta_\machine, F_\machine)$ according to Lemma~\ref{lem:RM-to-DFA}.
Recall that $(l_1, r_1) \ldots (l_k, r_k) \in L(\Automaton_\machine)$ if and only if $\machine(l_1 \ldots l_k) = r_1 \dots r_k$.
Moreover, $\Automaton_\machine$ has $|\machine| + 1$ states.

Second, we modify the DFA $\Automaton_\machine$ such that it only accepts sequences $(l_1, r_1) \ldots (l_k, r_k)$ with $r_k > 0$.
To this end, we augment the state space with an additional bit $b \in \{ 0, 1 \}$, which tracks whether the most recent reward was $0$ or greater than $0$.
More formally, we define a DFA $\Automaton'_\machine = (V'_\machine, v'_{I, \machine}, 2^\mathcal P \times R, \delta'_\machine, F'_\machine)$ by
\begin{itemize}
	\item $V'_\machine = V_\machine \times \{ 0, 1 \}$;
	\item $v'_{I, \machine} = (v_{I, \machine}, 0)$;
	\item $\delta_\machine' \bigl( (v, b), (l, r) \bigr) = \bigl( \delta_\machine(v, (l, r)), b \bigr)$ where $b = 1$ if and only if $r > 0$; and
	\item $F'_\machine = F_\machine \times \{ 1 \}$.
\end{itemize}
It is not hard to verify that $\Automaton'_\machine$ indeed has the desired property.
Moreover, by Lemma~\ref{lem:RM-to-DFA}, $\Automaton'_\machine$ has $2(|\machine| + 1)$ states.

Third, we apply ``cylindrification'' to the LLM-generated DFA $\Advice$, which works over the alphabet $2^\mathcal P$, to match the input alphabet $2^\mathcal P \times R$ of $\Automaton'_\machine$.
Our goal is to construct a new DFA $\Advice' = (V'_\Advice, v'_{I, \Advice}, 2^\mathcal P \times R(X), \delta'_\Advice, F'_\Advice)$ that accepts a sequence $(l_1, r_1) \ldots (l_k, r_k)$ if and only if $l_1 \ldots l_k \in L(\Advice)$ for every reward sequence $r_1 \ldots r_k$ (cylindrification can be thought of as the inverse of the classical projection operation).
We achieve this by replacing each transition $\delta_\Advice(v, l) = v'$ in $\Advice$ with $|R(X)|$ many transitions of the form $\delta'_\Advice\bigl( v, (l, r) \bigr)$ where $r \in R$.
Formally, we define the DFA $\Advice'$ by
\begin{itemize}
	\item $V'_\Advice = V_\Advice$;
	\item $v'_{I, \Advice} = v_{I, \Advice}$;
	\item $\delta'_\Advice \bigl( v, (l, r)\bigr) = \delta_\Advice(v, l)$ for each $r \in R$; and
	\item $F'_\Advice = F_\Advice$.
\end{itemize}
It is not hard to verify that $\Advice'$ has indeed the desired property and its size is $|\Advice|$.

Fourth, we construct the simple product DFA of $\Automaton'_\machine$ and $\Advice'$.
This DFA is given by $\Automaton = (V, v_I, 2^\mathcal P \times R(X), \delta, F)$ where
\begin{itemize}
	\item $V = V'_\machine \times V'_\Advice$;
	\item $v_I = (v'_{I, \machine}, v'_{I, \Advice})$;
	\item $\delta \bigl( (v_1, v_2), (l, r) \bigr) = \bigl( \delta'_{\machine}(v_1, (l, r)), \delta'_{\Advice}(v_2, (l, r)) \bigr)$; and
	\item $F = F'_\machine \times (Q'_\Advice \setminus F'_\Advice)$.
\end{itemize}
By construction of $\Automaton'_\machine$ and $\Advice'$, is is not hard to see that $\Automaton$ accepts a sequence $(l_1, r_1) \ldots (l_k, r_k)$ if and only if $\machine(l_1 \ldots l_k) = r_1 \ldots r_k$ with $r_k > 0$ and $l_1 \ldots l_k \notin L(\Advice)$---in other words, $L(\Automaton)$ contains all sequences that witness that $\Advice$ is not compatible with $\machine$.
Moreover, $\Automaton$ has $2(|\machine| + 1) \cdot |\Advice|$ states.

It is left to show that if $\Advice$ is not compatible with $\machine$, then we can find a witness with the desired length.
To this end, it is sufficient to show that if $L(\Automaton) \neq \emptyset$, then there exists a sequence $(l_1, r_1) \ldots (l_k, r_k) \in L(\Automaton)$ with $k \leq 2(|\machine| + 1) \cdot |\Advice|$.
This fact can be established using a simple pumping argument.
To this end, assume that $(l_1, r_1) \ldots (l_k, r_k) \in L(\Automaton)$ with $k > 2(|\machine| + 1) \cdot |\Advice|$.
Then, there exists a state $v \in V$ such that the unique accepting run of $\Automaton$ on $(l_1, r_1) \ldots (l_k, r_k)$ visits $v$ twice, say at the positions $i, j \in \{ 0, \ldots k \}$ with $i < j$.
In this situation, however, the DFA $\Automaton$ also accepts the sequence $(l_1, r_1) \ldots (l_i, r_i) (l_{j+1}, r_{j+1}) \ldots (l_k, r_k)$, where we have removed the ``loop'' between the repeating visits of $v$.
Since this new sequence is shorter than the original sequence, we can repeatedly apply this argument until we arrive at a sequence $(l'_1, r'_1) \ldots (l'_\ell, r'_\ell) \in L(\Automaton)$ with $\ell \leq 2(|\machine| + 1) \cdot |\Advice|$.
By construction of $\Automaton$, this means that
$\machine(l'_1 \ldots l'_\ell) = r'_1 \ldots r'_\ell$, $r_\ell > 0$, and $l'_1 \ldots l'_\ell \notin L(\Advice)$, which proves the claim.
\end{proof}

With these intermediate results at hand, we are now ready to prove Lemma~\ref{lem:learn-correct-RM}.

\begin{proof}[Proof of Lemma~\ref{lem:learn-correct-RM}]
Let $(X_0, D_0), (X_1, D_1), \ldots$ be the sequence of samples and sets of LLM-generated DFAs that arise in the run of \algname{} whenever a new counterexample is added to $X$ (in Lines~\ref{line:counterExampleRM} and \ref{line:RMUpdateTrace} of Algorithm~\ref{alg:LARL-RM_alg}) or an LLM-generated DFA is removed from the set $D$ (Lines~\ref{line:checkUpdatedLists} and \ref{line:DFAcounter} of Algorithm~\ref{alg:LARL-RM_alg}).\
Moreover, let $\machine_0, \machine_1, \ldots$ be the corresponding sequence of reward machines that are computed from $(X_i, D_i)$.
Note that constructing a new reward machine is always possible because \algname{} makes sure that all LLM-generated DFAs in the current set $\mathscr{D}$ are compatible with the traces in the sample $X$.

We first observe three properties of these sequences:
\begin{enumerate}
	\item \label{prop:alg-correct:1}
	The true reward machine $\machine$ (i.e., the one that encodes the reward function $R$) is consistent with every sample $X_i$ that is generated during the run of \algname.
	This is due to the fact that each counterexample is obtained from an actual exploration of the MDP and, hence, corresponds to the ``ground truth''.
	\item \label{prop:alg-correct:2}
	The sequence $X_0, X_1, \ldots$ grows monotonically (i.e., $X_0 \subseteq X_1 \subseteq \cdots$) because \algname\ always adds counterexamples to $X$ and never removes them (Lines~\ref{line:counterExampleRM} and \ref{line:RMUpdateTrace}).
	In fact, whenever a counterexample $(\lambda^{\prime}, \rho)$ is added to $X_i$ to form $X_{i+1}$, then $(\lambda^{\prime}, \rho) \notin X_i$ (i.e., $X_i \subsetneq X_{i+1}$).
	To see why this is the case, remember that \algname\ always constructs hypotheses that are consistent with the current sample (and the current set of LLM-generated DFAs).
	Thus, the current reward machine $\machine_i$ is consistent with $X_i$, but the counterexample $(\lambda^{\prime}, \rho)$ was added because $\machine_i(\lambda^{\prime}) \neq \rho$.
	Thus, $(\lambda^{\prime}, \rho)$ cannot have been an element of $X_i$.
	\item \label{prop:alg-correct:3}
	The sequence $D_0, D_1, \ldots$ decreases monotonically (i.e., $D_0 \supseteq D_1 \supseteq \cdots$) because \algname\ always removes LLM-generated DFAs from $D$ and never adds any (Lines~\ref{line:checkUpdatedLists} and \ref{line:DFAcounter}).
	Thus, there exists a position $i^\star \in \mathbb N$ at which this sequence becomes stationary, implying that $D_i = D_{i+1}$ for $i \geq i^\star$.
\end{enumerate}

Similar to Property~\ref{prop:alg-correct:1}, we now show that each LLM-generated DFA in the set $D_i$, $i \geq i^\star$, is compatible with the true reward machine $\machine$.
Towards a contradiction, let $\Advice \in \mathscr{D}_i$ be an LLM-generated DFA and assume that $\Advice$ is not compatible with $\machine$.
Then, Lemma~\ref{lem:short-witness-to-non-compatibility} guarantees the existence of a label sequence $l_1 \ldots l_k$ with
\begin{align*}
	k & \leq 2(|\machine| + 1) \cdot |\Advice| \\
	& \leq 2(|\machine| + 1) \cdot n_\text{max}
\end{align*}
such that $\machine(l_1, \ldots l_k) = r_1 \ldots r_k$,$r_k > 0$, and $l_1 \ldots l_k \notin L(\Advice)$.
Since we assume all label sequences to be attainable and have chosen $\eplen \geq 2(|\machine| + 1) \cdot n_\text{max}$, Corollary~\ref{cor:full-exploration-label-sequences} guarantees that \algname\ almost surely explores this label sequence in the limit.
Once this happens, \algname\ removes $\Advice$ from the set $\mathscr{D}$ (Lines~\ref{line:checkUpdatedLists} and \ref{line:DFAcounter}), which is diction to the fact that the sequence $D_{i^\star}, D_{i^\star+1}, \ldots$ is stationary (Property~\ref{prop:alg-correct:3}).
Hence, we obtain the following:
\begin{enumerate}[resume]
	\item \label{prop:alg-correct:4}
	Every LLM-generated DFA in $D_i$, $i \geq i^\star$, is compatible with the true reward machine $\machine$.
\end{enumerate}

Next, we establish the three additional properties about the sub-sequence $\machine_{i^\star}, \machine_{i^\star + 1}$ of hypotheses starting at position $i^\star$:
\begin{enumerate}[resume]
	\item \label{prop:alg-correct:5}
	The size of the true reward machine $\machine$ is an upper bound for the size of $\machine_{i^\star}$ (i.e., $|\machine_{i^\star}| \leq |\machine|$).
	This is due to the fact that $\machine$ is consistent with every sample $X_i$ (Property~\ref{prop:alg-correct:1}), every LLM-generated DFA in $\mathscr{D}_i$, $i \geq i^\star$, is compatible with $\machine$ (Property~\ref{prop:alg-correct:4}), and \algname{} always computes minimal consistent reward machines.
	\item \label{prop:alg-correct:6}
	We have $|\machine_i| \leq |\machine_{i+1}|$ for all $i \geq i^\star$.
	Towards a contradiction, assume that $|\machine_i| > |\machine_{i+1}|$.
	Since \algname\ always computes consistent reward machines and $X_i \subsetneq X_{i+1}$ if $i \geq i^\star$ (see Property~\ref{prop:alg-correct:2}), we know that $\machine_{i + 1}$ is not only consistent with $X_{i+1}$ but also with $X_i$ (by definition of consistency).
	Moreover, \algname\ computes minimal consistent reward machines.
	Hence, since $\machine_{i + 1}$ is consistent with $X_i$ and $|\machine_{i + 1}| < |\machine_i|$, the reward machine $\machine_i$ is not minimal, which is a contradiction.
	\item \label{prop:alg-correct:7}
	We have $\machine_i \neq \machine_j$ for $i \geq i^\star$ and $j \in \{ i^\star, \ldots, i \}$---in other words, the reward machines generated during the run of \algname\ after the $i^\star$-th recomputation are semantically distinct.
	This is a consequence of the facts that $(\lambda^{\prime}_j, \rho_j)$ was a counterexample to $\machine_j$ (i.e., $\machine_j(\lambda^{\prime}_j) \neq \rho_j)$ and that \algname always constructs consistent reward machines (which implies $\machine_i(\lambda^{\prime}_i) = \rho_i$).
\end{enumerate}

Properties~\ref{prop:alg-correct:5} and \ref{prop:alg-correct:6} now provide $|\machine|$ as an upper bound on the size of any reward machine that \algname\ constructs after the $i^\star$-th recomputation.
Since there are only finitely many reward machines of size at most $|\machine|$, Property~\ref{prop:alg-correct:7} implies that there exists a $j^\star \geq i^\star$ after which no new reward machine is learned.
Hence, it is left to show that $\machine_{j^\star} = \machine$ (i.e., $\machine_{j^\star}(\lambda) = \machine(\lambda)$ for all label sequences $\lambda$).

Towards a contradiction, let us assume that $\machine_{j^\star} \neq \machine$.
Then, Lemma~\ref{lem:short-witness-to-non-consistency} guarantees the existence of a label sequence $\lambda = l_1 \ldots l_k$ with
\begin{align*}
	k & \leq |\machine_{j^\star}| + |\machine| + 1 \\
	& \leq 2|\machine| + 1 \\
	& \leq 2( |\machine| + 1) \cdot n_\text{max}
\end{align*}
such that $\machine_{j^\star}(\lambda) \neq \machine(\lambda)$.

Since we assume all label sequences to be attainable and have chosen $\eplen \geq 2(|\machine| + 1) \cdot n_\text{max}$, Corollary~\ref{cor:full-exploration-label-sequences} guarantees that \algname\ almost surely explores this label sequence in the limit.
Thus, the trace $(\lambda, \rho)$, where $\rho = \machine(\lambda)$, is almost surely returned as a new counterexample, resulting in a new sample $X_{j^\star + 1}$.
This, in turn, causes the construction of a new reward machine, which contradicts the assumption that no further reward machine is generated.
Thus, the reward machine $\machine_{j^\star}$ is equivalent to the true reward machine $\machine$.
\end{proof}

Let us finally turn to the proof of Theorem~\ref{th:optimal_policy} (i.e., that \algname\ almost surely converges to the optimal policy in the limit).
The key idea is to construct the product of the given MDP $\mathcal M$ and the true reward machine $\machine$.
In this new MDP $\mathcal M'$, the reward function is in fact Markovian since the reward machine $\machine$ encodes all necessary information in its states (and, hence, in the product $\mathcal M'$).
Thus, the fact that classical Q-learning almost surely converges to an optimal policy in the limit also guarantees the same for \algname.
We refer the reader to the extended version of \cite{DBLP:conf/aips/0005GAMNT020} for detailed proof.

\section{Unattainable Label Sequences}
\label{sec:unattainablesequences}
Let us now consider the case that not all labels can be explored by the agent (e.g., because certain label sequences are not possible in the MDP).
In this situation, two complications arise:
\begin{enumerate}
	\item
	It is no longer possible to uniquely learn the true reward machine $\machine$.
	In fact, any reward machine that agrees with $\machine$ on the attainable traces becomes a valid solution.
	\item
	The notion of compatibility needs to reflect the attainable traces.
	More precisely, the condition $r_k > 0$ implies $l_1 \ldots l_k \in L(\Advice)$ now has to hold only for attainable label sequences.	
\end{enumerate}
Both complications add a further layer of complexity, which makes the overall learning problem---both Q-learning and the learning of reward machines---harder.
In particular, we have to adapt the episode length and also the size of the SAT encoding grows.
In total, we obtain the following result.

\begin{theorem} \label{th:optimal_policy-unattainable}
Let $\mdp$ be a labeled MDP where all label sequences are attainable and $\machine$ the true reward machine encoding the rewards of $\mdp$.
Moreover, let $\mathscr{D} = \{ \Advice_1, \ldots, \Advice_\ell \}$ be a set of LLM-generated DFAs and $n_\text{max} = \max_{1 \leq i \leq \ell}{\{ |\Advice_i| \}}$.
Then, \algname\ with
\[ \eplen \geq \max{ \bigl\{ 2 |\mdp| \cdot ( |\machine| + 1) \cdot n_\text{max}, |\mdp|(|\machine| + 1)^2 \bigr\}} \]
almost surely converges to an optimal policy in the limit.
The size of the formula $\Phi_n^{X, D}$ grows linearly in $|\mdp|$.
\end{theorem}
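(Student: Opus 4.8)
The plan is to reuse the two-stage structure of the attainable case. Stage one adapts Lemma~\ref{lem:learn-correct-RM} to show that \algname\ almost surely learns a reward machine that \emph{agrees with $\machine$ on every attainable label sequence} (the best one can hope for once some sequences are unreachable), and stage two lifts this to policy optimality through the product-MDP construction used for Theorem~\ref{th:optimal_policy}. The reason the MDP size $|\mdp|$ now enters every bound is that the short-witness Lemmas~\ref{lem:short-witness-to-non-consistency} and~\ref{lem:short-witness-to-non-compatibility} may return \emph{unattainable} witnesses, and an unattainable witness is worthless here because Corollary~\ref{cor:full-exploration-label-sequences} only guarantees exploration of attainable label sequences.

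First I would encode the reachability structure of the MDP as a DFA $\Automaton_\mdp$ over $2^\mathcal P$ whose states are those of $\mdp$ and whose language is exactly the set of attainable label sequences; this automaton has $|\mdp|$ states. Re-proving the short-witness lemmas then reduces to intersecting the DFAs from the original constructions with $\Automaton_\mdp$ and re-running the pumping argument on the product. For non-consistency I would take the product of $\Automaton_{\machine_1}$, $\Automaton_{\machine_2}$ (each with at most $|\machine|+1$ states by Lemma~\ref{lem:RM-to-DFA}) and $\Automaton_\mdp$, obtaining at most $|\mdp|(|\machine|+1)^2$ states and hence an attainable distinguishing witness no longer than that; for non-compatibility I would take the product of the reward-tracking DFA $\Automaton'_\machine$ (with $2(|\machine|+1)$ states), the cylindrified $\Advice'$ (with $|\Advice|\le n_\text{max}$ states) and $\Automaton_\mdp$, obtaining at most $2|\mdp|(|\machine|+1)\cdot n_\text{max}$ states. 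The maximum of these two lengths is exactly the episode length stated in the theorem.

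With attainable witnesses of these lengths available, the bookkeeping of Lemma~\ref{lem:learn-correct-RM}---monotone growth of $X$, stabilization of $\mathscr D$ once the budget $\mathcal J$ is spent, and termination of the finite, monotone, semantically distinct sequence of minimal hypotheses---goes through verbatim, with ``equivalent'' replaced throughout by ``agreeing on all attainable label sequences.'' For the policy I would then form the product of $\mdp$ with the learned reward machine, in which the reward is Markovian, and invoke almost-sure convergence of Q-learning exactly as in Theorem~\ref{th:optimal_policy}; since the agent only ever traverses attainable trajectories, agreement on attainable sequences suffices for the induced policy to be optimal. The linear growth of $\Phi_n^{X,D}$ in $|\mdp|$ follows because restricting compatibility to attainable sequences forces the synchronization variables $y$ to track triples of reward-machine, DFA, and MDP states rather than pairs, multiplying their count---and only their count---by $|\mdp|$.

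The hard part will be the re-proof of the short-witness lemmas: one must check that intersecting with $\Automaton_\mdp$ preserves precisely the witnessing property (attainable, positive final reward under $\machine$, and rejected by $\Advice$; respectively, attainable and distinguishing two machines), and that the pumping step does not destroy attainability. This last point holds because deleting a loop from an accepting run of the product projects to a shorter accepting run of $\Automaton_\mdp$, so the shortened witness remains attainable---but it is the step that demands the most care.
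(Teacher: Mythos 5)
Your proposal follows essentially the same route as the paper: intersect the witness automata from Lemmas~\ref{lem:short-witness-to-non-consistency} and~\ref{lem:short-witness-to-non-compatibility} with an automaton $\Automaton_\mdp$ recognizing the attainable label sequences, re-run the pumping argument on the products to obtain \emph{attainable} witnesses of lengths $|\mdp|(|\machine|+1)^2$ and $2|\mdp|(|\machine|+1)\cdot n_\text{max}$, rerun the bookkeeping of Lemma~\ref{lem:learn-correct-RM} with ``equivalent'' weakened to ``agreeing on attainable sequences,'' and attribute the linear growth of $\Phi_n^{X,D}$ to synchronization variables that now track triples including an MDP state. The one small correction is that $\Automaton_\mdp$ must be taken as a \emph{nondeterministic} automaton (as the paper does, since a DFA for the attainable-label-sequence language can be exponentially larger than $|\mdp|$); this does not affect your bounds because the product constructions and the loop-removal pumping argument go through unchanged for NFAs.
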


In order to prove Theorem~\ref{th:optimal_policy-unattainable}, specifically the bound on the length of episodes, we first need to introduce the nondeterministic version of DFAs.
Formally, a \emph{nondeterministic finite automaton} is a tuple $\mathfrak A = (\fsm{Q}, \fsm{q_I}, \Sigma, \Delta, F)$ where $\fsm{Q}$, $\fsm{q_I}$, $\Sigma$, $\fsm{F}$ are as in DFAs and $\Delta \subset \fsm{Q} \times \Sigma \times \fsm{Q}$ is the transition relation.
Similar to DFAs, a run of an NFA $\mathfrak A$ on a word $u = a_1 \ldots a_k$ is a sequence $\mealyStateStyle{q}_0, \ldots \mealyStateStyle{q}_k$ such that $\fsm{q_o} = \fsm{q_I}$ and $(\mealyStateStyle{q}_{i-1}, a_i, \mealyStateStyle{q}_i) \in \Delta$ for each $i \in \{ 1, \ldots, k\}$.
In contrast to DFAs, however, NFAs permit multiple runs on the same input or even no run.
Accepting runs and the language of NFAs are defined analogously to DFAs.
Note that we use NFAs instead of DFAs because the former can be exponentially more succinct than the latter.

Similar to Lemma~\ref{lem:RM-to-DFA}, one can translate an MDP $\mdp$ into an NFA $\Automaton_\mdp$ with $|\mdp|$ states that accepts exactly the attainable traces of an MDP $\mathcal M$.

\begin{lemma} \label{lem:MDP-to-NFA}
Given a labeled MDP $\mathcal M$, one can construct an NFA $\Automaton_\mathcal M$ with at most $2|\mathcal M|$ states that accept exactly the admissible label sequences of $\mathcal M$.
\end{lemma}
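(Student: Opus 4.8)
The plan is to obtain $\Automaton_{\mathcal M}$ by ``forgetting'' the actions and rewards of $\mathcal M$ and retaining only the information relevant to label sequences. Concretely, I would set $\Automaton_{\mathcal M} = (\fsm{Q}, \fsm{q_I}, 2^{\mathcal P}, \Delta, F)$ with $\fsm{Q} = S$, $\fsm{q_I} = s_I$, $F = S$ (every state accepting), and transition relation
\[
\Delta = \bigl\{ (s, \ell, s') \in S \times 2^{\mathcal P} \times S \;\bigm|\; \exists a \in A : p(s, a, s') > 0 \text{ and } L(s, a, s') = \ell \bigr\}.
\]
The nondeterminism of the NFA is precisely what lets a single automaton capture both the agent's freedom to choose an action and the environment's stochastic choice of successor: whenever some action makes a transition from $s$ to $s'$ possible and emits label $\ell$, the NFA is permitted to read $\ell$ and move from $s$ to $s'$. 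Since the labeling function lives on transitions rather than states, the label is attached to the edge, and several actions joining the same pair $s,s'$ simply contribute several (possibly differently labeled) edges.

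Second, I would establish correctness by induction on the length $k$ of a label sequence, proving that $\Automaton_{\mathcal M}$ admits a run $\fsm{q_I} = q_0, q_1, \ldots, q_k$ on $\ell_1 \ldots \ell_k$ if and only if there is an attainable trajectory $s_0 a_1 s_1 \ldots a_k s_k$ with $s_0 = s_I$, $s_i = q_i$, and $L(s_{i-1}, a_i, s_i) = \ell_i$ for all $i$. The base case $k=0$ is immediate: the empty sequence is attainable (via the trivial trajectory $s_I$) and is accepted because $\fsm{q_I} \in F$. For the induction step, an accepting run's last transition $(q_{k-1}, \ell_k, q_k) \in \Delta$ yields, directly from the definition of $\Delta$, an action $a_k$ with $p(q_{k-1}, a_k, q_k) > 0$ and $L(q_{k-1}, a_k, q_k) = \ell_k$, extending the trajectory supplied by the induction hypothesis; the converse direction simply reverses this reading. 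Because $F = S$, every reachable state is accepting, matching the fact that every prefix of an attainable label sequence is itself attainable. Hence $\Language(\Automaton_{\mathcal M})$ is exactly the set of attainable label sequences of $\mathcal M$.

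Finally, for the size bound this construction uses $|S| = |\mathcal M|$ states, which is trivially at most $2|\mathcal M|$, so the stated bound holds with room to spare. I expect the only delicate point to be bookkeeping rather than depth: one must keep straight the off-by-one in how labels index transitions and confirm that the trivial (empty) trajectory is handled by making all states accepting. The factor of two is not required for label sequences alone; it provides headroom for the later augmentation in which the automaton additionally records whether the most recent reward was positive---exactly the $2(|\machine|+1)$ doubling used in Lemma~\ref{lem:short-witness-to-non-compatibility}---so that a product with $\Automaton_{\mathcal M}$ over the combined alphabet $2^{\mathcal P} \times R$ can later be formed uniformly. I would close by noting that keeping $\Automaton_{\mathcal M}$ nondeterministic, rather than determinizing it, is what ultimately keeps the episode-length bound in Theorem~\ref{th:optimal_policy-unattainable} linear in $|\mathcal M|$, since NFAs can be exponentially more succinct than DFAs.
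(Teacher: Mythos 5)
Your construction is exactly the paper's: take $\fsm{Q} = S$, $\fsm{q_I} = s_I$, $\fsm{F} = S$, and put $(s,\ell,s') \in \Delta$ precisely when some action $a$ has $p(s,a,s') > 0$ and $L(s,a,s') = \ell$, followed by the same straightforward induction on the length of the label sequence. Your additional remarks on the empty-sequence base case and on why the stated bound of $2|\mathcal M|$ leaves slack (the construction needs only $|S|$ states) are accurate and go slightly beyond the paper's one-line justification, but the approach is identical.
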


This construction is similar to Remark~1 of the extended version of \cite{DBLP:conf/aips/0005GAMNT020} as proceeds as follows.

\begin{proof}
Given a labeled MDP $\mdp = (\mdpStates, \mdpInit, \mdpActions, \mdpProb, \mdpRewardFunction, \mdpDiscount, \rmLabels, \rmLabelingFunction)$, we construct an NFA $\Automaton_\mathcal M = (\fsm{Q}, \fsm{q_I}, 2^\mathcal P, \Delta, F)$ by
\begin{itemize}
	\item $\fsm{Q} = \mdpStates$;
	\item $\fsm{q_I} = \mdpInit$;
	\item $(s, \ell, s') \in \Delta$ if and only if there exists an action $a \in \mdpActions$ with $\rmLabelingFunction(s, a, s') = \ell$ and $\mdpProb(s, a, s) > 0$; and 
	\item $\fsm{F} = \mdpStates$.
\end{itemize}
A straightforward induction shows that $\lambda \in L(\Automaton_\mathcal M)$ holds if and only if $\lambda$ is an attainable label of $\mathcal M$.
\end{proof}

We are now ready to prove Theorem~\ref{th:optimal_policy-unattainable}.

\begin{proof}[Proof of Theorem~\ref{th:optimal_policy-unattainable}]
We first modify Lemma~\ref{lem:short-witness-to-non-compatibility} slightly.
More precisely, we add another fifth step, where we build the product of $\Automaton$ with the cylindrification of the DFA $\Automaton_\mdp$.
It is not hard to verify that this results in an NFA that accepts a trace $(l_1 \ldots l_k, r_1 \ldots r_k)$ if and only if $l_1 \ldots l_k$ is admissible and not in the language $L(\Advice)$ , $\machine(l_1  \ldots l_k) = r_1 \ldots r_k$, and $r_k > 0$.
This NFA has $2|\mdp|(|\machine| + 1) \cdot |\Advice|$ states.
A similar pumping argument as in the proof of Lemma~\ref{lem:short-witness-to-non-compatibility} can now be used to show the existence of a witness of length at most $2|\mdp|(|\machine| + 1) \cdot |\Advice|$.

Analogously, we can modify Lemma~\ref{lem:short-witness-to-non-consistency}.
We build the input-synchronized product of the two DFAs $\Automaton_{\machine_1}$ and $\Automaton_{\machine_2}$ and the DFA $\Automaton_\mdp$.
This results in an NFA with $(|\machine_1| + 1) \cdot (|\machine_2| + 1) \cdot |\mdp|$ states.
If $\machine_1 \neq \machine_2$ on an attainable label sequence, then we can find a sequence such that leads in this product to a state where $\Automaton_\mdp$ accepts (the label sequence is attainable), but exactly one of $\machine_1$ and $\machine_2$ accepts.
Using a pumping argument as above, we can bound the length of such an input to at most $(|\machine_1| + 1) \cdot (|\machine_2| + 1) \cdot |\mdp|$.

Moreover, we have to modify Formula $\Phi_n^{X, D}$ to account for the new situation.
Given $\Automaton_\mdp = (\fsm{Q_{A_\mdp}}, \fsm{q_{I, A_\mdp}}, 2^\mathcal P \times \mathbb R, \Delta_{\fsm{A_\mdp}}, F_{\fsm{A_\mdp}})$, we first introduce new auxiliary variables $z_{\mealyStateStyle{p}, \fsm{p'}, \fsm{p''}}^\Advice$ where $\fsm{p''} \in \fsm{Q_{\machine_\mdp}}$;
we use these variables instead of the variables $y_\mealyStateStyle{p}, \fsm{p'}^\Advice$.
Second, we replace Formulas~\eqref{for:compatible-1}, \eqref{for:compatible-2}, and \eqref{for:compatible-3} with the following three formulas:
\begin{align}
&	z_{\fsm{q_I}, \fsm{q'_{I, \Advice}, \fsm{q_{I, A_\mdp}}}}^\Advice \\
&	\bigwedge_{\fsm{p, q} \in \fsm{Q}} \bigwedge_{l \in 2^\mathcal P} \bigwedge_{\delta_\Advice(\fsm{p'}, l) = \fsm{q'}} \bigwedge_{(\fsm{p''}, l, \fsm{q''}) \in \Delta_\fsm{A_\mdp}} (z_{\mealyStateStyle{p}, \fsm{p'}, \fsm{p''}}^\Advice \land d_{\mealyStateStyle{p}, l, \mealyStateStyle{q}}) \rightarrow z_{\mealyStateStyle{q}, \fsm{q'}, \fsm{q''}}^\Advice \\
&	\bigwedge_{\mealyStateStyle{p} \in \fsm{Q}} \bigwedge_{\substack{\delta_\Advice(\fsm{p'}, l) = \fsm{q'} \\
 \fsm{q'} \notin \fsm{F}_\Advice}} \bigwedge_{\substack{(\fsm{p''}, l, \fsm{q''}) \in \Delta_\fsm{A_\mdp} \\
 \fsm{q''} \in F_{\fsm{A_\mdp}}}} z_{\mealyStateStyle{p}, \fsm{p'}, \fsm{p''}}^\Advice \rightarrow \lnot \bigvee_{\substack{r \in R_X 
 r > 0}}o_{\mealyStateStyle{p}, l, r}
\end{align}
Clearly, the size of $\Phi_n^{X, D}$ grows linearly in $|\mdp|$ as compared to the case that all label sequences are attainable.
Moreover, it is not hard to verify that $\Phi_n^{X, D}$ has indeed the desired meaning.
More precisely, we obtain a result analogous to Lemma~\ref{lem:learn-correct-RM}, but for the case that not all label sequences are attainable.

Once we have established that our learning algorithm learns consistent and compatible reward machines, the overall convergence of \algname\ to an optimal policy and the bound on the length of episodes can then be proven analogously to Theorem~\ref{th:optimal_policy}.
\end{proof}

\end{document}